\documentclass{article}

\usepackage[preprint]{neurips_2026}

\usepackage[utf8]{inputenc} 
\usepackage[T1]{fontenc}    
\usepackage{hyperref}       
\usepackage{url}            
\usepackage{booktabs}       
\usepackage{amsfonts}       
\usepackage{microtype}      
\usepackage[table,xcdraw]{xcolor}
\usepackage{caption} 
\usepackage{graphicx} 
\usepackage{algorithm}
\usepackage{algpseudocode}
\usepackage{amsmath, amssymb, leftindex, nicefrac}
\usepackage{amsthm, mathrsfs}
\usepackage{tocloft,titletoc} 
\usepackage{subcaption}
\usepackage{todonotes}
\usepackage{changes}
\usepackage{array}
\usepackage{longtable}  
\usepackage{wrapfig}
\usepackage{pifont}
\usepackage{forest}
\usepackage{tikz}
\usepackage[most]{tcolorbox}
\usetikzlibrary{arrows.meta}

\newtheorem{definition}{Definition}[section]

\newtheorem{remark}[definition]{Remark}

\newtheorem{lemma}[definition]{Lemma}
\newtheorem{theorem}[definition]{Theorem}
\newtheorem{proposition}[definition]{Proposition}

\newcommand{\proofstep}[1]{\medskip\noindent\textbf{#1.}}

\title{Escaping Local Minima Provably\\in Non-convex Matrix Sensing:\\A Deterministic Framework via Simulated Lifting}

\author{%
Tianqi Shen\textsuperscript{1} \quad
Jinji Yang\textsuperscript{1} \quad
Junze He\textsuperscript{1} \quad
Kunhan Gao\textsuperscript{2} \quad
Zeyu Zheng\textsuperscript{3} \quad
Ziye Ma\textsuperscript{1}
\\
\textsuperscript{1}Department of Computer Science, CityUHK
\\
\textsuperscript{2}Department of Computer Science, Yale University
\\
\textsuperscript{3}Department of Industrial Engineering and Operations Research, UC Berkeley
\\
\texttt{tianqshen5-c@my.cityu.edu.hk} \quad
\texttt{jinji.yang@cityu.edu.hk} \\
\texttt{junzehe2-c@my.cityu.edu.hk} \quad
\texttt{kunhan.gao@yale.edu} \\
\texttt{zyzheng@berkeley.edu} \quad
\texttt{ziyema@cityu.edu.hk}
}

\begin{document}

\maketitle

\begin{abstract}
Non-convex optimization is a fundamental challenge in operations research and modern machine learning, largely because unfavorable optimization landscapes make it difficult to find global solutions. Existing approaches can be broadly categorized into two classes: heuristic methods, which do not alter the landscape, and landscape-smoothing methods, such as convex relaxation and, most notably, over-parameterization. Although smoothing methods can be highly effective, they are often computationally expensive. This motivates our approach: simulating over-parameterization to exploit information from the corresponding high-dimensional landscape without explicitly incurring its full computational cost.
In this paper, we study low-rank matrix sensing as a representative non-convex case study whose landscape can hinder convergence to a global optimum. Recent work has shown that over-parameterization via tensor lifting can transform spurious local minima into strict saddle points, partially explaining why massive scaling can improve generalization and performance in deep learning.
Motivated by this observation, we propose a \textbf{S}imulated \textbf{O}racle \textbf{D}irection (SOD) escape mechanism that \emph{simulates} the landscape geometry and escape directions of the tensor space without explicitly lifting the problem. Specifically, we develop a mathematical framework for projecting tensor-space escape directions back onto the original matrix space, thereby guaranteeing a strict decrease in the objective value at spurious local minima. To the best of our knowledge, this is the first deterministic framework that provably escapes spurious local minima without relying on explicit smoothing or heuristic mechanisms such as randomness or momentum.
Numerical experiments demonstrate that the proposed framework reliably escapes local minima and facilitates convergence to global optima, while incurring minimal computational cost compared with explicit tensor over-parameterization. More broadly, this framework has implications for non-convex optimization beyond matrix sensing by demonstrating how simulated over-parameterization can be used to tame challenging optimization landscapes.
The project website is publicly accessible at \url{https://sod-escape.github.io/}.
\end{abstract}

\section{Introduction} 
\label{sec1}

Non-convex optimization plays a crucial role in operations research and many other fields, such as modern machine learning~\citep{danilova2022recent}. However, these problems are often NP-hard and characterized by loss landscapes containing spurious local minima, saddle points, and extended flat regions~\citep{jain2017non}. Such geometric irregularities make it challenging for algorithms originally developed for convex settings to identify global minimizers efficiently.
Formally, given an objective function $f: \operatorname{dom} f \to \mathbb{R}$ with effective domain $\operatorname{dom} f$, a point $x^\star$ is called a \textit{global minimizer} and a point $\hat{x}$ is called a \textit{local minimizer} if
\begin{align*}
	f(x^\star) &\le f(x),
	\quad \forall\, x \in \operatorname{dom} f,
	\tag{global minimizer}
	\\
	f(\hat{x}) &\le f(x),
	\quad \forall\, x \in \mathbb{B}(\hat{x}, \varepsilon)\cap \operatorname{dom} f,
	\quad \exists\, \varepsilon > 0,
	\tag{local minimizer}
\end{align*}
where $\mathbb{B}(\hat{x}, \varepsilon)$ denotes the Euclidean ball of radius $\varepsilon$ centered at $\hat{x}$. We refer to a local minimizer that is not globally optimal as 
a \textit{spurious solution}. Therefore, escaping spurious solutions $\hat{x}$ and finding global minimizers $x^\star$ in non-convex landscapes remains a longstanding and important research challenge~\citep{danilova2022recent,chen2026nexus}.
Existing approaches 
can be broadly classified into two categories according to whether they alter the optimization landscape.

The first category consists of heuristic approaches, which leave the original landscape unchanged and instead modify the optimization trajectory through mechanisms such as randomness~\citep{lin2022gradient,de2015global} like perturbed gradient descent~\citep{jin2017escape}, momentum~\citep{shi2026momentum} like Adam~\citep{kingma2014adam}, or norm constraints~\citep{yuan2015recent}. 
However, when the landscape induced by the original parameterization is highly irregular or insufficiently expressive, even sophisticated handcrafted or adaptive update rules may provide only local remedies~\citep{zhang2026data}. This observation helps explain why, in deep learning, small or shallow models trained with Adam can sometimes underperform larger or deeper models trained with stochastic gradient descent (SGD)~\citep{wilson2017marginal,nguyen2017loss}.

\begin{figure}[h]
	\centering
	\footnotesize
	\makebox[\linewidth][c]{%
	\begin{forest}
		for tree={
			grow'=east,
			parent anchor=east,
			child anchor=west,
			anchor=west,
			align=left,
			rounded corners,
			draw,
			edge={-Latex},
			l sep=4mm,
			s sep=1mm,
			inner xsep=2pt,
			inner ysep=1pt,
		},
		tikz+={\draw[-Latex] (op.south) |- (sim.west);}
		[Existing approaches
			[Heuristics
				[Randomness
					[Perturbed GD]
				]
				[Momentum
					[Adam]
				]
				[\dots]
			]
			[Over-parameterization, draw=red, name=op
				[Convex relaxation: {\scriptsize the most theoretically grounded}]
				[Model scaling: {\scriptsize the most significant branch in deep learning}, draw=blue]
				[Tensor lifting: {\scriptsize a quantitative perspective on scaling laws}]
				[Simulated over-parameterization, draw=red, name=sim, edge path={}, before computing xy={x+=0mm,y-=8mm}
					[\textbf{SOD-Escape (ours)}: \\{\scriptsize the first simulated over-parameterization}, draw=red]
				]
			]
		]
	\end{forest}%
	}
	\caption{A taxonomy of approaches to non-convex optimization by whether they alter the optimization landscape. Our method belongs to landscape smoothing via (simulated) over-parameterization.}
	\label{figure:taxonomy}
\end{figure}

We therefore favor the second category of approaches, which aim to smooth the landscape by fundamentally eliminating or mitigating spurious local minima, rather than merely attempting to bypass them within the original difficult geometry. This category can be further divided into two strategies. The first is convex relaxation~\citep{nesterov2000semidefinite,kojima2000discretization}, which is arguably the most theoretically grounded and ideal form of landscape smoothing. However, convex relaxations are often impractical for large-scale operational decision systems~\citep{yalcin2022semidefinite}. Consequently, we focus on the second strategy: over-parameterization~\citep{du2018power,du2019gradient,allen2019convergence,yousefzadeh2022deep}. 
In the era of large language models, the empirically observed scaling laws~\citep{kaplan2020scaling,zhang2026configuration} (i.e. performance tends to improve predictably as model size, data, and compute increase) provide compelling evidence that increasing model size can systematically improve the achievable loss in modern learning systems~\citep{gu2020characterize}. Then tensor lifting~\citep{ma2024algorithmic}, viewed as a new form of over-parameterization, can be interpreted as a depth extension with linear activations that modifies the optimization landscape and offers a quantitative perspective on scaling laws. By optimizing over a higher-dimensional parameter space, tensor lifting can smooth the effective landscape and reveal escape directions that are invisible in the original low-dimensional domain~\citep{xu2018benefits,sharifnassab2020bounds,ma2023over}.

However, directly optimizing in a lifted space can still be computationally expensive and memory intensive, making it unsuitable for real-time operational decision systems~\citep{liu2021we}. Therefore, we seek to simulate the smoothing effects of over-parameterization while remaining in the original parameter space. Our key idea is to use the hidden directions in tensor space as oracle information to construct a deterministic escape mechanism, termed \textit{Simulated Oracle Direction} (SOD) Escape. Unlike direct over-parameterization, SOD Escape operates entirely within the original domain while mimicking the geometric benefits of lifted optimization. This design enables deterministic escape from spurious solutions without incurring the computational or memory overhead associated with optimization in an explicitly over-parameterized space.

A central challenge arises: oracle directions that form superpositions of multiple states in the over-parameterized space, do not typically correspond to single axes in the low-dimensional space~\citep{ma2024algorithmic}. The primary theoretical task of this paper is therefore to characterize the conditions under which such superpositions can be meaningfully projected back to the original domain, thereby providing actionable guidance for escaping non-global spurious solutions.
To concretize this intuition, we study the \textit{matrix sensing} (MS) problem~\citep{davenport2012introduction} as a testbed. MS provides a common mathematical abstraction for large-scale structured estimation problems that must be solved before reliable downstream decisions can be made in many data-driven operational decision systems~\citep{xu2024implicit}. 
Representative examples include covariance estimation~\citep{yue2026geometric}, power-system state estimation~\citep{jin2019towards}, phase retrieval~\citep{candes2015phase,mcrae2026phase}, collaborative filtering~\citep{borgs2022iterative}, and quantum tomography~\citep{liu2011universal}, where the underlying operators estimate latent states from nonlinear and incomplete measurements by exploiting low-dimensional structure.
The insights developed in this work may extend beyond MS and provide general guidance for designing deterministic escape strategies for broader classes of non-convex optimization problems in operations research~\citep{ma2025can} and modern machine learning~\citep{wei2025benefits}.

The remainder of the paper is organized as follows. Section~\ref{sec1} finalizes the MS setup and preliminaries; Section~\ref{sec2} reviews non-convex optimization background and prior MS results; Section~\ref{sec3} states our main theorems and interpretations. We develop a deterministic escape mechanism and guarantees in two stages: Section~\ref{sec4} covers restricted regimes where a one-step escape in the overparameterized space followed by projection yields a closed-form escape point $\check{X}$; Section~\ref{sec5} treats general regimes where direct projection may fail, proposing truncated projected gradient descent (TPGD) to enable provably valid projection and showing that the resulting escape point $\check{X}$ still has a closed-form representation, effectively simulating the TPGD process without explicit lifting. Finally, Section~\ref{sec6} presents two case studies and a success-rate comparison study to demonstrate feasibility, together with an illustrative neural-network experiment showing that the proposed approach can extend beyond the MS setting. Section~\ref{sec7} concludes the paper.

\subsection{Problem Setup and Preliminaries}
\label{subsec:problem-setup-and-short-preliminaries}

\paragraph{Problem Setup.}
Low-rank matrix sensing aims to recover a low-rank positive semidefinite (PSD) matrix $M^\star = ZZ^\top \in \mathbb{R}^{n \times n}$ with rank $r := \operatorname{rank}(M^\star) < n$ from a set of linear measurements $\mathcal{A}$:
\begin{equation*}
    \begin{aligned}
        \min_{M \in \mathbb{R}^{n \times n}}& 
        f(M) := \frac{1}{2} \|\mathcal{A}(M) - b\|_2^2,
    	\\
    	\text{s.t. }&
    	\operatorname{rank}(M) \leq r .
    \end{aligned}
\end{equation*}
Here, $\mathcal{A}:\mathbb{R}^{n \times n}\to\mathbb{R}^m$ is a known sensing operator defined by
\begin{equation*}
	\mathcal{A}(M) = [\langle A_1, M \rangle, 
	\langle A_2, M \rangle, \ldots, 
	\langle A_m, M \rangle]^\top,
\end{equation*}
where $A_i\in\mathbb{R}^{n \times n}$ are symmetric sensing matrices and $b=\mathcal{A}(M^\star)$.

Because the rank constraint is non-convex, one common approach is convex relaxation, which replaces the rank constraint with a nuclear-norm constraint, the convex envelope of the rank function, thereby reformulating the problem as a semidefinite program (SDP)~\citep{yalcin2022semidefinite}. However, as discussed in the Introduction, this approach can be computationally expensive for large-scale operations research problems. Therefore, this paper considers the \textit{Burer-Monteiro} (BM) factorized formulation, which replaces the rank-constrained problem with an optimization problem over the product of two lower-dimensional matrices. Specifically, we optimize over $X \in \mathbb{R}^{n \times r_{\text{search}}}$ such that $XX^\top \approx ZZ^\top$. When $r_{\text{search}}>r$, this formulation corresponds to matrix over-parameterization through rank expansion. However, as noted in the Introduction, this paper does not rely on rank over-parameterization, because the true rank $r$ is often difficult to determine and thus search rank $r_{\text{search}}$ may be too large in practical operations research problems.

The BM-factorized MS problem considered in this study is formulated as
\begin{equation}
	\min_{X \in \mathbb{R}^{n \times r_{\text{search}}}} h(X) := f(XX^\top)
	= \frac{1}{2} \|\mathcal{A}(XX^\top) - b\|_2^2.
	\label{equation:matrix_objfunc}
	\tag{P1}
\end{equation}
This formulation still induces a non-convex landscape and thus is NP-hard~\citep{tillmann2013computational}. Moreover, it can represent general polynomial optimization problems~\citep{molybog2020conic} and is equivalent to training two-layer quadratic neural networks~\citep{li2018algorithmic}.
A basic example 
from \citep{zhang2018much}, which illustrates the structure of the objective, is given below:

\paragraph{Basic Example.} Recover $M^\star = zz^\top$ with $z = [1,0]^\top$ under the following setting:
\begin{equation}  
    \begin{aligned}
        A_1 = \begin{bmatrix}
            1 & 0 \\
            0 & \nicefrac{1}{2}
        \end{bmatrix},\
        A_2 = \begin{bmatrix}
            0 & \nicefrac{\sqrt{3}}{2} \\
            \nicefrac{\sqrt{3}}{2} & 0
        \end{bmatrix},\
        A_3 = \begin{bmatrix}
            0 & 0 \\
            0 & \nicefrac{\sqrt{3}}{2}
        \end{bmatrix}.
    \end{aligned}
    \label{equation:basic_example}
\end{equation}

One can verify that this non-convex landscape has a local minimum at $\hat{x} = (0, \pm 1/\sqrt{2})$, and the number of spurious solutions increases as loss landscape becomes more expressive. Consequently, BM-factorized MS problems provide a rich and analyzable \textit{testbed} for evaluating our proposed escape mechanism.
Figure~\ref{figure:basic_example} illustrates the example in Equation~\eqref{equation:basic_example}. 
The figure demonstrates how the optimization trajectories of SGD and our proposed escape method will traverse the landscape. Starting from random initializations, traditional SGD algorithm can become trapped inside attraction basins of spurious local minima, whereas our SOD step enables a direct jump out of these basins, landing near the ground truth $M^\star$. This highlights the effectiveness of deterministic escape.
\begin{figure*}[h]
	\centering
	\includegraphics[width=1.0\textwidth]{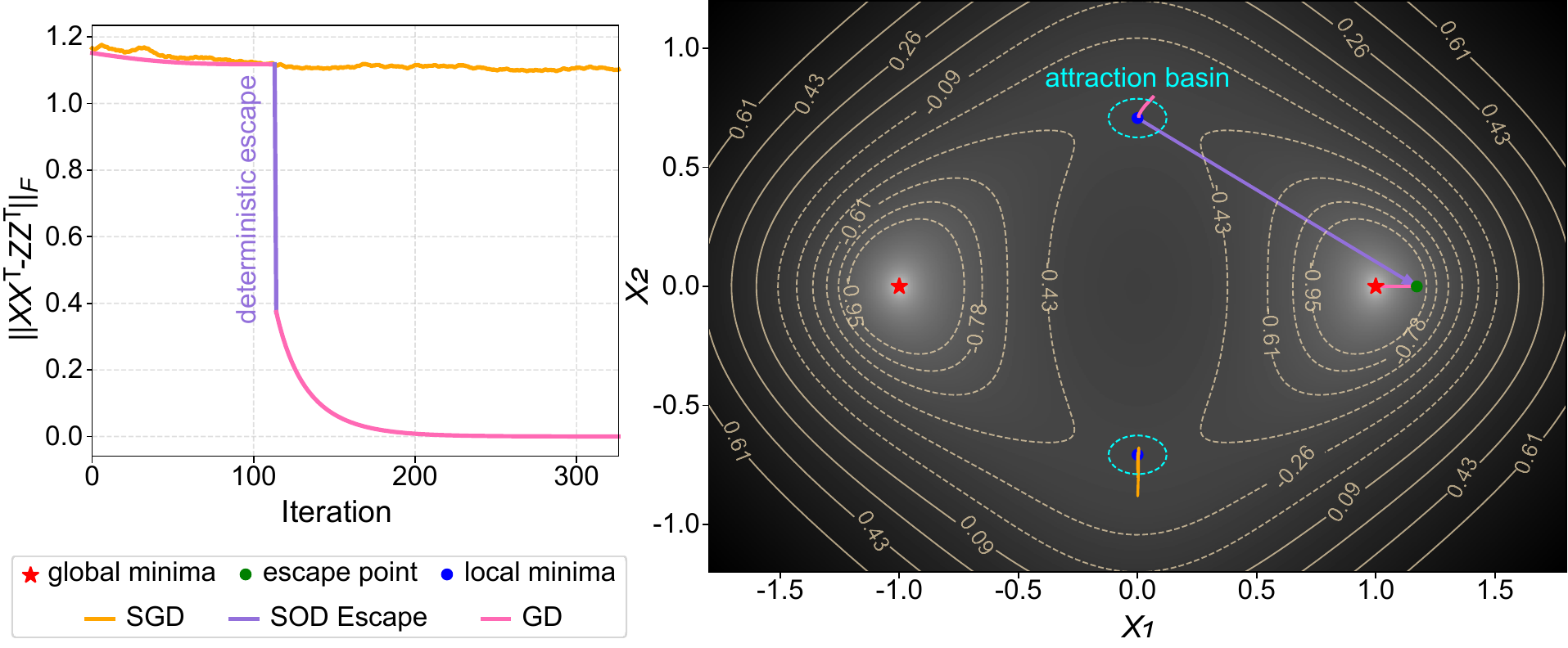}
	\caption{
		Escape behavior of the proposed SOD method from a spurious solution $\hat{x}$. 
		Left: distance to the $M^\star$ versus iterations. 
		Right: loss landscape (log-scaled contour map) and trajectories. 
		SOD jumps from $\hat{x}$ to $\check{x}$ (green) and subsequent GD converges to $M^\star$, whereas GD alone is trapped (blue) and SGD does not escape the basin within the iteration budget.
	}
	\label{figure:basic_example}
\end{figure*}

\paragraph{Short Preliminaries.}
Before we introduce the formal results of this work, we would like to first introduce some preliminary concepts and notations related to MS to facilitate understanding. In particular, the gradient of $h$ w.r.t. $X\in\mathbb{R}^{n\times r_\text{search}}$ is given by 
\begin{equation}
	\nabla h(X) 
	= 2\mathcal{A}^*\mathcal{A}
	(XX^\top - M^\star) X
	= 2
	\sum_{i=1}^{m}
	\langle
	A_i, XX^\top - M^\star
	\rangle 
	A_i X,
    \label{equation:matrix_gradient}
\end{equation}
where $\mathcal{A}^*:\mathbb{R}^m\to\mathbb{R}^{n\times n}$ is the adjoint operator of $\mathcal{A}$ defined as $\mathcal{A}^*(y)=\sum_{i=1}^m y_i A_i$. 
Moreover, if $\nabla f(M)=\mathcal{A}^*(\mathcal{A}(M)-b)$ denotes the gradient of $f$, then the chain rule yields $\nabla h(X) = 2 \nabla f(XX^\top) X$.

To analyze problem~\eqref{equation:matrix_objfunc}, a standard assumption in the MS literature is the Restricted Isometry Property (RIP) \citep{candes2010power,recht2010guaranteed}.
\begin{definition}[Restricted Isometry Property (RIP)]
\label{definition:rip}
    A linear operator $\mathcal{A}$ satisfies the $\delta_p$-RIP for some $\delta_p \in [0,1)$ if, for all symmetric matrices $M$ with $\operatorname{rank}(M)\le p$, the inequality below holds:
    \begin{equation*}
        (1 - \delta_{p}) \|M\|_F^2 
        \leq 
        \|\mathcal{A}(M)\|_2^2 
        \leq 
        (1 + \delta_{p}) \|M\|_F^2.
    \end{equation*}
\end{definition}

RIP implies that the linear measurement approximately preserves the Frobenius norm of all low-rank matrices. This assumption is satisfied by a wide range of random measurement operators \citep{bah2010improved}. 
A smaller RIP constant $\delta_p$ leads to a more benign optimization landscape for problem~\eqref{equation:matrix_objfunc}, while larger values of $\delta_p$ (closer to $1$) correspond to increasingly complicated loss surfaces often containing numerous spurious solutions \citep{ma2024absence}. 
Hence, deterministic (non-randomized, e.g. vanilla GD) algorithms may struggle to converge to the global optimum (as shown in Figure~\ref{figure:basic_example}), namely the \textit{ground-truth} matrix $M^\star$.

\subsection{Notation}
\label{subsec:notation}
Throughout the paper, we adopt the following notation. Lowercase Roman letters (e.g., $a$) denote scalars or vectors, depending on the context. Uppercase Roman letters (e.g., $A$) represent matrices. Bold lowercase and uppercase letters (e.g., $\mathbf{a},\mathbf{A}$) indicate tensors. Calligraphic letters (e.g., $\mathcal{A}$) denote linear operators.
Let $\hat{X}\in\mathbb{R}^{n\times r}$ be a local minimizer of $h$. Since $h$ is differentiable, it satisfies the first-order stationarity condition $\nabla h(\hat{X})=0$.
Let the eigendecomposition of $\nabla f(\hat{X}\hat{X}^\top)$ be
$\sum_{\varphi=1}^n \lambda_\varphi u_\varphi u_\varphi^\top$ where the eigenvalues are ordered as $\lambda_1\geq\cdots\geq\lambda_n$. 
Suppose $\hat{X}$ has a thin singular value decomposition $\hat{X}=\sum_{\phi=1}^r \sigma_\phi v_\phi q_\phi^\top$ with singular values ordered $\sigma_1\geq\cdots\geq\sigma_r$. 
The first-order optimality condition implies that $u_n\perp v_\phi$ for all $\phi$. 
We use the shorthand $\circ l$ for the $l$-fold Cartesian product, and write $[l]:=\{1,\cdots,l\}$. The symbol $\otimes$ denotes the tensor (outer) product. For $Y\in\mathbb{R}^{n\times r}$, we let $\operatorname{vec}(Y)\in\mathbb{R}^{nr}$ be its vectorization, and define the $l$-fold symmetric tensor power $\operatorname{vec}(Y)^{\otimes l}\in\mathbb{R}^{nr\circ l}$. 
For a tensor $\mathbf{v}\in\mathbb{R}^{(nr)\circ l}$, we denote by $\operatorname{stack}(\mathbf{v})\in\mathbb{R}^{nrl}$ its stacked vector form. The operators $\operatorname{unvec}$ and $\operatorname{unstack}$ are defined as the inverses of $\operatorname{vec}$ and $\operatorname{stack}$, respectively.

\section{Background and Related Work} 
\label{sec2}

\subsection{Non-Convex Optimization and Escaping Spurious Solutions}
\label{subsec:non-convex-optimization-and-escape-spurious-solutions}
Convex optimization admits a benign landscape \citep{criscitiello2026sensor} in which every local minimum is global, enabling gradient-based methods to converge reliably under mild conditions \citep{boyd2004convex,bubeck2015convex}. 
In contrast, non-convex optimization arises when the \textit{objective function} or \textit{feasible region} lacks convexity \citep{danilova2022recent}, typically inducing complex landscapes populated by multiple critical points (e.g. local minima and saddle points) \citep{jain2017nonconvex}. 
Of particular concern are \emph{spurious local minima}, namely second-order critical points that satisfy local optimality conditions but are suboptimal globally. 
Such spurious solutions are prevalent in operations research such as the pooling problem \citep{dey2015analysis} and the optimal power flow problem \citep{kocuk2016strong}, as well as in machine-learning applications such as tensor decomposition \citep{zheng2015interpolating} and neural-network training \citep{jain2017non}. They therefore pose a fundamental obstacle to reliable optimization.

Existing strategies for escaping spurious local minima without altering the optimization landscape are largely heuristic. 
Specifically, probabilistic perturbation methods inject randomness into the optimization dynamics to destabilize stationary points. Beyond stochastic gradient descent (SGD), in which minibatch sampling induces implicit gradient noise \citep{bottou2018optimization}, stochastic gradient Langevin dynamics (SGLD) adds Gaussian noise to gradient updates to enable approximate posterior sampling \citep{welling2011bayesian}. Perturbed gradient descent and its variants inject isotropic noise near critical points to provably escape saddle points in polynomial time \citep{jin2017escape,jin2021nonconvex}. However, such noise injection can degrade success rates by disrupting the deterministic gradient flow, often causing \textit{oscillations} near local minima and poor global convergence \citep{de2015global}.
Rule-based adaptation methods~\citep{wen2026fantastic}, by contrast, avoid explicit noise by heuristically modifying descent trajectories. In addition to momentum-based optimizers such as Adam \citep{kingma2014adam} and its decoupled weight-decay variant \citep{loshchilov2017decoupled}, more sophisticated strategies have been proposed. For instance, Nesterov's accelerated gradient employs a lookahead mechanism to anticipate gradient changes \citep{nesterov1983method}. ALTO \citep{zhao2025exploring}, which extends L-BFGS with curvature thresholding \citep{liu1989limited}, leverages approximate curvature information to traverse flatter regions efficiently. Despite strong empirical performance in certain regimes, such as accelerated progress through specific landscape regions, these methods often lack theoretical guarantees and may introduce unintended side effects because of their sensitivity to hyperparameters.
%
All these observations motivate the development of more effective approaches based on landscape smoothing, most notably over-parameterization.

\subsection{Typical Over-parameterization Methods in MS}
\label{subsec:typical-over-parameterization}
Over-parameterization has recently emerged as a key paradigm in non-convex optimization \citep{allen2019convergence}. By enlarging the parameter space beyond the problem’s intrinsic degrees of freedom, it can reshape the loss landscape \citep{oneto2023we}, often turning local minima into global minima or saddle points \citep{xu2018benefits}. This effect resembles interpolation \citep{liu2025understanding}, smoothing the surface and improving the attainability of optimal solutions.
Rank over-parameterization (i.e. $r_{\text{search}}>r$) under suitable RIP-type conditions can remove spurious solutions. For instance, \citep{zhang2024improved} show that if $n>r_\text{search}>r[(1+\delta_p)/(1-\delta_p)-1]^2/4$,
then every solution $X^\star$ to~\eqref{equation:matrix_objfunc} satisfies $X^\star{X^\star}^\top=M^\star$. Analogous RIP-based guarantees for the $\ell_1$ loss are given in \citep{ma2023global}.
Beyond increasing search rank, convex relaxation offers another form of over-parameterization by lifting the factorized variable ($O(nr)$ parameters) to a full matrix ($O(n^2)$ parameters) \citep{recht2010guaranteed}. As shown in \citep{yalcin2022semidefinite}, when $M^\star$ has sufficiently high rank, the RIP threshold for exact recovery via semidefinite programming can approach $1$, underscoring the strength of such lifted formulations. However, \citep{xiongover} reports that over-parameterization may make GD exponentially slower than in the exactly parameterized case, suggesting that these guarantees can be practically limited.

\subsection{Tensor Over-Parameterization: Introduction and Foundational Results}
\label{subsec:tensor-over-parameterization}
\citep{ma2023over,ma2024algorithmic} introduce a tensor-based over-parameterization framework that lifts the original decision variable from a matrix to a higher-order tensor by embedding the matrix variable $X \in \mathbb{R}^{n \times r}$ into a higher-dimensional tensor space. Specifically, they define the lifted tensor as $\mathbf{w} = \operatorname{vec}(X)^{\otimes l} \in \mathbb{R}^{nr\circ l}$, where $l$ is the lifting order.
To preserve the structure of the original matrix during lifting and recovery, a permutation tensor $\mathbf{P} \in \mathbb{R}^{n \times r \times nr}$ is introduced, satisfying
$
    \langle 
	\mathbf{P}, \operatorname{vec}(X) 
	\rangle_3 = X 
$
for any $X \in \mathbb{R}^{n \times r}$.
This yields
\begin{equation}
    \begin{aligned}
    	\mathbf{P}(\mathbf{w}) =
    	\langle 
    	\mathbf{P}^{\otimes l}, 
    	\operatorname{vec}(X)^{\otimes l}
    	\rangle_{3*[l]} = X^{\otimes l}
    	\in\mathbb{R}^{[n\times r]\circ l},
    \end{aligned}   
    \label{equation:permutation_tensor}
\end{equation}
where $\langle \cdot, \cdot \rangle_3$ denotes contraction along the third mode of $\mathbf{P}$, and $\langle \cdot, \cdot \rangle_{3*[l]}$ generalizes mode-wise contraction for order-$l$ tensors. This mapping ensures that any $X$ can be faithfully reconstructed from the tensor space using $\mathbf{P}$.
The sensing operator $\mathcal{A}$ is subsequently extended to the tensor domain to align with the lifted representation. Specifically, the original linear operator $\mathcal{A}$ is first encoded as a third-order tensor $\mathbf{A} \in \mathbb{R}^{m \times n \times n}$, where $\mathbf{A}_{ijk} = (A_i)_{jk}$ for $i \in [m]$ and $(j,k) \in [n] \times [n]$. This tensorized form is then consistently lifted to $\mathbf{A}^{\otimes l}$, enabling it to operate on higher-order tensor variables in the lifted space.

By defining the reconstruction error as $f^l(\mathbf{M}) := \|\langle\mathbf{A}^{\otimes l},\mathbf{M}\rangle - b^{\otimes l}\|_F^2$, where $\mathbf{M}\in\mathbb{R}^{[n\times n]\circ l}$ is a candidate tensor in the lifted space, the optimization problem is reformulated in the high-dimensional domain as:
\begin{equation}
    \min_{\mathbf{w} \in \mathbb{R}^{nr \circ l}} 
    h^l(\mathbf{w}) :=
    f^l
    (\langle
    \mathbf{P}(\mathbf{w}), \mathbf{P}(\mathbf{w})
    \rangle_{2*[l]}),
    \label{equation:tensor_objfunc_h}
	\tag{P2}
\end{equation}
where $\langle \mathbf{P}(\mathbf{w}), \mathbf{P}(\mathbf{w}) \rangle_{2*[l]}$ performs contraction along modes $\{2,4,\dots,2l\}$, preserving the bilinear structure of the original matrix formulation.
This reformulation enables the problem to be addressed within a higher-order tensor space, while maintaining a consistent mapping to the matrix domain (but only limited to rank-1 tensor).
The key theoretical results characterizing this lifted optimization problem are summarized below.

\begin{figure*}[h]
	\centering
	\includegraphics[width=0.48\textwidth]{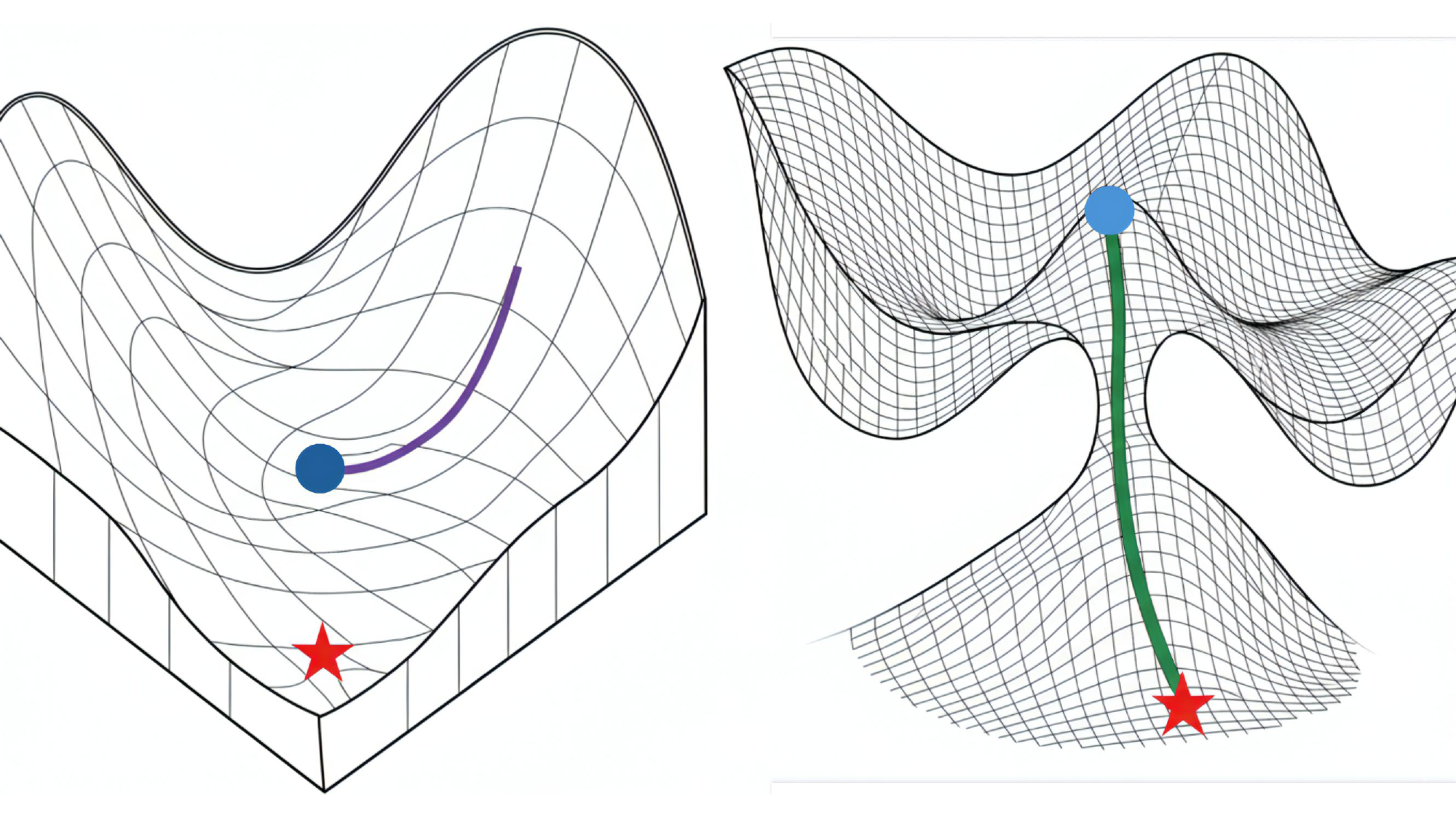}
	\caption{
		Comparison of optimization landscapes in matrix and tensor spaces. 
		Left: in matrix space, GD (purple) becomes trapped at a local minimum (dark blue), away from the global minimum (red pentagram). 
		Right: in tensor space, the corresponding point becomes a saddle (light blue), and GD (green) traverses it and converges to the global minimum. 
		This schematic is illustrative; real landscapes are far more complex.}
	\label{figure:matrix_tensor}
\end{figure*}

First, \cite[Theorem~5.4]{ma2023over} shows that tensor lifting with odd $l$ reshapes the landscape: spurious local minima in the matrix space become strict saddle points in the tensor domain, and these saddles admit explicit rank-1 escape directions of the form $\operatorname{vec}(u_n q_r^\top)^{\otimes l}$. Ideally, one would preserve a rank-1 structure to enable a clean, interpretable projection back to the matrix space; however, due to the superposition effect discussed in Section~\ref{sec1}, even an infinitesimal step along a rank-1 direction typically produces a rank-2 tensor. This difficulty is alleviated by \cite[Theorem~1]{ma2024algorithmic}, which reveals an implicit-regularization phenomenon for GD in the tensor space: along the trajectory, iterates tend to be approximately rank-1. Motivated by this, we can extract the dominant rank-1 component and operate projection to obtain an escape matrix $\check{X}$. Figure~\ref{figure:matrix_tensor} visualizes this geometric transformation and the benefits of tensor lifting.

\section{Main Results}
\label{sec3}

Tensor over-parameterization lifts a matrix-space local minimum $\hat{X}$ to the tensor $\hat{\mathbf{w}}:=\operatorname{vec}(\hat{X})^{\otimes l}$, incurring an exponential increase in dimension and making gradient-based optimization in the lifted space computationally prohibitive. To circumvent this cost, our main theorem constructs an escape point $\check{X}$ directly in the original matrix space by moving along the oracle escape direction $\operatorname{vec}(u_n q_r^\top)^{\otimes l}$ and emulating the tensor-space gradient dynamics without explicitly optimizing in the tensor domain.
For notational convenience, define $E := \mathcal{A}^*\mathcal{A}(
u_n v_r^\top + v_r u_n^\top)\in\mathbb{R}^{n\times n}$, and introduce the post-escape tensor with escape step size $\rho > 0$:
\begin{equation}
	\mathbf{w}_\mathrm{escape} :=
	\hat{\mathbf{w}} + \rho 
	\operatorname{vec}(u_n q_r^\top)^{\otimes l},
	\label{equation:w_escape}
\end{equation}
where $\lambda_n<0,u_n$ are the smallest eigenvalue and eigenvector of $\nabla f(\hat{X}\hat{X}^\top)$, and $\sigma_r,v_r,q_r$ are the smallest nonzero singular value and corresponding singular vectors of $\hat{X}$ (see Section~\ref{subsec:notation}, notation introduction).
The matrix $E$ implicitly encodes the "invisible" escape information available in the matrix space. The rank-2 tensor $\mathbf{w}_\mathrm{escape}$ corresponds to a one-step simulation of the lifted escape in Section~\ref{sec4}, and it also serves as the initialization $\tilde{\mathbf{w}}^{(0)}$ for the multi-step simulation in Section~\ref{sec5}.

\subsection{Single-step SOD Escape Mechanism}
This SOD escape mechanism directly collapses the tensor-space point $\mathbf{w}_\mathrm{escape}$ back into the matrix space. This involves the following three components:  
1) identifying a rank-1 tensor that closely approximates $\mathbf{w}_\mathrm{escape}$ under a suitable norm, i.e., ensuring $\| \operatorname{vec}(\check{X})^{\otimes l} - \mathbf{w}_\mathrm{escape} \|$ approaches zero; 
2) determining the conditions under which the $\check{X}$ can successfully escape the attraction basin of $\hat{X}$; 
3) establishing when such conditions are satisfied.
These lead to the following Theorem~\ref{theorem:single-step-sod-informal}.

\begin{figure*}[h]
	\centering
	\includegraphics[width=0.36\textwidth]{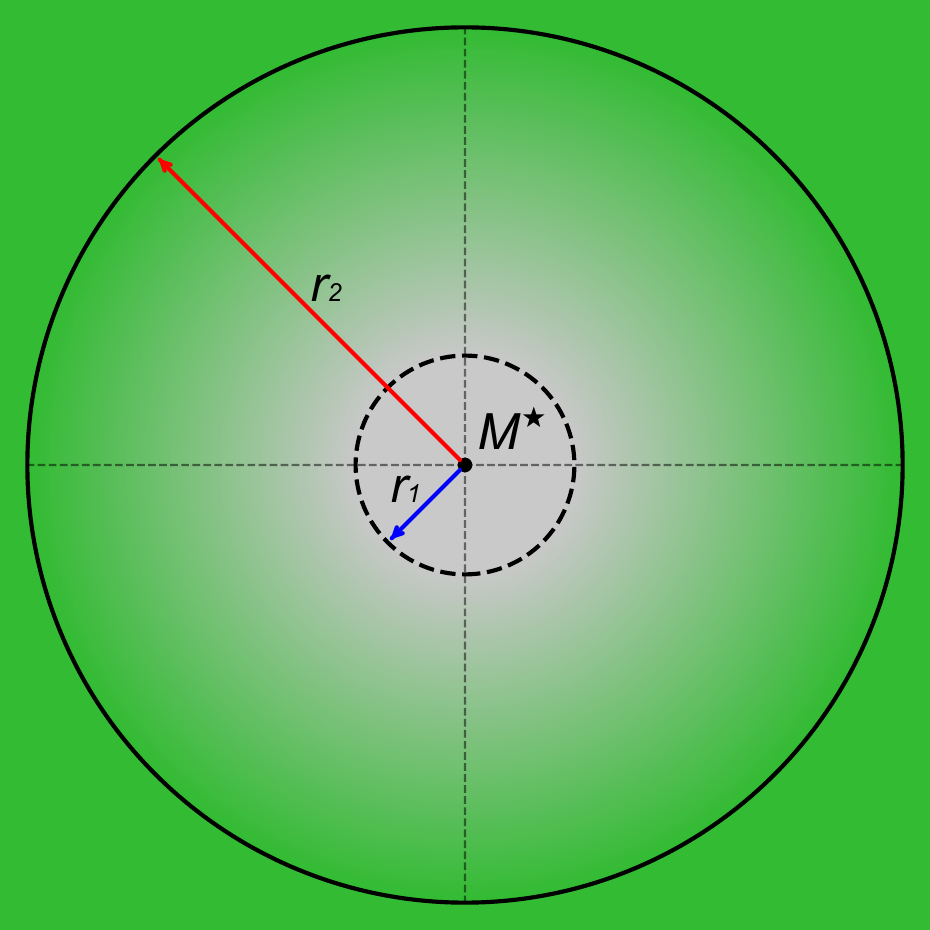}
	\caption{
		Visualization of escape regions (concentric circles) with EFS zones. The radius of the inner circle is $r_1$, defined in Equation~\eqref{definition:r_1}, and the radius of the outer circle is $r_2$, defined in Equation~\eqref{definition:r_2}. 
		The greener the region, the higher the probability $\mathbb{P}(\mathrm{EFS} > 1)$.
	}
	\label{figure:r1_r2}
\end{figure*}

We study the condition $\mathrm{EFS} > 1$ for single-step SOD escape under a symmetric Gaussian sensing ensemble (see Equation~\eqref{equation:gaussian_ensemble}).
As illustrated in Figure~\ref{figure:r1_r2}, the parameter space is partitioned by two radii $r_1$ and $r_2$ into three regimes: outside $r_2$, escape is guaranteed; inside $r_1$, escape is impossible; and between them, escape occurs with a probability that depends on both the sensing realization and the distance $\|\hat{X}\hat{X}^\top - ZZ^\top\|_F$.
Overall, when aggregating the latter two regimes, we show in Section~\ref{subsec:probabilistic-escape-analysis} that $\mathbb{P}(\mathrm{EFS} > 1) < 1/3$, implying that single-step SOD escape is unlikely in many MS instances.
This limitation fundamentally arises from the irreducible mismatch between the rank-2 escape tensor $\mathbf{w}_{\mathrm{escape}}$ and its rank-1 approximation $\operatorname{vec}(\check{X})^{\otimes l}$ under the tensor spectral norm, which incurs a non-negligible approximation error.

\begin{theorem}[Single-step SOD Escape, informal]\label{theorem:single-step-sod-informal}
	Consider the problem of escaping from a local minimum $\hat{X}$ of the objective function \eqref{equation:matrix_objfunc}. Define the \ref{equation:escape_feasibility_score} as follows:
	\begin{equation*}     
		\mathrm{EFS} = 
		\frac{-\lambda_n}{\sigma_r^2 (1+\delta_p)}
		+
		\frac{
			\langle
			E, u_n u_n^\top
			\rangle^2
		}
		{2(1+\delta_p)^2},
    \end{equation*}
	where $E := \mathcal{A}^*\mathcal{A}(u_n v_r^\top + v_r u_n^\top) \in \mathbb{R}^{n \times n}$; $(\lambda_n <0,u_n),(\sigma_r,v_r,q_r)$ are the eigenpair and singular value-vector triples of $\nabla f(\hat{X}\hat{X}^\top)$ and $\hat{X}$, respectively.
	If $\mathrm{EFS}>1$, then the matrix $\check{X} = \hat{X} + \hat{\rho} u_n q_r^\top$ serves as a valid escape point, i.e., $h(\check{X}) < h(\hat{X})$. The point $\check{X}$ is capable of escaping the attraction basin of $\hat{X}$, provided that $\hat{\rho}$ lies in the interval $\hat{\rho}_1 < \hat{\rho} < \hat{\rho}_2$, where
	\begin{equation*}
		\hat{\rho}_1, \hat{\rho}_2 
		:=
		\frac{
			-\sigma_r
			\langle
			E, u_n u_n^\top
			\rangle
		}
		{(1+\delta_p)}
		\mp
		\sigma_r\sqrt{2(\mathrm{EFS}-1)}. 
    \end{equation*}
\end{theorem}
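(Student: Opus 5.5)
The plan is to bypass the tensor space entirely and verify the descent claim by a direct, exact expansion of $h$ along the ray $\check{X}=\hat{X}+\rho D$ with $D:=u_nq_r^\top$. In the lifted picture this $\check{X}$ is the rank-1 surrogate of $\mathbf{w}_{\mathrm{escape}}$ from \eqref{equation:w_escape}, but the statement $h(\check{X})<h(\hat{X})$ concerns only the matrix space. Because $h$ is a quartic polynomial in $\rho$, we will compute it exactly and then bound only the curvature terms using RIP (Definition~\ref{definition:rip} with $p\ge 2$).

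\proofstep{Step 1: exact form of the perturbation} From the thin SVD we have $\hat{X}q_r=\sigma_r v_r$ and $q_r^\top q_r=1$. Hence
\begin{equation*}
\check{X}\check{X}^\top-\hat{X}\hat{X}^\top=\rho\sigma_r S+\rho^2 u_nu_n^\top=:\Delta, \qquad S:=u_nv_r^\top+v_ru_n^\top .
\end{equation*}
Since $f$ is quadratic in $M$, we get exactly
\begin{equation*}
h(\check{X})-h(\hat{X})=\langle\nabla f(\hat{X}\hat{X}^\top),\Delta\rangle+\tfrac12\|\mathcal{A}(\Delta)\|_2^2 .
\end{equation*}

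\proofstep{Step 2: first-order term} Write $\nabla f(\hat{X}\hat{X}^\top)u_n=\lambda_n u_n$. Combined with $u_n\perp v_r$ (noted in Section~\ref{subsec:notation}), this gives $\langle\nabla f,S\rangle=2\lambda_n v_r^\top u_n=0$. Therefore the first-order term equals $\rho^2\lambda_n$.

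\proofstep{Step 3: second-order term} Expanding,
\begin{equation*}
\|\mathcal{A}(\Delta)\|_2^2=\rho^2\sigma_r^2\|\mathcal{A}(S)\|_2^2+2\rho^3\sigma_r\langle E,u_nu_n^\top\rangle+\rho^4\|\mathcal{A}(u_nu_n^\top)\|_2^2 .
\end{equation*}
The cross term is exact because $\langle\mathcal{A}(S),\mathcal{A}(u_nu_n^\top)\rangle=\langle\mathcal{A}^*\mathcal{A}(S),u_nu_n^\top\rangle=\langle E,u_nu_n^\top\rangle$. Now $S$ is symmetric of rank at most $2$ with $\|S\|_F^2=2$, since $u_n\perp v_r$ and both are unit vectors. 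Likewise $\|u_nu_n^\top\|_F=1$. The RIP upper bound therefore gives $\|\mathcal{A}(S)\|^2\le 2(1+\delta_p)$ and $\|\mathcal{A}(u_nu_n^\top)\|^2\le 1+\delta_p$.

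\proofstep{Step 4: reduction to a quadratic inequality} Set $c:=\langle E,u_nu_n^\top\rangle$. Combining Steps 2 and 3,
\begin{equation*}
h(\check{X})-h(\hat{X})\le\rho^2\Big[\tfrac{1+\delta_p}{2}\rho^2+\sigma_r c\,\rho+\lambda_n+(1+\delta_p)\sigma_r^2\Big].
\end{equation*}
For $\rho\neq0$ it therefore suffices that the bracketed quadratic $q(\rho)$ be negative. The discriminant of $q$, divided by $(1+\delta_p)^2$, equals
\begin{equation*}
\sigma_r^2\Big[\tfrac{c^2}{(1+\delta_p)^2}-\tfrac{2\lambda_n}{\sigma_r^2(1+\delta_p)}-2\Big]=2\sigma_r^2(\mathrm{EFS}-1).
\end{equation*}
So $q$ has two real roots exactly when $\mathrm{EFS}>1$. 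Those roots are $-\sigma_r c/(1+\delta_p)\mp\sigma_r\sqrt{2(\mathrm{EFS}-1)}$, which are precisely $\hat{\rho}_1$ and $\hat{\rho}_2$. Consequently $q(\hat{\rho})<0$ for every $\hat{\rho}$ strictly between them.

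One detail remains: if $0$ lies in $(\hat{\rho}_1,\hat{\rho}_2)$, we must exclude $\hat{\rho}=0$, where $\check{X}=\hat{X}$. With that exclusion we obtain $h(\check{X})<h(\hat{X})$, and hence $\check{X}$ lies outside the sublevel set defining the basin of $\hat{X}$.

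\proofstep{Main obstacle} The delicate points are the exact vanishing of the linear term and the exact identification of the cubic coefficient with $\langle E,u_nu_n^\top\rangle$; the algebra is otherwise routine. Both rely on $u_n\perp v_r$ and on $D$ being aligned with $q_r$, so that $\hat{X}D^\top$ is rank-1. We also need the RIP order $p\ge2$ in order to control $\|\mathcal{A}(S)\|$.
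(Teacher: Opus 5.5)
Your proof is correct and follows the paper's route (Proposition~\ref{proposition:escape_inequality} plus the discriminant computation): you expand $h(\check X)-h(\hat X)$, kill the linear term using stationarity and $u_n\perp v_r$, bound the two squared terms by RIP, and read off $\hat\rho_{1,2}$ from the discriminant $2\sigma_r^2(\mathrm{EFS}-1)$. Your exact second-order expansion of the quadratic $f$ is a tidier packaging of the paper's nine-term $S>T$ bookkeeping, and because it is valid for $\hat\rho$ of either sign it covers both signs of $\langle E,u_nu_n^\top\rangle$ at once, without the paper's case split and its reformulation via $-\hat X$.
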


\subsection{Multi-step SOD Escape Mechanism}
The goal of SOD escape is to identify a tensor along the lifted trajectory that admits a well-structured matrix correspondence. As discussed above, when $\mathrm{EFS}>1$, a single step from the saddle point $\hat{\mathbf{w}}$ can be directly projected back into the matrix space. However, this behavior does not occur universally across all MS instances, necessitating more robust projection strategies.
To address this, we simulate multiple steps within the tensor space while maintaining control over the trajectory to ensure that it remains easily projectable. This simulated algorithm is referred to as TPGD (see Section~\ref{sec5}). Due to the implicit regularization in \cite[Theorem~1]{ma2024algorithmic}, this iterative process gradually promotes an approximately rank-1 structure. Ultimately, this leads to a well-defined matrix escape point $\check{X}$ in the original space.
We summarize the key results in Theorem~\ref{theorem:multi-step-sod-informal}.

\begin{theorem}[Multi-step SOD Escape, informal]
\label{theorem:multi-step-sod-informal}
	Consider the goal of escaping from a local minima $\hat X$ of \eqref{equation:matrix_objfunc}. Given constants $1>\rho,\eta>0$ and a sufficiently large odd integer $l\geq3$ (representing order of simulated lifting), there exist two disjoint sets \ref{equation:u_beta}, \ref{equation:u_gamma} $\subset \mathbb{R}$ with $\sup U_\beta < \inf U_\gamma$ corresponding to two deterministic points of escape from $\hat X$ (note that $\check{X}_\beta$ is $\beta$-type with $t \in U_\beta$ and $\check{X}_\gamma$ is $\gamma$-type with $t \in U_\gamma$, both are $\check{X}$):
    \begin{equation*}
        \begin{aligned}
    		\check{X}_\beta
    		&=
    		\left\{
    		\rho^{1/l} (1 - \eta \lambda_n^l)^{t/l}
    		\right\}
    		\cdot u_n q_r^\top 
    		\\
    		\check{X}_\gamma
    		&=
    		\left\{
    		- \frac{1}{2}
    		(2\eta\rho)^{1/l}
    		\left[
    		\sum\nolimits_{\tau=0}^{t-1}
    		(1 - \eta \lambda_n^l)^\tau
    		\right]^{1/l}
    		\sigma_r
    		\right\}
    		E  \hat{X} 
    	\end{aligned}
    \end{equation*}
	such that both escape points are guaranteed to yield lower objective values, i.e., $h(\check{X}) < h(\hat{X})$, where $E := \mathcal{A}^*\mathcal{A}(u_n v_r^\top + v_r u_n^\top) \in \mathbb{R}^{n \times n}$ and $(\lambda_n <0,u_n),(\sigma_r,v_r,q_r)$ are the eigenpair and singular value-vector triples of $\nabla f(\hat{X}\hat{X}^\top)$ and $\hat{X}$, respectively. While both $\check{X}$ are easy to compute, the $\beta$-type escape point has a simpler form but higher requirement on numerical precision when $l$ is large, whereas the $\gamma$-type escape point has a complicated form but is more numerically stable.
\end{theorem}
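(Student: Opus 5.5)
The plan is to simulate the truncated projected gradient descent (TPGD) of Section~\ref{sec5} on the lifted objective \eqref{equation:tensor_objfunc_h} in closed form. I will track the iterates inside a low-dimensional family of symmetric rank-one tensors, and then show that for $t$ in suitable windows $U_\beta$ and $U_\gamma$ the projected matrix strictly decreases $h$.

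\textbf{Step 1: the gradient of $h^l$ on the relevant subspace.} I start from $\tilde{\mathbf{w}}^{(0)}=\mathbf{w}_{\mathrm{escape}}$ in \eqref{equation:w_escape}. Using $\mathbf{P}(\mathbf{w})=X^{\otimes l}$ and the factorization $f^l$ of \eqref{equation:tensor_objfunc_h}, I expect the lifted gradient at a tensor $\sum_k c_k\operatorname{vec}(Y_k)^{\otimes l}$ to be a multilinear expression built from $\nabla f(\cdot)^{\otimes l}$ and the lifted operator $(\mathcal{A}^*\mathcal{A})^{\otimes l}$. I will evaluate it on the two atoms $\hat{\mathbf{w}}$ and $\operatorname{vec}(u_nq_r^\top)^{\otimes l}$, and I aim to establish three facts. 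First, $\hat{\mathbf{w}}$ is stationary; this follows from $\nabla h(\hat X)=0$. Second, the escape atom is an eigendirection of the lifted Hessian with eigenvalue $\lambda_n^l<0$, since $l$ is odd; this is \cite[Theorem~5.4]{ma2023over}. Third, the only new cross term generated is, up to normalisation, the rank-one tensor $\operatorname{vec}(\tfrac12\sigma_r E\hat X)^{\otimes l}$. Its coefficient comes from $\langle \mathcal{A}^*\mathcal{A}(u_n v_r^\top+v_ru_n^\top),\cdot\rangle$, which is exactly how $E$ enters. The cross term appears with a minus sign, coming from the descent step.

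\textbf{Step 2: closed-form TPGD recursion.} The TPGD truncation discards everything outside the span of the escape atom and the $E\hat X$ atom, and it also removes the $\hat{\mathbf{w}}$ component after the first step. This leaves a two-term linear recursion. The coefficient of the escape atom evolves as $\alpha_{t+1}=(1-\eta\lambda_n^l)\alpha_t$, so $\alpha_t=\rho(1-\eta\lambda_n^l)^t$. The coefficient of the $E\hat X$ atom accumulates as $\gamma_{t+1}=\gamma_t+2\eta\alpha_t$, so $\gamma_t=2\eta\rho\sum_{\tau<t}(1-\eta\lambda_n^l)^\tau$, with the prefactor $(-\tfrac12\sigma_r)^{l}$ absorbed into the atom. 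Projection back to matrix space is the best rank-one approximation. When one atom dominates the other in tensor spectral norm, the $l$-th root of its coefficient times its matrix factor gives the two displayed formulas for $\check X_\beta$ and $\check X_\gamma$; the $l$-th root is well defined because $l$ is odd. The dominance conditions themselves define $U_\beta$ and $U_\gamma$. Because $\alpha_t$ grows geometrically while $\gamma_t/\alpha_t$ is increasing in $t$, these sets are ordered, with $\sup U_\beta<\inf U_\gamma$. Choosing $l$ large makes the ratio of the two atoms' $l$-th powers sharply separate the two regimes, so the windows are disjoint and nonempty.

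\textbf{Step 3: strict decrease of $h$.} For $\check X_\beta=c\,u_nq_r^\top$ I will compare $h(\check X_\beta)=f(c^2u_nu_n^\top)$ with $h(\hat X)$. I plan to use RIP (Definition~\ref{definition:rip}) together with $\lambda_n<0$ and the orthogonality $u_n\perp v_\phi$ to bound the difference. For $\check X_\gamma=-s\,E\hat X$ I will expand $h$ to second order in $s$. The first-order term is proportional to $-\langle \nabla f(\hat X\hat X^\top),E\hat X\hat X^\top+\hat X\hat X^\top E\rangle$. I expect to sign this negative using the variational characterisation of $u_n$, and then to restrict $U_\gamma$ to the range of $s$ where this term beats the quadratic remainder, which is bounded by $(1+\delta_p)$ through RIP.

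\textbf{Main obstacle.} I expect the hardest part to be Step 3 for the $\beta$-type point. That point discards $\hat X$ entirely, so the decrease is not a local perturbation argument. It must come from the global relation between $f(c^2u_nu_n^\top)$ and $f(\hat X\hat X^\top)$, which holds only for the specific magnitudes $c$ realised when $t\in U_\beta$. My first attempt will be to write $f(c^2u_nu_n^\top)-f(\hat X\hat X^\top)$ using the exact quadratic form $\tfrac12\|\mathcal{A}(\cdot)\|^2$ and to insert $\lambda_n=u_n^\top\nabla f(\hat X\hat X^\top)u_n$. If this does not close, I will shrink $U_\beta$ further, using the freedom in $\rho$ and $t$, until it does.
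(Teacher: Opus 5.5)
\textbf{The genuine gap is Step 3.} You try to certify $h(\check X)<h(\hat X)$ by a direct matrix-space inequality, and neither branch can work.

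For the $\gamma$-type point, $\check X_\gamma=-sE\hat X$ has no $\hat X$ summand. So $h(\check X_\gamma)=f(s^2E\hat X\hat X^\top E)$ is even in $s$, and an expansion in $s$ is an expansion about $X=0$, not about $\hat X$. The first-order term you write, $-\langle\nabla f(\hat X\hat X^\top),E\hat X\hat X^\top+\hat X\hat X^\top E\rangle$, belongs to $h(\hat X-sE\hat X)$. Even there it vanishes identically: it equals $-2\langle\nabla f(\hat X\hat X^\top)\hat X,E\hat X\rangle$, and $\nabla f(\hat X\hat X^\top)\hat X=\tfrac12\nabla h(\hat X)=0$. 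More fundamentally, $\hat X$ is a local minimum, so no perturbative argument around it can give descent; that is the whole reason for lifting.

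For the $\beta$-type point, $\nabla f(\hat X\hat X^\top)\hat X=0$ and $u_n\perp v_\phi$ give the exact identity
$h(cu_nq_r^\top)-h(\hat X)=c^2\lambda_n+\tfrac12\|\mathcal A(c^2u_nu_n^\top-\hat X\hat X^\top)\|_2^2$.
Bounding the last term by RIP and optimizing over $c$ gives at best $\tfrac{1+\delta_p}{2}\|\hat X\hat X^\top\|_F^2-\tfrac{\lambda_n^2}{2(1+\delta_p)}$. This is nonnegative, because second-order optimality of $\hat X$ along $u_nq_r^\top$ forces $-\lambda_n\le(1+\delta_p)\sigma_r^2$. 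This is the same fact behind $C_3>0$ in Proposition~\ref{proposition:escape_inequality}. So shrinking $U_\beta$ cannot rescue this route.

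\textbf{What the paper does instead.} It never argues descent in matrix space directly; it takes it from the lifted objective. Proposition~\ref{proposition:sufficient-descent-of-tpgd} combines three ingredients: the descent lemma for $L$-smooth $h^l$, the variational inequality for $\Pi_S$, and a bound $C\rho^2$ (resp.\ $C(\rho+\eta)\rho\eta$) on the truncation residual. Together these show $h^l(\tilde{\mathbf w}^{(t+1)})<h^l(\tilde{\mathbf w}^{(t)})$ under \eqref{equation:sufficient_decrease_TPGD_rho_eta}, so the simulated iterate falls below $h^l(\hat{\mathbf w})$. The dominance windows then identify $\tilde{\mathbf w}^{(t)}$ with the rank-one tensor $\operatorname{vec}(\check X)^{\otimes l}$, and the decrease is read off through that identification. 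Your proposal never uses the value of $h^l$ along the trajectory, so it has no source for the decrease.

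\textbf{A second, related error is in Step 2.} TPGD projects onto the three-dimensional $S=\operatorname{span}\{\mathbf a,\mathbf b,\mathbf c\}$ of \eqref{equation:tensor_subspace} and keeps $\tilde\alpha^{(t)}=1$. The truncation discards only the $\tilde O(\rho^2)$ and $\tilde O((\rho+\eta)\rho\eta)$ parts of the coefficients; it never removes the $\hat{\mathbf w}$ component.

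Your recursion is right only because $\mathbf a$ stays:
\begin{itemize}
\item The factor $1-\eta\lambda_n^l$ comes from pairing $\mathbf b$ with the residual $\tilde{\mathcal M}(\hat{\mathbf w})-\tilde{\mathcal M}(\operatorname{vec}(Z)^{\otimes l})$ (term $H_1^1$).
\item The growth of the $\mathbf c$-coefficient comes from pairing $\mathbf a$ with the $\mathbf a$--$\mathbf b$ cross terms $\tilde\beta^{(t)}\sigma_r^l(v_ru_n^\top)^{\otimes l}$ in $\tilde{\mathcal M}$ (term $H_2^1$).
\end{itemize}
If $\mathbf a$ were removed, both of these would disappear.

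Keeping $\mathbf a$ also changes the windows. $\tilde\beta^{(t)}\mathbf b$ must dominate $\mathbf a$ as well, which gives the left endpoint $\log(\|\hat X\|_F^l/\rho)/\log(1-\eta\lambda_n^l)$ of $U_\beta$. It also gives the nonemptiness condition $\rho>\rho_{\min}$ in \eqref{equation:u_beta_validity}; in the paper, this, not a two-atom separation, is where large $l$ enters. The $\mathbf a$-comparison likewise enters the left endpoint of $U_\gamma$. Your monotonicity argument for $\tilde\gamma^{(t)}/\tilde\beta^{(t)}$ does correctly give $\sup U_\beta<\inf U_\gamma$.
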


\begin{figure*}[h]
	\centering
	\includegraphics[width=0.72\textwidth]{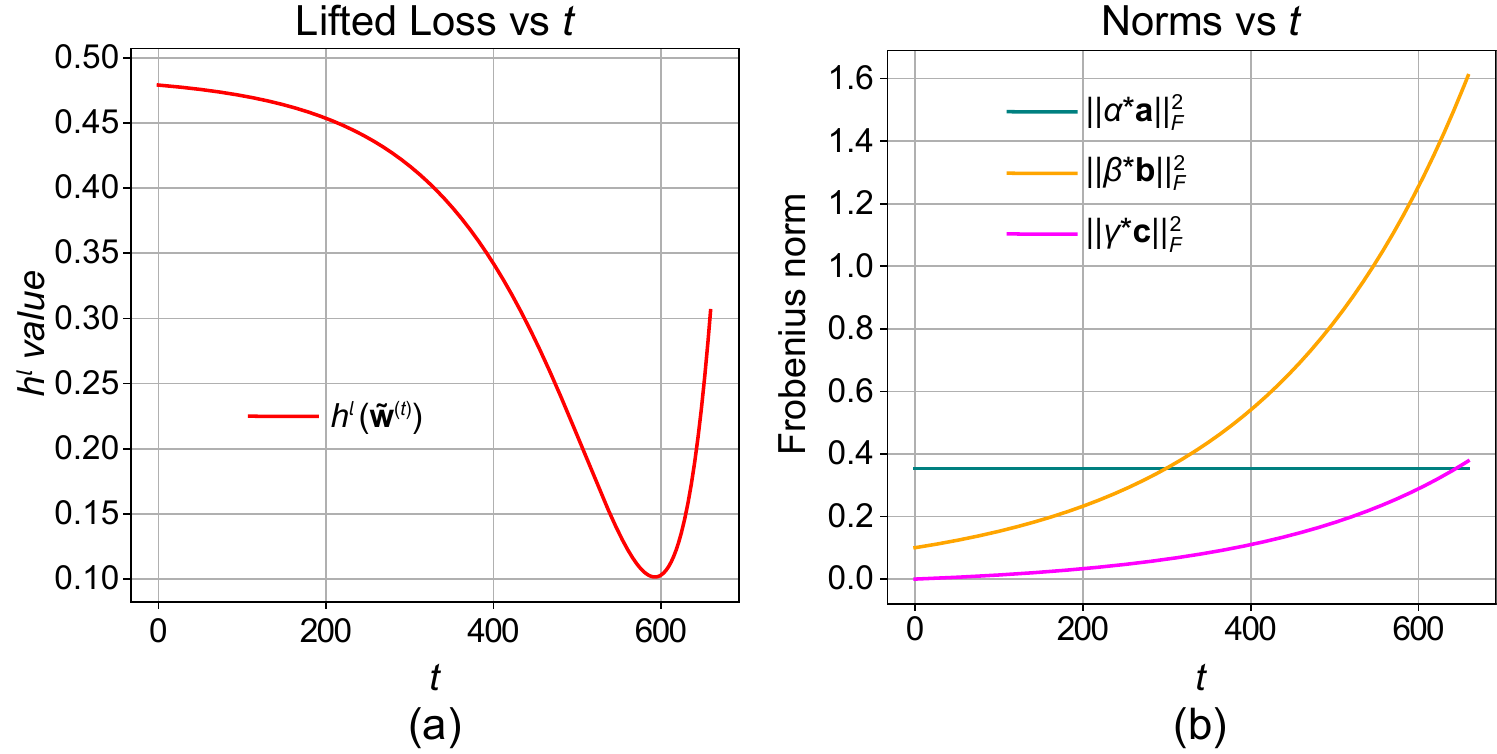}
	\caption{
		Simulation of the TPGD trajectory in a basic example.
		(a) Evolution of the lifted objective value $h^l(\tilde{\mathbf{w}}^{(t)})$ as a function of the iteration count $t$.
		The objective decreases in the early stage, while for sufficiently large $t$, truncation errors accumulate and eventually lead to an increase in the objective value.
		(b) Norms of the three components in the decomposition of $\tilde{\mathbf{w}}^{(t)}$.
		As $t$ increases, the $\mathbf{b}$-term exhibits clear norm separation from the $\mathbf{a}$- and $\mathbf{c}$-terms, indicating the emergence of a $\beta$-type escape direction.
	}
	\label{figure:simulation}
\end{figure*}

One of the main issues in choosing between the $\beta$-type and $\gamma$-type escape points is that the choice depends on the values of both $t$ and $l$; this dependence is best understood by examining the trajectory of the simulated TPGD algorithm.
Although the detailed description of TPGD is deferred to Algorithm~\ref{algorithm:tpgd}, we provide a high-level overview here. At each iteration $t$, the iterate $\tilde{\mathbf{w}}^{(t)}$ admits a closed-form decomposition into three structured components,
$
\tilde{\mathbf{w}}^{(t)}
=
\tilde{\alpha}^{(t)} \mathbf{a}
+
\tilde{\beta}^{(t)} \mathbf{b}
+
\tilde{\gamma}^{(t)} \mathbf{c},
$
as characterized in Equation~\eqref{equation:tensor_subspace} and Equation~\eqref{equation:tilde_w_t}.
The two multi-step escape mechanisms correspond to distinct dominant components in this decomposition.
Specifically, the $\beta$-type escape arises when the $\mathbf{b}$-term becomes dominant, i.e., when $\|\tilde{\beta}^{(t)}\mathbf{b}\|$ separates from the other two terms.
Projecting this dominant direction back to the matrix space yields $\check{X}$ with $t\in U_\beta$.
Analogously, the $\gamma$-type escape corresponds to dominance of the $\mathbf{c}$-term, whose projection induces $\check{X}$ with $t\in U_\gamma$.
The complementary dependence of these two regimes on $l$ and $t$ is illustrated in Figure~\ref{figure:ablation}.

From an algorithmic perspective, a practical criterion for identifying a valid escape point is as follows.
During the simulated TPGD trajectory, if (1) either the $\mathbf{b}$-term or the $\mathbf{c}$-term exhibits clear norm separation from the remaining components, and (2) the corresponding projected point $\check{X}$ satisfies
$
h^l(\operatorname{vec}(\check{X})^{\otimes l})
\approx
h^l(\tilde{\mathbf{w}}^{(t)})
<
h^l(\hat{\mathbf{w}})
$,
then $\check{X}$ can be regarded as a valid multi-step SOD escape candidate after safely projection.
This behavior is empirically demonstrated in Figure~\ref{figure:simulation} by adopting basic example in Equation~\ref{equation:basic_example}.

\section{Single-step SOD Escape (Special Case)}
\label{sec4}

We first examine the single-step case, which clearly demonstrates the fundamental ideas and introduces the essential quantities.  
The proofs of Theorem~\ref{theorem:single-step-sod-informal} can be summarized in the sketch below:

\begin{itemize}
	\item We first show that, as presented in Section~\ref{subsec:rank1-approx-for-single-step}, the rank-2 tensor $\mathbf{w}_\mathrm{escape}$ defined in Equation~\eqref{equation:w_escape} admits a rank-1 approximation under the tensor spectral norm, i.e., there exists $\check{X} = \hat{X} + \hat{\rho} u_n q_r^\top$ for $\hat{\rho} > 0$ such that
	$
        \| \operatorname{vec}(\check{X})^{\otimes l} 
        - \mathbf{w}_{\mathrm{escape}} 
        \|_{S \Join R} \to 0.
    $
    
	\item By analyzing the descent condition $h(\check{X}) < h(\hat{X})$, we derive a constraint on the admissible range of $\hat{\rho}$, motivating the definition of the \textit{Escape Feasibility Score} (EFS). $\mathrm{EFS} > 1$ certifies the existence of a valid escape direction, as presented in Section~\ref{subsec:escape-inequality-and-efs}.
	
	\item Assuming that the measurement operator $\mathcal{A}$ is drawn from a Gaussian ensemble, we conduct a probabilistic analysis (see Section~\ref{subsec:probabilistic-escape-analysis}) of the event $\mathbb{P}(\mathrm{EFS} > 1)$ to quantify the likelihood of successful escape.
\end{itemize}

\subsection{Rank-1 Approximation for Single-step Escape}
\label{subsec:rank1-approx-for-single-step}

\begin{proposition}[Closed-form Structure of Single-step SOD Point $\check{X}$]
\label{proposition:single_step_rank1_tensor_approx}
	The rank-1 tensor $\operatorname{vec}(\check{X})^{\otimes l}$, which serves as a spectral-norm approximation of the tensor-space escape point $\mathbf{w}_\mathrm{escape}$ within the subspace
    $
        R := \operatorname{span}
		\left\{
		\operatorname{vec}(u_n q_r^\top)^{\otimes l},
		\operatorname{vec}(v_r q_r^\top)^{\otimes l}
		\right\}
    $,
	has the closed-form representation
	\begin{equation}
		\check{X} = 
		\sum\nolimits_{\phi=1}^{r-1} \sigma_\phi v_\phi q_\phi^\top 
		+ \sigma_r (\beta u_n 
		+ \alpha v_r) q_r^\top.
		\label{equation:single_step_rank1approx}
	\end{equation}
	The approximation error measured by the tensor spectral norm satisfies if $\sigma_r^l \beta^l \to \rho$ and $\alpha^l \to 1$,
	\begin{equation}
        \left\|
        \operatorname{vec}(\check{X})^{\otimes l} 
        - \mathbf{w}_\mathrm{escape}
        \right\|_{S \Join R}
        \to 0,
        \label{equation:spectral_norm_approx}
    \end{equation}
where $\|\cdot\|_{S\Join R}$ denotes the tensor spectral norm with projection onto $R$ (defined in Lemma~\ref{lemma:dual_tensor_norm}).
\end{proposition}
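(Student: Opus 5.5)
The plan is to work entirely inside the two-dimensional subspace $R$ spanned by $\mathbf{r}_1:=\operatorname{vec}(u_nq_r^\top)^{\otimes l}$ and $\mathbf{r}_2:=\operatorname{vec}(v_rq_r^\top)^{\otimes l}$. The guiding idea is to split off the "inactive" singular directions of $\hat X$, which the projected norm $\|\cdot\|_{S\Join R}$ (Lemma~\ref{lemma:dual_tensor_norm}) effectively ignores. First I write $\hat X=\hat X_{\perp}+\sigma_r v_rq_r^\top$ with $\hat X_\perp:=\sum_{\phi=1}^{r-1}\sigma_\phi v_\phi q_\phi^\top$. I then note that $u_n\perp v_\phi$ for all $\phi$ by first-order optimality, and that $q_r\perp q_\phi$ for $\phi<r$. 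Together these make the vectors $\operatorname{vec}(\hat X_\perp)$, $\operatorname{vec}(v_rq_r^\top)$ and $\operatorname{vec}(u_nq_r^\top)$ mutually orthogonal, with the last two unit-norm.

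Next I take the ansatz $\check X=\hat X_\perp+\sigma_r(\beta u_n+\alpha v_r)q_r^\top$ and expand $\operatorname{vec}(\check X)^{\otimes l}$ multilinearly. The two pure terms are $\sigma_r^l\beta^l\mathbf{r}_1$ and $\sigma_r^l\alpha^l\mathbf{r}_2$. Every other term is a mixed product that contains at least one factor from a different orthogonal direction. The projection onto $R$, as defined in the norm $\|\cdot\|_{S\Join R}$, pairs the tensor with elements of $R$, which are symmetric rank-one powers of $\operatorname{vec}(u_nq_r^\top)$ and $\operatorname{vec}(v_rq_r^\top)$. A mixed term then contracts to zero by orthogonality of the factors. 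The same argument applies to $\mathbf{w}_{\mathrm{escape}}=\operatorname{vec}(\hat X)^{\otimes l}+\rho\mathbf{r}_1$: its projection is $\sigma_r^l\mathbf{r}_2+\rho\mathbf{r}_1$, because $\hat X_\perp$ and the cross terms drop out. I also need to normalise things so that the $\hat X_\perp$ contribution matches on both sides, either by absorbing it into the definition of the norm or by noting it is identical in both tensors and cancels.

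The difference then reduces to
\[
P_R\bigl(\operatorname{vec}(\check X)^{\otimes l}-\mathbf{w}_{\mathrm{escape}}\bigr)=(\sigma_r^l\beta^l-\rho)\,\mathbf{r}_1+\sigma_r^l(\alpha^l-1)\,\mathbf{r}_2 .
\]
Because $\mathbf{r}_1,\mathbf{r}_2$ are orthonormal rank-one tensors, the spectral norm of this element is bounded by $|\sigma_r^l\beta^l-\rho|+\sigma_r^l|\alpha^l-1|$. This bound tends to zero under the stated conditions $\sigma_r^l\beta^l\to\rho$ and $\alpha^l\to1$, which gives \eqref{equation:spectral_norm_approx}. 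To justify the closed form \eqref{equation:single_step_rank1approx} as the best rank-one approximation in this sense, I would add one more observation. Any rank-one symmetric tensor whose $R$-projection matches $\rho\mathbf{r}_1+\sigma_r^l\mathbf{r}_2$ must have its $q_r$-column in $\operatorname{span}\{u_n,v_r\}$, which forces this form with $\hat\rho=\sigma_r\beta$ and $\alpha\approx1$. This also matches the form $\check X=\hat X+\hat\rho u_nq_r^\top$ used in Theorem~\ref{theorem:single-step-sod-informal}.

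The main obstacle I expect is the precise handling of the cross terms under $\|\cdot\|_{S\Join R}$. In particular, I must confirm from Lemma~\ref{lemma:dual_tensor_norm} that the projection really annihilates the mixed terms; these involve $\operatorname{vec}(\hat X_\perp)$, and mixtures of $u_n$ and $v_r$ factors. For the $u_n$/$v_r$ mixtures this holds because the test tensors in $R$ are pure powers and $u_n\perp v_r$. If the lemma's norm is a supremum over unit rank-one tensors projected to $R$, I would bound the mixed contributions by $\binom{l}{k}$-weighted products of inner products that vanish by orthogonality. The care needed here is with odd $l$, where signs matter, so that the conditions $\beta^l\to\rho/\sigma_r^l$ and $\alpha^l\to1$ fix $\beta>0$ and $\alpha\to1$ uniquely.
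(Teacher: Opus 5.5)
Your argument is correct and is essentially the paper's proof. Both compute that the residual $\operatorname{vec}(\check X)^{\otimes l}-\mathbf{w}_{\mathrm{escape}}$ has coordinates $\sigma_r^l\beta^l-\rho$ and $\sigma_r^l(\alpha^l-1)$ along the orthonormal pair $\operatorname{vec}(u_nq_r^\top)^{\otimes l},\operatorname{vec}(v_rq_r^\top)^{\otimes l}$. The paper reads these off directly from $\langle Y^{\otimes l},W^{\otimes l}\rangle=\langle Y,W\rangle^l$, so your multilinear expansion and the ``normalisation'' of the $\hat X_\perp$ part are unnecessary, since $\hat X_\perp q_r=0$. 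Both then bound the $R$-restricted dual norm by $|\sigma_r^l\beta^l-\rho|+\sigma_r^l|\alpha^l-1|$: you via $\|P_R\cdot\|_S$ and the triangle inequality, the paper via an $\ell_1$-bounded expansion $\lambda_{u_n}\mathbf{r}_1+\lambda_{v_r}\mathbf{r}_2$ of the dual certificate.

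Drop your extra ``forcing'' observation, though, because it is false. Matching the $R$-projection only fixes the two scalars $u_n^\top Yq_r$ and $v_r^\top Yq_r$, so $Yq_r$ may have components outside $\operatorname{span}\{u_n,v_r\}$ and the other columns of $Y$ are unconstrained. The closed form is an ansatz, as it is in the paper, not a consequence of the approximation property.
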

The proof is deferred to Appendix~\ref{append:proof_of_proposition_single_step_rank1_tensor_approx} and proceeds as follows. We first show that $\check{\mathbf{w}}$ approximates $\mathbf{w}_{\text{escape}}$ along the projection directions $\operatorname{vec}(u_n q_r^\top)^{\otimes l}$ and $\operatorname{vec}(v_r q_r^\top)^{\otimes l}$, respectively.
We then demonstrate that these components lie approximately within the tensor subspace $R$ spanned by these two orthonormal directions. The approximation is thus valid under the tensor spectral norm within $R$.

\begin{remark}
    Proposition~\ref{proposition:single_step_rank1_tensor_approx} indicates that the appropriate rank-1 approximation to be further explored takes the form $\operatorname{vec}(\check{X})^{\otimes l}$, where
    \begin{equation*}
		\check{X} 
		= 
		\sum\nolimits_{\phi=1}^{r-1} \sigma_\phi v_\phi q_\phi^\top 
		+ 
		\sigma_r \left(
		\nicefrac{\rho^{1/l}}{\sigma_r}\, u_n 
		+ v_r
		\right) q_r^\top
		= 
		\hat{X} + \rho^{1/l} u_n q_r^\top
		=:
		\hat{X} + \hat{\rho} u_n q_r^\top.
    \end{equation*}
\end{remark}

\subsection{Escape Inequality and Escape Feasibility Score}
\label{subsec:escape-inequality-and-efs}

\begin{proposition}
\label{proposition:escape_inequality}
	Let the single-step SOD escape point be defined as follows (with $\hat{\rho}>0$ is the escape step amplitude):
	$
        \check{X} 
		:= 
		\hat{X} + \hat{\rho} u_n q_r^\top
    $.
	Then, a sufficient condition for achieving descent, i.e., $h(\check{X}) < h(\hat{X})$, is that $\hat{\rho}$ satisfies the following quadratic escape inequality:
    \begin{equation}
        C_1 \hat{\rho}^2 + C_2 \hat{\rho} + C_3 < 0.
        \label{equation:quadratic_inequality}
    \end{equation}
    where $C_1 := 1 + \delta_p$, $C_2 := 2 \sigma_r \langle E, u_n u_n^\top \rangle$ and $C_3 := 2 \sigma_r^2 (1 + \delta_p) + 2 \lambda_n$.
\end{proposition}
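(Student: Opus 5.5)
The plan is a direct second-order expansion of $h$ along the perturbation. Write $D := u_n q_r^\top$ and $\check{X} = \hat{X} + \hat{\rho} D$, and use the exact identity $h(X) = f(XX^\top)$. Since $f$ is quadratic in $M$,
\begin{equation*}
f(\hat{M} + \Delta) - f(\hat{M}) = \langle \nabla f(\hat{M}), \Delta\rangle + \tfrac{1}{2}\|\mathcal{A}(\Delta)\|_2^2
\end{equation*}
holds exactly, with $\hat{M} := \hat{X}\hat{X}^\top$ and $\Delta := \check{X}\check{X}^\top - \hat{X}\hat{X}^\top$. So it suffices to compute $\Delta$ and bound the two terms.

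First I simplify $\Delta$ using the thin SVD of $\hat{X}$ from Section~\ref{subsec:notation}. Orthonormality of the $q_\phi$ gives $\hat{X} q_r = \sigma_r v_r$, hence $\hat{X}D^\top = \sigma_r v_r u_n^\top$. Therefore
\begin{equation*}
\Delta = \hat{\rho}\sigma_r S + \hat{\rho}^2 u_n u_n^\top, \qquad S := u_n v_r^\top + v_r u_n^\top .
\end{equation*}

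For the linear term, $\langle \nabla f(\hat{M}), S\rangle = 2 u_n^\top \nabla f(\hat{M}) v_r = 2\lambda_n u_n^\top v_r = 0$. This uses that $u_n$ is an eigenvector of $\nabla f(\hat{M})$ and the stated fact $u_n \perp v_\phi$. Also $\langle \nabla f(\hat{M}), u_n u_n^\top\rangle = \lambda_n$. So the linear term is exactly $\hat{\rho}^2\lambda_n$.

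Next I expand the quadratic term as
\begin{equation*}
\tfrac{1}{2}\|\mathcal{A}(\Delta)\|_2^2 = \tfrac{1}{2}\hat{\rho}^2\sigma_r^2\|\mathcal{A}(S)\|_2^2 + \hat{\rho}^3\sigma_r\langle \mathcal{A}(S), \mathcal{A}(u_nu_n^\top)\rangle + \tfrac{1}{2}\hat{\rho}^4\|\mathcal{A}(u_nu_n^\top)\|_2^2 .
\end{equation*}
The cross term equals $\langle \mathcal{A}^*\mathcal{A}(S), u_nu_n^\top\rangle = \langle E, u_n u_n^\top\rangle$ exactly, by the definition of $E$.

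The two squared norms are bounded by the upper RIP inequality of Definition~\ref{definition:rip}, which requires $p \ge 2$ since $S$ has rank at most $2$. Because $u_n \perp v_r$ are unit vectors, $\|S\|_F^2 = 2$ and $\|u_nu_n^\top\|_F^2 = 1$. This gives $\|\mathcal{A}(S)\|^2 \le 2(1+\delta_p)$ and $\|\mathcal{A}(u_nu_n^\top)\|^2 \le 1+\delta_p$.

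Collecting the terms,
\begin{equation*}
h(\check{X}) - h(\hat{X}) \le \tfrac{\hat{\rho}^2}{2}\bigl[(1+\delta_p)\hat{\rho}^2 + 2\sigma_r\langle E,u_nu_n^\top\rangle\hat{\rho} + 2\sigma_r^2(1+\delta_p) + 2\lambda_n\bigr].
\end{equation*}
Since $\hat{\rho} > 0$, the bracket being negative, which is exactly \eqref{equation:quadratic_inequality}, yields strict descent.

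The only delicate points are bookkeeping. One must confirm that $\hat{X}q_r = \sigma_r v_r$ holds with the paper's thin-SVD convention, and that the orthogonality $u_n \perp v_r$ (from first-order stationarity) kills the linear $\hat{\rho}\sigma_r$ term. Without it, an $O(\hat{\rho})$ term would appear and the inequality would not be quadratic. One must also note that the cross term is kept exactly rather than bounded, which is why $C_2$ carries $\langle E, u_nu_n^\top\rangle$ with its sign.
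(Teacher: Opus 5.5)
Your proposal is correct and follows essentially the same argument as the paper: the same decomposition $\check{X}\check{X}^\top = \hat{X}\hat{X}^\top + \hat{\rho}\sigma_r(u_nv_r^\top + v_ru_n^\top) + \hat{\rho}^2 u_nu_n^\top$, the same use of $u_n \perp v_r$ (from stationarity) to kill the $O(\hat{\rho})$ term, the cross term kept exactly as $\langle E, u_nu_n^\top\rangle$, and the upper RIP bound on the two squared norms. Your exact quadratic expansion of $f$ around $\hat{X}\hat{X}^\top$ is a tidier bookkeeping of the paper's reduction to $S > T$ and its nine-term table $T_1,\dots,T_9$, and it yields the identical bound.
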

The proof is deferred to Appendix~\ref{append:proof_of_proposition_escape_inequality} and proceeds by substituting $\check{X}$ into the descent condition. Expanding the difference and collecting the terms with respect to $\hat{\rho}$ yields quadratic inequality \eqref{equation:quadratic_inequality}.

This escape inequality takes the form of a quadratic equation in $\hat{\rho}$.  
Since $1+\delta_p>0$, the corresponding parabola opens upward. The constant term $2 \sigma_r^2 \left(1 + \delta_p\right) + 2 \lambda_n > 0$ holds according to \cite[Lemma~2]{ma2023noisy}.  
Therefore, a valid real-valued solution for $\hat{\rho}$ exists if the discriminant $\Delta$ of Equation~\eqref{equation:quadratic_inequality} is positive:
\begin{equation}
    0 < \Delta 
    := 
	4 \sigma_r^2 
	\langle
	E, u_n u_n^\top
	\rangle^2
	-
	4 (1+\delta_p)
	[
	2 \sigma_r^2 (1+\delta_p) - 2 (-\lambda_n)
	].
	\label{equation:quadratic_inequality_discriminant}
\end{equation}
Rearranging the inequality yields that an escape is feasible when the \emph{Escape Feasibility Score} (EFS) exceeds 1, which is defined as follows.

\begin{definition}[Escape Feasibility Score (EFS)]
\label{definition:efs}
\begin{equation}
    \underbrace{
        \frac{-\lambda_n}{\sigma_r^2 (1+\delta_p)}
    }_{:=\mathrm{NCM}}
    +
    \underbrace{
        \frac{1}{2(1+\delta_p)^2}
        \langle
        E, u_n u_n^\top
        \rangle^2
    }_{:=\mathrm{AIC}}
    \label{equation:escape_feasibility_score}
	\tag{EFS}
\end{equation}
\end{definition}

As shown above, the EFS consists of two components: the Negative Curvature Margin (NCM) and the Alignment-Induced Curvature (AIC).
NCM quantifies the ratio between the smallest eigenvalue $\lambda_n$ ($\lambda_n<0$) of $\nabla f(\hat{X}\hat{X}^\top)$ and the smallest non-zero eigenvalue $\sigma_r^2$ of $\hat{X}\hat{X}^\top$. 
AIC characterizes the alignment between the hidden escape direction matrix $E$ and the rank-1 projection matrix $u_n u_n^\top$ formed from $\lambda_n$'s eigenvector $u_n$.
Next, we verify the deterministic part of Theorem~\ref{theorem:single-step-sod-informal}.
\begin{figure*}[h]
	\centering
	\includegraphics[width=0.60\textwidth]{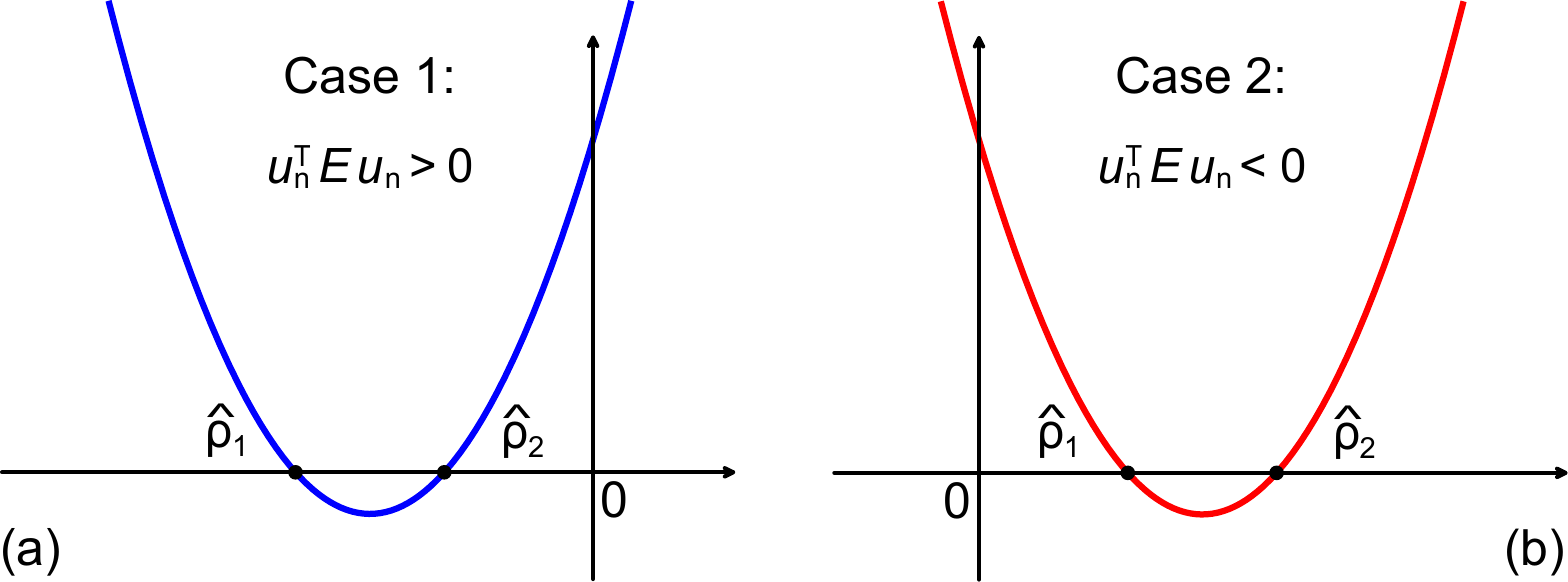}
	\caption{
		Visualization of quadratic inequality \eqref{equation:quadratic_inequality}.
	}
	\label{figure:quadratic_funcs}
\end{figure*}
\begin{proof}
	We now proceed to prove the deterministic part of Theorem~\ref{theorem:single-step-sod-formal} (as well as Theorem~\ref{theorem:single-step-sod-informal}).
	When the computed EFS value exceeds one, the discriminant satisfies $\Delta>0$, which guarantees that a real-valued $\hat{\rho}$ exists, leading to deterministic escape from the local minimum.
	
	We define the two roots of the quadratic equation~\eqref{equation:quadratic_inequality} as
    \begin{equation}
		\hat{\rho}_1 :=
		\frac{
			-2\sigma_r
			\langle
			E, u_n u_n^\top
			\rangle
			-
			\sqrt{\Delta}
		}
		{2(1+\delta_p)},
		\qquad
		\hat{\rho}_2 :=
		\frac{
			-2\sigma_r
			\langle
			E, u_n u_n^\top
			\rangle
			+
			\sqrt{\Delta}
		}
		{2(1+\delta_p)}.
		\label{equation:root1_root2}
    \end{equation}
	As depicted in Figure~\ref{figure:quadratic_funcs}, two distinct cases emerge based on the sign of the inner product $u_n^\top E u_n$:
	\begin{itemize}
		\item \textbf{Case 1:} If $\langle
		E, u_n u_n^\top
		\rangle>0$, then any $\hat{\rho}$ satisfying $\hat{\rho}_1<\hat{\rho}<\hat{\rho}_2<0$ leads to successful escape from $-\hat{X}$ via the update $\check{X} =
		-\hat{X} + \hat{\rho} u_n q_r^\top$.
		
		\item \textbf{Case 2:} If $\langle
		E, u_n u_n^\top
		\rangle < 0$, then any $\hat{\rho}$ satisfying $0<\hat{\rho}_1<\hat{\rho}<\hat{\rho}_2$ enables successful escape from $\hat{X}$ via the update $\check{X} = \hat{X} + \hat{\rho} u_n q_r^\top$.
	\end{itemize}
	This completes the proof.
\end{proof}

\subsection{Probabilistic Escape Analysis}
\label{subsec:probabilistic-escape-analysis}

\subsubsection{Gaussian Ensemble Modeling}
Consider independent symmetric measurement matrices $\{A_i\}_{i=1}^m$ drawn from a Gaussian ensemble, which is frequently applied in RIP-related proofs \citep{bah2010improved}:
\begin{equation}
    (A_i)_{jk} \sim 
    \begin{cases}
        \mathcal{N}(0, m^{-1}), & j = k, \\
        \mathcal{N}(0, (2m)^{-1}), & j < k,
    \end{cases}
    \label{equation:gaussian_ensemble}
\end{equation}
with symmetry imposed by $(A_i)_{jk} = (A_i)_{kj}$ for all $j \neq k$.
This choice ensures that, for arbitrary $M\in\mathbb{R}^{n\times n}$ we have $\mathbb{E}[\|\mathcal{A}(M)\|_2^2] = \|M\|_F^2$.
Let $P = u_n v_r^\top + v_r u_n^\top$, $Q = u_n u_n^\top$, $\psi_i = \langle A_i, P \rangle$ and $\omega_i = \langle A_i, Q \rangle$.
The AIC term in \eqref{equation:escape_feasibility_score} is obtained by defining $S_{\mathcal{A}}(P, Q) := \sum_{i=1}^{m} \psi_i \omega_i$, which yields
$
    \mathrm{AIC} 
	=
	\frac{1}{2(1+\delta_p)^2}
	S_{\mathcal{A}}^2(P,Q)
$.
The Gaussian ensemble is zero-mean, i.e. $\mathbb{E}[\psi_i] = \mathbb{E}[\omega_i] = 0$.  
By linearity of expectation and the orthogonality of $P$ and $Q$, we obtain
\begin{equation*}
    \mathbb{E}[S_{\mathcal{A}}(P, Q)]
	= 
	m \cdot \mathbb{E}[\langle A, P \rangle \langle A, Q \rangle] = 
	m \cdot m^{-1} \langle P, Q \rangle
	= 
	\langle P, Q \rangle
	=0.
\end{equation*}
Using the independence of $\{A_i\}_{i=1}^m$ and Isserlis' theorem \citep{isserlis1918formula} for Gaussian random variables, we compute the second moment as
\begin{equation*}
    \mathbb{E}[S_{\mathcal{A}}^2(P, Q)]
    =
    m \cdot \mathbb{E}[\langle A, P \rangle^2 \langle A, Q \rangle^2] 
	= 	m  \Big( 
	\mathbb{E}[\langle A, P \rangle^2]
	\mathbb{E}[\langle A, Q \rangle^2]
	+ 
	2\,\mathbb{E}[\langle A, P \rangle \langle A, Q \rangle]^2
	\Big).
\end{equation*}
Under the isotropy assumption, it follows that $\mathbb{E}[\langle A, P \rangle^2] = \|P\|_F^2 / m = 2 / m$ and $\mathbb{E}[\langle A, Q \rangle^2] = \|Q\|_F^2 / m = 1 / m$. Moreover, since $\langle P, Q \rangle = 0$, the cross term vanishes. Therefore,
\begin{equation*}
	\mathbb{E}[S_{\mathcal{A}}^2(P, Q)] = 
    m \cdot \mathbb{E}[\langle A, P \rangle^2 \langle A, Q \rangle^2] = 2m^{-1}.
\end{equation*}
To evaluate the concentration properties of $S_{\mathcal{A}}(P, Q)$, we compute its fourth moment.  
Since $\langle A, P \rangle$ and $\langle A, Q \rangle$ are independent, we have $\psi_i \sim \mathcal{N}(0, 2m^{-1})$ and $\omega_i \sim \mathcal{N}(0, m^{-1})$.  
Therefore, we obtain
$
    \mathbb{E}[\psi_i^2 \omega_i^2] 
    = \mathbb{E}[\psi_i^2] \mathbb{E}[\omega_i^2] = (2m^{-1})(m^{-1}) = 2m^{-2}
$
and 
$
    \mathbb{E}[\psi_i^4 \omega_i^4] 
    = \mathbb{E}[\psi_i^4] \mathbb{E}[\omega_i^4] = [3 (2m^{-1})^2] [3 (m^{-1})^2] = 36m^{-4}
$.
Using mutual independence across the indices, the fourth moment $\mathbb{E}[S_{\mathcal{A}}^4(P, Q)]$ becomes
\begin{equation*}
	\sum_{i=1}^m \mathbb{E}[\psi_i^4 \omega_i^4] + 3 \sum_{i \ne j} \mathbb{E}[\psi_i^2 \omega_i^2] \mathbb{E}[\psi_j^2 \omega_j^2]
	= m \cdot 36m^{-4} + 3m(m-1) \cdot (2m^{-2})^2
	= \frac{12(m + 2)}{m^3}.
\end{equation*}

\subsubsection{Statistical Lower Bound Analysis for EFS}
To obtain a probabilistic guarantee for escaping local minima $\hat{X}$, we make use of the Paley-Zygmund inequality \citep{paley1932note}.  
For any random variable $V \ge 0$ with finite second moment and any $0 < \theta < 1$, the inequality states:
\begin{equation}
	\mathbb{P}(V \geq \theta \mathbb{E}[V]) 
	\geq 
	(1 - \theta)^2 
	\frac{\mathbb{E}[V]^2}{\mathbb{E}[V^2]}.
	\label{equation:pz_inequality}
\end{equation}

The first and second moments of the AIC term can be written as
\begin{equation*}
    \begin{aligned}
        \mathbb{E}[\mathrm{AIC}] 
        &= \frac{1}{2(1+\delta_p)^2} \mathbb{E}[S_{\mathcal{A}}^2(P, Q)] = \frac{1}{(1+\delta_p)^2 m}, 
        \\
        \mathbb{E}[(\mathrm{AIC})^2] 
        &= \frac{1}{4(1+\delta_p)^4} \mathbb{E}[S_{\mathcal{A}}^4(P, Q)] = \frac{3(m + 2)}{(1+\delta_p)^4 m^3}.
    \end{aligned}
\end{equation*}

Applying the inequality~\eqref{equation:pz_inequality} to $V = \mathrm{AIC}$, we compute the moment ratio as
$
    \frac{\mathbb{E}[\mathrm{AIC}]^2}
	{\mathbb{E}[\mathrm{AIC}^2]} 
	= 
	\frac{m}{3(m + 2)}
$.
Hence, for any $0 < \theta < 1$, the Paley-Zygmund bound yields
\begin{equation*}
	\mathbb{P}\left( 
	\mathrm{AIC} \geq \theta \cdot \frac{1}{(1+\delta_p)^2 m} 
	\right) 
	\geq 
	(1 - \theta)^2 \cdot \frac{m}{3(m + 2)}.
\end{equation*}
As $m \to \infty$ and $\theta \to 0$, this lower bound approaches $1/3$. However, the bound is smaller for finite $m$, reflecting finite-sample effects in the ensemble behavior.
To guarantee escape from attraction basin of $\hat{X}$, the EFS must exceed $1$, giving us $\mathrm{AIC} > 1 - \mathrm{NCM}$.
According to \cite[Theorem~5.4]{ma2023over}, the NCM satisfies
$
	1 - \mathrm{NCM} 
	\le 
	1 - 
	\frac{1 - \delta_p}{1 + \delta_p} 
	\cdot 
	\frac{\| \hat{X}\hat{X}^\top - M^\star \|_F^2}
	{2\sigma_r^2 \operatorname{Tr}(M^\star)}
$.

To establish the lower bound on the probability of escape, we define the interval
$
    U_\theta := 
	( 0, \frac{1}{(1+\delta_p)^2 m} )
$,
which represents the range of values that $1 - \mathrm{NCM}$ can assume in comparison with the typical scale of the AIC term. Next, we verify the probabilistic part of Theorem~\ref{theorem:single-step-sod-formal}.

\begin{proof}
	We now proceed to prove the probabilistic part of Theorem~\ref{theorem:single-step-sod-formal} (as well as discussion after Theorem~\ref{theorem:single-step-sod-informal}).
	As illustrated in Figure~\ref{figure:r1_r2}, we examine three distinct regimes according to the relationship between $1 - \mathrm{NCM}$ and $U_\theta$:
	\begin{itemize}
		\item \textbf{Case 1: $1 - \mathrm{NCM} \le 0$.}  
		In this case, if we define 
        \begin{equation}
            r_2 := 2 \frac{1 + \delta_p}{1 - \delta_p} \sigma_r^2 \operatorname{Tr}(M^\star),
            \label{definition:r_2}
        \end{equation}
        then
		$\| \hat{X}\hat{X}^\top - M^\star \|_F^2 \ge r_2$.
		Since $\mathrm{AIC} \ge 0$ almost surely, $\mathrm{AIC} > 1 - \mathrm{NCM}$ always holds, leading to 
		$\mathbb{P}(\mathrm{EFS} > 1) = 1$.  
		This corresponds to iterations that are significantly away from $M^\star$, where escape is guaranteed regardless of random perturbations.
		
		\item \textbf{Case 2: $1 - \mathrm{NCM} > (1+\delta_p)^{-2}m^{-1}$.}  
		In this case, if we define 
        \begin{equation}
            r_1 := 2 
			\left[
			1 - \frac{1}{(1+\delta_p)^2 m} 
			\right]
			\frac{1 + \delta_p}{1 - \delta_p} 
			\sigma_r^2 \operatorname{Tr}(M^\star),
            \label{definition:r_1}
        \end{equation}
        then
        $\| \hat{X}\hat{X}^\top - M^\star \|_F^2 < r_1$.
		In this situation, $\hat{X}$ lies very close to $Z$.  
		The threshold required for AIC to trigger SOD escape becomes larger than its mean value, 
		so the present Paley-Zygmund argument provides no positive lower bound. Although this does not imply that the Gaussian tail probability is exactly zero,
		single-step SOD is disabled in this inner region and the algorithmic trigger probability is therefore zero by conservative design.
		
		\item \textbf{Case 3: $1 - \mathrm{NCM} \in U_\theta$.}  
		Here we parameterize the gap as
		$
			1 - \mathrm{NCM} 
			= 
			\frac{\theta}{(1+\delta_p)^2 m}
		$
        with $\theta \in (0,1)$.
		Therefore, the deviation is confined within $r_1 \le 
		\| \hat{X}\hat{X}^\top - M^\star \|_F^2 
		\le r_2$ because
        $
            \| \hat{X}\hat{X}^\top - M^\star \|_F^2 
			=
			2 \left[
			1 - \frac{\theta}{(1+\delta_p)^2 m} 
			\right]
			\frac{1 + \delta_p}{1 - \delta_p} 
			\sigma_r^2 \operatorname{Tr}(M^\star)
        $.   
		Within this bounded region, the escape probability is given by
        \begin{equation*}
			\mathbb{P}(\mathrm{AIC} > 1 - \mathrm{NCM}) 
			= \mathbb{P} 
			\left( 
				\mathrm{AIC} > 
				\frac{\theta}{(1+\delta_p)^2 m} 
			\right) 
			\geq (1 - \theta)^2 \cdot 
			\frac{m}{3(m + 2)}.
        \end{equation*}
	\end{itemize}
	This completes the proof.
\end{proof}
\begin{remark}
	When $r > r_2$, our characterization aligns with \cite[Theorem~1]{ma2024absence}.  
	Despite different derivations, both results show that $u_n q_r^\top$ is a valid descent direction in the matrix landscape, implying that $\hat{X}$ in this case is a saddle rather than a local minimum.  
	By contrast, the bound on the inner radius $r_1$ is not tight.  
	In particular, \cite[Theorem~4]{ma2024algorithmic} shows that under full tensor lifting, where all components are lifted rather than simulating over-parameterization, $r_1$ can approach $r_2/2$.  
	This gap indicates that our single-step SOD Escape captures only a subset of the geometric structure revealed by complete tensor lifting.
\end{remark}

\subsection{Formal Theorem and Limitations of the Single-Step SOD Escape}

Building on Sections~\ref{subsec:rank1-approx-for-single-step}, \ref{subsec:escape-inequality-and-efs}, and \ref{subsec:probabilistic-escape-analysis}, we can reformulate the informal version of Theorem~\ref{theorem:single-step-sod-informal} into its formal counterpart, Theorem~\ref{theorem:single-step-sod-formal}.
The single-step SOD escape strategy can be concluded as
\begin{equation}
	\hat{X}
	\xrightarrow{\text{Single-step SOD}}
	\hat{X} + \hat{\rho} u_n q_r^\top,
	\label{equation:single-step-sod}
\end{equation}

\begin{theorem}[Single-step SOD Escape, formal]\label{theorem:single-step-sod-formal}
	Consider the problem of escaping from a local minimum $\hat{X}$ of the objective function \eqref{equation:matrix_objfunc}.
	If the \ref{equation:escape_feasibility_score} defined in Definition~\ref{definition:efs} exceeds 1, then there exist two constants based on
	discriminant $\Delta$ (Equation \ref{equation:quadratic_inequality_discriminant})
    \begin{equation*}
		\hat{\rho}_1 :=
		\frac{
			-2\sigma_r
			\langle
			E, u_n u_n^\top
			\rangle
			-
			\sqrt{\Delta}
		}
		{2(1+\delta_p)}, 
		\qquad
		\hat{\rho}_2 :=
		\frac{
			-2\sigma_r
			\langle
			E, u_n u_n^\top
			\rangle
			+
			\sqrt{\Delta}
		}
		{2(1+\delta_p)},
    \end{equation*}
	such that for any $\hat{\rho}$ satisfying $\hat{\rho}_1 < \hat{\rho} < \hat{\rho}_2$, $\hat{X} + \hat{\rho} u_n q_r^\top$ is a valid escape point under spectral norm (Equation~\ref{equation:spectral_norm_approx}), ensuring that $h(\check{X}) < h(\hat{X})$. This indicates that the $\check{X}$ leaves the attraction basin of $\hat{X}$.
	Furthermore, suppose $\{A_i\}_{i=1}^m$ are drawn from a symmetric Gaussian ensemble as defined in Equation~\ref{equation:gaussian_ensemble}. Then, the escape behavior can be categorized into three cases, characterized by two concentric spheres with radii $r_1$ and $r_2$ defined as
    \begin{equation*}
		r_1 = 2 
		\left[
		1 - \frac{1}{(1+\delta_p)^2 m} 
		\right]
		\frac{1 + \delta_p}{1 - \delta_p} 
		\sigma_r^2 \operatorname{Tr}(M^\star),
		\qquad
		r_2 = 2
		\frac{1 + \delta_p}{1 - \delta_p} 
		\sigma_r^2 \operatorname{Tr}(M^\star).
    \end{equation*}
	\textbf{Case 1:}
	If $\| \hat{X}\hat{X}^\top - M^* \|_F^2 \geq r_2$, then $\mathbb{P}(\mathrm{EFS} > 1) = 1$.
	
	\textbf{Case 2:}
	If $\| \hat{X}\hat{X}^\top - M^* \|_F^2 < r_1$, then $\mathbb{P}(\mathrm{EFS} > 1) = 0$.
	
	\textbf{Case 3:}
	If $\| \hat{X}\hat{X}^\top - M^* \|_F^2 = \theta r_1 + (1-\theta) r_2$ for some $\theta \in (0,1]$, then the probability of escape satisfies
	$
		\mathbb{P}(\mathrm{EFS} > 1) 
		\geq 
		(1 - \theta)^2 \cdot \frac{m}{3(m + 2)}
	$.
		
		
\end{theorem}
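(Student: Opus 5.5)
The proof assembles three ingredients already established: the rank-1 structure of Proposition~\ref{proposition:single_step_rank1_tensor_approx}, the quadratic escape inequality of Proposition~\ref{proposition:escape_inequality}, and the Gaussian moment computations of Section~\ref{subsec:probabilistic-escape-analysis}.

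\textbf{Deterministic part.} First, invoke Proposition~\ref{proposition:single_step_rank1_tensor_approx} and the subsequent remark. Choosing $\alpha=1$ and $\sigma_r\beta=\rho^{1/l}$ gives $\check X=\hat X+\hat\rho u_nq_r^\top$, and this point approximates $\mathbf{w}_{\mathrm{escape}}$ in $\|\cdot\|_{S\Join R}$. That supplies the ``valid escape point under spectral norm'' clause. Next, expand $h(\hat X+\hat\rho u_nq_r^\top)-h(\hat X)$ in powers of $\hat\rho$, using $\hat X q_r=\sigma_r v_r$ and $u_n\perp v_\phi$. The linear term vanishes by stationarity. The second-order terms involve $\lambda_n$ and $\langle E,u_nu_n^\top\rangle$, and the quartic term $\|\mathcal A(u_nu_n^\top)\|^2$ is bounded by $1+\delta_p$ via RIP. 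Dividing by $\hat\rho^2$ yields the inequality of Proposition~\ref{proposition:escape_inequality}. Because $C_1>0$ and $C_3>0$ (by the cited lemma of \cite{ma2023noisy}), the inequality has a nonempty solution set exactly when $\Delta>0$. Rearranging \eqref{equation:quadratic_inequality_discriminant} shows that $\Delta>0$ is equivalent to $\mathrm{EFS}>1$. Every $\hat\rho$ strictly between the roots $\hat\rho_{1,2}$ then gives $h(\check X)<h(\hat X)$.

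Since $C_3>0$, both roots share the sign of $-C_2$. When $\langle E,u_nu_n^\top\rangle>0$ the interval lies in the negative reals, and I will note that this is harmless: $h$ is invariant under $X\mapsto -X$, and the direction $u_nq_r^\top$ may be replaced by its negative. Either way a valid step exists.

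\textbf{Probabilistic part.} Write $\mathrm{EFS}>1\iff \mathrm{AIC}>1-\mathrm{NCM}$, and bound $\mathrm{NCM}$ from below using \cite[Theorem~5.4]{ma2023over} in terms of $d:=\|\hat X\hat X^\top-M^\star\|_F^2$. This makes $1-\mathrm{NCM}\le 1-d/r_2$, which is a monotone map from $d$ to the threshold.

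\begin{itemize}
\item \textbf{Case 1} ($d\ge r_2$): the threshold is $\le0$. Since $\mathrm{AIC}\ge0$ almost surely, the probability is $1$.
\item \textbf{Case 2} ($d<r_1$): the threshold exceeds $\mathbb E[\mathrm{AIC}]=((1+\delta_p)^2m)^{-1}$. As in the preceding discussion, the escape is declared disabled by design rather than proved to have probability zero.
\item \textbf{Case 3}: parameterize $d$ by $\theta$, so the threshold equals $\theta\,\mathbb E[\mathrm{AIC}]$. Apply Paley--Zygmund \eqref{equation:pz_inequality} with the computed moment ratio $m/(3(m+2))$.
\end{itemize}

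The main obstacle is the quadratic expansion in the deterministic part. Extracting exactly $C_2=2\sigma_r\langle E,u_nu_n^\top\rangle$ and $C_3=2\sigma_r^2(1+\delta_p)+2\lambda_n$ requires care with the cross terms $\langle \mathcal A(u_nv_r^\top+v_ru_n^\top),\mathcal A(u_nu_n^\top)\rangle$. It also requires using RIP upper bounds only where they preserve the direction of the inequality, so the result is a sufficient condition. A second delicate point is matching the $\theta$-parameterization of $d$ in Case~3 with the threshold; this matching relies on the linear relation between $d$ and $1-\mathrm{NCM}$ bound.
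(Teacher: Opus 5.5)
Your proposal follows the paper's proof essentially step for step. It uses the rank-1 structure of Proposition~\ref{proposition:single_step_rank1_tensor_approx} with $\alpha=1$, $\sigma_r\beta=\rho^{1/l}$, then the quadratic escape inequality with $C_1,C_3>0$ and $\Delta>0\iff\mathrm{EFS}>1$, then the bound $1-\mathrm{NCM}\le 1-d/r_2$, the three regimes, and Paley--Zygmund with ratio $m/(3(m+2))$.

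Writing $h(\check X)-h(\hat X)=\langle\nabla f(\hat X\hat X^\top),D\rangle+\tfrac12\|\mathcal A(D)\|_2^2$, with $D=\hat\rho\sigma_r P+\hat\rho^2Q$, $P=u_nv_r^\top+v_ru_n^\top$ and $Q=u_nu_n^\top$, is tidier bookkeeping than the paper's nine-term $S>T$ split. Your order labels need fixing, though:
\begin{itemize}
\item The $\hat\rho^2$ coefficient is $\lambda_n+\tfrac12\sigma_r^2\|\mathcal A(P)\|_2^2$, and the RIP bound $2(1+\delta_p)$ on this term is what produces $C_3$.
\item $\langle E,Q\rangle$ enters only through the cubic term $\hat\rho^3\sigma_r\langle\mathcal A(P),\mathcal A(Q)\rangle$.
\end{itemize}
Your handling of the sign of $\langle E,Q\rangle$ is correct, because the expansion holds for every real $\hat\rho\neq0$. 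It is also cleaner than the paper's Case~1. There, pairing $-\hat X+\hat\rho u_nq_r^\top$ with $\hat\rho\in(\hat\rho_1,\hat\rho_2)\subset(-\infty,0)$ has the sign reversed; one needs $-\hat X-\hat\rho u_nq_r^\top$.

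The one genuine gap is Case~2, and you share it with the paper.
\begin{itemize}
\item The inequality $1-\mathrm{NCM}\le 1-d/r_2$ bounds the threshold only from above. So $d<r_1$ gives $1-d/r_2>\mathbb E[\mathrm{AIC}]$, but says nothing about $1-\mathrm{NCM}$ itself, and your sentence ``the threshold exceeds $\mathbb E[\mathrm{AIC}]$'' does not follow. The paper argues the converse (threshold $>\mathbb E[\mathrm{AIC}]\Rightarrow d<r_1$), which is valid but is not the theorem's claim.
\item Even granting it, $\mathrm{AIC}\propto S_{\mathcal A}^2(P,Q)$ has unbounded support for fixed $P,Q$. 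The escape probability would then be positive, not $0$.
\end{itemize}
So Case~2 is only the algorithmic convention you describe, not a consequence of the argument. Cases~1 and~3 survive precisely because they need only an upper bound on the threshold. In Case~3 you should therefore say the threshold is \emph{at most} $\theta\,\mathbb E[\mathrm{AIC}]$, not that it equals it.

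Both you and the paper also compute the Gaussian moments as if $P$, $Q$ and $\mathrm{NCM}$ were fixed. In fact $\hat X,u_n,v_r,\lambda_n$ are functions of the sampled $\{A_i\}$, so that modeling step is inherited rather than proved.
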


\begin{proof}
	The proof follows directly from Proposition~\ref{proposition:single_step_rank1_tensor_approx}, Proposition~\ref{proposition:escape_inequality}, and the analysis (as well as its proof) presented in Section~\ref{subsec:probabilistic-escape-analysis}.
\end{proof}

While conceptually simple, single-step SOD escape mechanism has several limitations. It provides only an inexact approximation under the tensor spectral norm, which is restricted to the subspace $R$. Thus, the method strongly relies on the EFS exceeding 1. These motivate us to develop a multi-step SOD escape mechanism.

\section{Multi-step SOD Escape (General Case)}
\label{sec5}

\begin{algorithm}[h]
	\caption{Truncated Projected Gradient Descent (TPGD)}%
	\label{algorithm:tpgd}%
	\begin{algorithmic}[1]%
		\linespread{1.2}\selectfont%
		\Require Select $1 > \eta, \rho > 0$ sufficiently small. Set $\tilde{\mathbf{w}}^{(0)} \gets \mathbf{w}_\mathrm{escape}$
		\For{$t=0,1,2\ldots$}
		\State\label{state:gd-step}%
		Vanilla Gradient Descent Step: $\mathbf{v}^{(t)} \gets \tilde{\mathbf{w}}^{(t)} - \eta \nabla h^l (\tilde{\mathbf{w}}^{(t)})$
		\State\label{state:projection-step}%
		Projection Step: $\mathbf{w}^{(t+1)} \gets \Pi_S(\mathbf{v}^{(t)})$
		\State\label{state:truncation-step}%
		Truncation Step: (truncation residual is defined in \eqref{equation:tilde_w_1} and \eqref{equation:tilde_w_t+1})
        \begin{equation*}
            \tilde{\mathbf{w}}^{(t+1)} 
            \gets \mathbf{w}^{(t+1)} - \text{truncation residual}.
        \end{equation*}
		\EndFor
	\end{algorithmic}
\end{algorithm}

As explained previously, single-step SOD will only be successful when $\mathbf{w}_\mathrm{escape}$ has a low-dimensional counterpart, as guaranteed by a \ref{equation:escape_feasibility_score} score larger than 1. However, this is usually not the case, and we need to choose a $\mathbf{w}_\mathrm{escape}$ that corresponds to an iterate further along the optimization trajectory in the lifted tensor space. Nevertheless, without constraining the trajectory in the tensor space, the tensor rank can grow rapidly, making projection back to the matrix space nearly intractable. To address this issue, we carefully design a low-dimensional subspace in which the trajectory evolves, ensuring that it both decreases the lifted loss (enabling successful escape) and remains accessible to analytical projection. The subspace $S$ is formally introduced below:
\begin{equation}
	S = \operatorname{span}
	\left\{
	\underbrace{
		\operatorname{vec}(\hat{X})^{\otimes l}
	}_{:=\mathbf{a}\in\mathbb{R}^{nr\circ l}},
	\underbrace{
		\operatorname{vec}(u_n q_r^\top)^{\otimes l}
	}_{:=\mathbf{b}\in\mathbb{R}^{nr\circ l}},
	\underbrace{
		\operatorname{vec}(E \hat{X})^{\otimes l}
	}_{:=\mathbf{c}\in\mathbb{R}^{nr\circ l}}
	\right\},
	\label{equation:tensor_subspace}
\end{equation}
where $\mathbf{a},\mathbf{b},\mathbf{c}$ are order-$l$ hypercubic tensors ($l\geq 3$ is odd), and $E:=\mathcal{A}^*\mathcal{A}(
u_n v_r^\top + v_r u_n^\top)$.
The multi-step SOD escape simulates the Truncated Projected Gradient Descent (TPGD) procedure in the tensor space, originating from the matrix space. This procedure is described in Algorithm~\ref{algorithm:tpgd}.

As illustrated in Figure~\ref{figure:sod-escape}, TPGD adapts to the descent condition. Specifically, after a sufficiently large number of iterations $t$, we can guarantee that $h^l(\tilde{\mathbf{w}}^{(t)})<h^l(\hat{\mathbf{w}})$.
More importantly, TPGD yields a closed-form expression for $\tilde{\mathbf{w}}^{(t)}$, given by
$
\tilde{\mathbf{w}}^{(t)} = \tilde{\alpha}^{(t)}\mathbf{a} + \tilde{\beta}^{(t)}\mathbf{b} + \tilde{\gamma}^{(t)}\mathbf{c}
$, where $\tilde{\alpha}^{(t)}=1$, and $\tilde{\beta}^{(t)},\tilde{\gamma}^{(t)}$ are $t$-dependent coefficients. This structure enables us to deterministically amplify either (with its coefficient):
\begin{itemize}
	\item the escape component $\mathbf{b}$ identified in \cite[Theorem 5.4]{ma2023over}, or
	
	\item the principal superposition term $\mathbf{c}$ derived from the single-step analysis,
\end{itemize}
while simultaneously discarding other superposition components. 
Unlike the single-step warm-up which depends on \ref{equation:escape_feasibility_score}, our multi-step framework leverages the algebraic structure of $S$ and the closed-form coefficient dynamics to ensure successful escape. 
The entire section constitutes the proof of Theorem~\ref{theorem:multi-step-sod-informal}.
\begin{figure*}[h]
	\centering
	\includegraphics[width=1.0\textwidth]{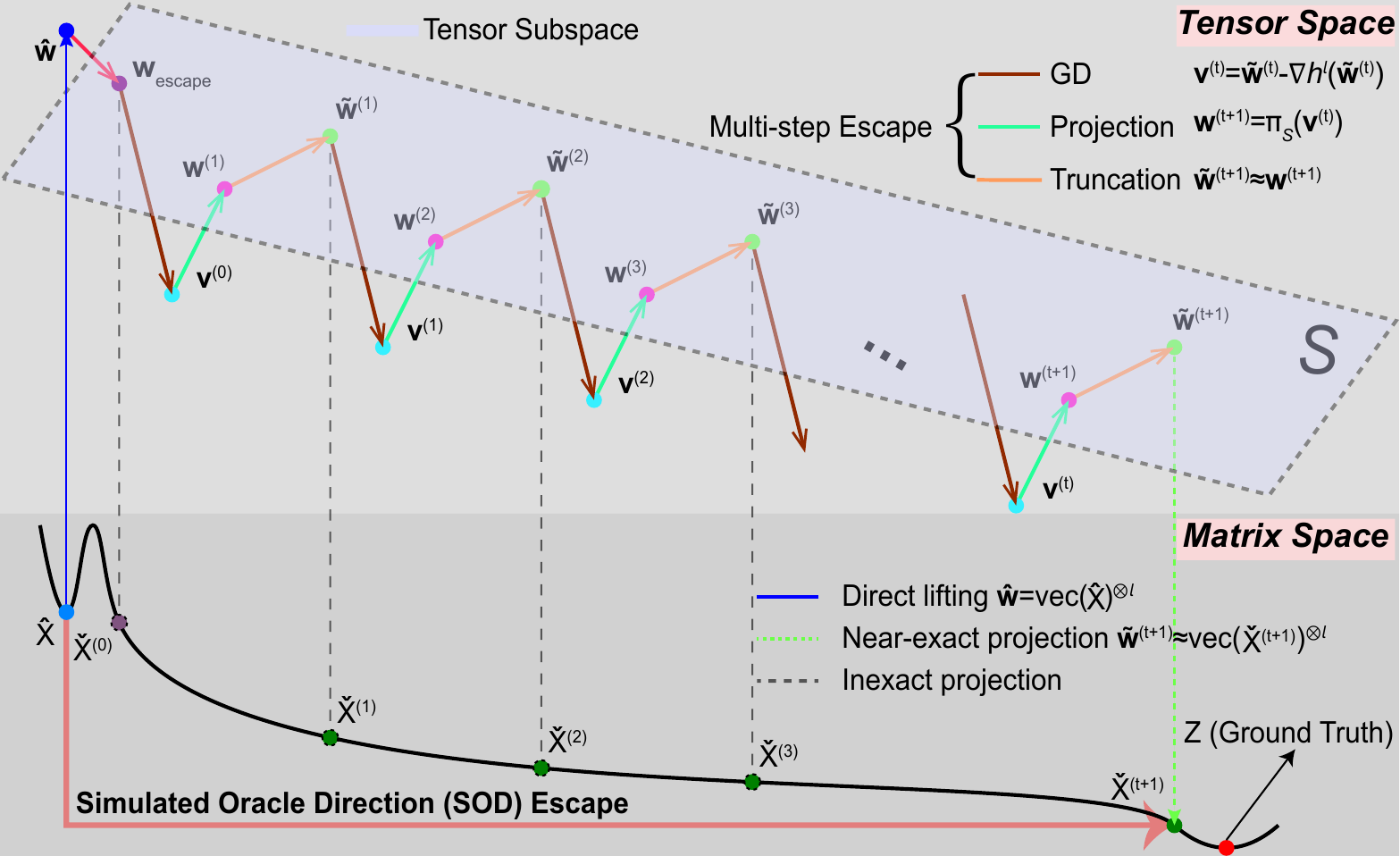}
	\caption{
		Simulating multi-step tensor-space PGD with truncation after each step, yielding a closed-form iterate $\check{X}^{(t)}$ that escapes the basin of $\hat{X}$ along the SOD direction (red transparent arrow).}
	\label{figure:sod-escape}
\end{figure*}

Section~\ref{subsec:projection-after-escape-and-PGD-preparation} introduces preliminary concepts for PGD. 
Section~\ref{subsec:closed_form_coefficient_dynamics_induced_by_tpgd} derives the closed-form expressions for the coefficient (i.e., $\mathbf{a}, \mathbf{b}, \mathbf{c}$) dynamics by using a mathematical induction framework.
The final two Section~\ref{subsec:sufficient-conditions-for-objective-decrease-under-truncation-error} and Section~\ref{subsec:dominance-criteria-for-the-coefficient-dynamics}, analyze conditions that guarantee successful escape via the multi-step SOD escape mechanism. 
In Section~\ref{subsec:sufficient-conditions-for-objective-decrease-under-truncation-error}, we establish a sufficient condition under which the objective function \eqref{equation:tensor_objfunc_h} decreases, even in the presence of truncation errors. Section~\ref{subsec:dominance-criteria-for-the-coefficient-dynamics} identifies conditions under which either $\tilde{\beta}^{(t)}\mathbf{b}$ or $\tilde{\gamma}^{(t)}\mathbf{c}$ dominates among the components $\{\tilde{\alpha}^{(t)}\mathbf{a}, \tilde{\beta}^{(t)}\mathbf{b}, \tilde{\gamma}^{(t)}\mathbf{c}\}$.

\subsection{Projection After Escape and PGD Preparation}
\label{subsec:projection-after-escape-and-PGD-preparation}

According to the definition of the subspace $S$ in Equation~\eqref{equation:tensor_subspace}, the corresponding projection matrix onto $S$ is defined as:
\begin{equation}
	P_S := C (C^\top C)^{-1} C^\top
	\in\mathbb{R}^{nrl\times nrl},
	\label{equation:projection matrix}
\end{equation}
where $C = [\operatorname{stack}(\mathbf{a}), \operatorname{stack}(\mathbf{b}), \operatorname{stack}(\mathbf{c})]
\in\mathbb{R}^{nrl\times 3}$.
Given any tensor $\mathbf{v}\in\mathbb{R}^{nr\circ l}$, its projection onto subspace $S$ is
$
	\Pi_S(\mathbf{v}) = \operatorname{unstack}
	(P_S\operatorname{stack}(\mathbf{v}))
$,

At each iteration $t$, the update rule of tensor space PGD is 
\begin{equation*}
	\mathbf{v}^{(t)} \gets \mathbf{w}^{(t)} - \eta \nabla h^l(\mathbf{w}^{(t)}),
	\qquad
	\mathbf{w}^{(t+1)} \gets \Pi_S(\mathbf{v}^{(t)}),
\end{equation*}
where $\mathbf{v}^{(t)}$ is the result of a vanilla GD on the smooth component $h^l$ with step size $\eta>0$.
The TPGD procedure is initialized from the post-escape tensor defined in Equation~\eqref{equation:w_escape}:
$
    \tilde{\mathbf{w}}^{(0)} \gets 
	\mathbf{w}_\text{escape} 
$,   
where $\rho>0$ is the escape step size. If no TPGD iterations are performed (i.e., zero steps), this reduces to the single-step SOD escape case.

To simplify the notation involved in tensor gradient derivations, we introduce the auxiliary function
\begin{equation}
	\mathcal{M}(\mathbf{s}, \mathbf{t}) 
	= 
	\langle
	\mathbf{P}(\mathbf{s}), \mathbf{P}(\mathbf{t})
	\rangle_{2*[l]}
	=
	\langle
	\langle
	\mathbf{P}^{\otimes l}, \mathbf{s}
	\rangle_{3*[l]},
	\langle
	\mathbf{P}^{\otimes l}, \mathbf{t}
	\rangle_{3*[l]}
	\rangle_{2*[l]},
    \label{equation:auxiliary-tensor-function}
\end{equation}
and define its self-pairing form as $\tilde{\mathcal{M}}(\mathbf{s}) := \mathcal{M}(\mathbf{s}, \mathbf{s})$.
Here, the permutation tensor 
$\mathbf{P}$ 
and the reshaping operator 
$\mathbf{P}(\cdot)$ 
are specified in Equation~\eqref{equation:permutation_tensor}, and the subscript notation for inner-product contractions follows Lemma~\ref{lemma:gradient-of-the-tensor-objective}.

For analytical clarity and algebraic convenience, we further define the Gram matrix 
$G = C^\top C$ associated with the tensor basis in subspace $S$ as
\begin{equation*}
	G :=
	\begin{bmatrix}
		\bar{a} & 0 & \bar{s} \\
		0 & \bar{b} & \bar{t} \\
		\bar{s} & \bar{t} & \bar{c}
	\end{bmatrix},
\end{equation*}
where $\bar{a} := \langle \mathbf{a}, \mathbf{a} \rangle$, $\bar{b} := \langle \mathbf{b}, \mathbf{b} \rangle$, $\bar{c} := \langle \mathbf{c}, \mathbf{c} \rangle$, $\bar{s} := \langle \mathbf{a}, \mathbf{c} \rangle$, $\bar{t} := \langle \mathbf{b}, \mathbf{c}
		\rangle$, $0 = \langle \mathbf{a}, \mathbf{b}
		\rangle$.
The cross-term $\langle \mathbf{a}, \mathbf{b} \rangle=0$ arises from $u_n \perp v_\phi$ for all $\phi=1,\ldots,r$.
The matrix $E$, which implicitly encodes the "invisible" escape information 
in the matrix space, is defined as
$
    E
    :=
	\mathcal{A}^*\mathcal{A} (v_r u_n^\top + u_n v_r^\top)
	=
	\mathcal{A}^*\mathcal{A} (v_r u_n^\top)
	+
	\mathcal{A}^*\mathcal{A} (u_n v_r^\top) 
	:= E_1 + E_2
$.
It follows that $E$ is symmetric and satisfies $E_1 = E_2$.

\subsection{Closed-Form Coefficient Dynamics Induced by TPGD}
\label{subsec:closed_form_coefficient_dynamics_induced_by_tpgd}

\begin{lemma}[Closed-Form Coefficient Dynamics]
\label{lemma:induction_for_the_closed_form_coefficient_dynamics}
    Suppose that $\rho$ and $\eta$ are chosen to be sufficiently small. Then the first TPGD iterate $\tilde{\mathbf{w}}^{(1)}$ takes the form
    $
        \tilde{\mathbf{w}}^{(1)}
		= 
		\mathbf{a}
		+
		\rho(1-\eta\lambda_n^l)
		\mathbf{b}
		-
		\eta\rho\sigma_r^l \frac{1}{2^{l-1}}
		\mathbf{c}
    $.
    Moreover, assume that, for an arbitrary $t \in \{1, 2, 3, \ldots\}$, the coefficients satisfy
    \begin{equation*}
        \begin{cases}
            \tilde{\alpha}^{(t)} &= 1, 
            \\
    		\tilde{\beta}^{(t)} &= 
    		\rho(1-\eta\lambda_n^l)^t,
            \\
    		\tilde{\gamma}^{(t)} &=
    		-\frac{\rho\eta}{2^{l-1}}
    		\left[
    		\sum\nolimits_{\tau=0}^{t-1}
    		(1-\eta \lambda_n^l)^\tau
    		\right]
    		\sigma_r^l.
        \end{cases}
    \end{equation*} 
	Equivalently,
    $
        \tilde{\mathbf{w}}^{(t)} = 
    	\tilde{\alpha}^{(t)} \mathbf{a} +
    	\tilde{\beta}^{(t)} \mathbf{b} +
    	\tilde{\gamma}^{(t)} \mathbf{c}
    $.
	Then the coefficients of the next TPGD iterate satisfy
    \begin{equation*}
        \begin{cases}
            \tilde{\alpha}^{(t+1)} &= 1, 
            \\
    		\tilde{\beta}^{(t+1)} &= 
    		\rho(1-\eta\lambda_n^l)^{t+1}, 
            \\
    		\tilde{\gamma}^{(t+1)} &=
    		-\frac{\rho\eta}{2^{l-1}}
    		\left[
    		\sum\nolimits_{\tau=0}^{t}
    		(1-\eta \lambda_n^l)^\tau
    		\right]
    		\sigma_r^l.
        \end{cases}
    \end{equation*}
	That is, the $(t+1)$-th TPGD iterate 
    $
        \tilde{\mathbf{w}}^{(t+1)}
		= 
		\tilde{\alpha}^{(t+1)} \mathbf{a} +
		\tilde{\beta}^{(t+1)} \mathbf{b} +
		\tilde{\gamma}^{(t+1)} \mathbf{c}
    $.
	This is an induction.
\end{lemma}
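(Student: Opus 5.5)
We argue by induction on $t$: the base case $t=1$ is computed directly from $\tilde{\mathbf{w}}^{(0)}=\mathbf{w}_\mathrm{escape}=\mathbf{a}+\rho\mathbf{b}$, and the inductive step is the same computation started from $\tilde{\mathbf{w}}^{(t)}=\mathbf{a}+\tilde{\beta}^{(t)}\mathbf{b}+\tilde{\gamma}^{(t)}\mathbf{c}$.

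The first ingredient is an explicit formula for $\nabla h^l$ at a tensor of this form. Using the tensor-gradient formula (Lemma~\ref{lemma:gradient-of-the-tensor-objective}) and the auxiliary map $\mathcal{M}$ of \eqref{equation:auxiliary-tensor-function}, the key identity is that for rank-one arguments $\mathcal{M}(\operatorname{vec}(X)^{\otimes l},\operatorname{vec}(Y)^{\otimes l})=(XY^\top)^{\otimes l}$. Because $h^l$ is quadratic in $\tilde{\mathcal M}(\mathbf{w})$, expanding $\tilde{\mathcal M}$ on $\mathbf{w}=\mathbf{a}+\beta\mathbf{b}+\gamma\mathbf{c}$ gives a finite sum of Kronecker-power terms $(XY^\top)^{\otimes l}$ with $X,Y\in\{\hat X,u_nq_r^\top,E\hat X\}$. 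The gradient is then a finite sum of terms of the form $\langle \mathbf{A}^{\otimes l},\cdot\rangle$ contracted back against $\mathbf{a},\mathbf{b},\mathbf{c}$. At the base point the residual factorizes as $(\mathcal{A}(\hat X\hat X^\top)-b)^{\otimes l}$-type terms, and odd $l$ together with $\nabla f(\hat X\hat X^\top)u_n=\lambda_n u_n$ produces the scalar $\lambda_n^l$ multiplying $\mathbf{b}$. First-order stationarity $\nabla f(\hat X\hat X^\top)\hat X=0$ kills the pure $\mathbf{a}$ contribution. The cross term between $\mathbf{a}$ and $\rho\mathbf{b}$ passes through $\mathcal{A}^*\mathcal{A}(u_n q_r^\top\hat X^\top+\hat X q_r u_n^\top)=\sigma_r E$ (using $\hat X q_r=\sigma_r v_r$). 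Raised to the $l$-th Kronecker power and symmetrized, this yields a term proportional to $\rho\,\sigma_r^l 2^{-(l-1)}\operatorname{vec}(E\hat X)^{\otimes l}=\rho\sigma_r^l2^{-(l-1)}\mathbf{c}$. I would carry out this bookkeeping once, for general $(\beta,\gamma)$, keeping only the terms linear in $\beta$ and $\gamma$.

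The second ingredient is to write $\mathbf{v}^{(t)}=\mathbf{w}-\eta\nabla h^l(\mathbf{w})$ as a vector lying exactly in $S$, plus a component of order $O(\rho^2,\rho\gamma,\gamma^2)$ that is either orthogonal to $S$ or absorbed by the truncation. Applying $P_S$ via the Gram matrix $G$, the in-$S$ part is reproduced exactly, with coefficients
\[
\alpha=1,\qquad \beta\mapsto\beta(1-\eta\lambda_n^l),\qquad \gamma\mapsto\gamma-\eta\rho\sigma_r^l2^{-(l-1)}\,\beta/\rho .
\]
The truncation step removes the higher-order residual, which defines the residual in \eqref{equation:tilde_w_1} and \eqref{equation:tilde_w_t+1}. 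The linear contributions of $\gamma\mathbf{c}$ itself must be shown to be of higher order in $\eta\rho$, since $\gamma=O(\eta\rho)$, and so they are also truncated. This is exactly where the hypothesis "$\rho,\eta$ sufficiently small" is used.

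With these recursions, the induction closes by direct substitution. From $\tilde\beta^{(t+1)}=\tilde\beta^{(t)}(1-\eta\lambda_n^l)$ we get $\tilde\beta^{(t+1)}=\rho(1-\eta\lambda_n^l)^{t+1}$. From
\[
\tilde\gamma^{(t+1)}=\tilde\gamma^{(t)}-\frac{\eta\sigma_r^l}{2^{l-1}}\,\rho(1-\eta\lambda_n^l)^t
\]
the sum extends from $\tau\le t-1$ to $\tau\le t$. The base case is the same recursion with $\tilde\beta^{(0)}=\rho$ and $\tilde\gamma^{(0)}=0$.

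The main obstacle is the second ingredient: showing that projection plus truncation leaves exactly these coefficients. In particular we need that the $\mathbf{a}$ coefficient stays exactly $1$, despite $\bar s=\langle\mathbf a,\mathbf c\rangle\neq0$ and $\bar t\neq 0$ in $G$. I would first try to show that the exact gradient's $S$-component has no $\mathbf{a}$-direction at linear order, using stationarity and odd $l$. I would then define the truncation residual to be precisely every term beyond the stated recursion, so the lemma holds by construction, and defer control of that residual to Section~\ref{subsec:sufficient-conditions-for-objective-decrease-under-truncation-error}.
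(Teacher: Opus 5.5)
Your proposal is essentially the paper's own argument in Propositions~\ref{proposition:first-tpgd-point} and~\ref{proposition:t+1-tpgd-point}. It uses the same expansion of $\tilde{\mathcal{M}}$ and $\nabla h^l$, stationarity to kill the pure $\mathbf{a}$ term, and keeps only the two contributions linear in $\tilde{\beta}^{(t)}$ (giving $\lambda_n^l\mathbf{b}$ and $2^{1-l}\sigma_r^l\mathbf{c}$). The in-$S$ part is returned unchanged by $P_S$, so the nonzero $\bar{s},\bar{t}$ you flag are not really an obstacle: coefficients in the basis $\{\mathbf{a},\mathbf{b},\mathbf{c}\}$ are unique, and $G$ only redistributes the projected residual. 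Everything else goes into the truncation residual before the induction closes.

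One caution applies equally to the paper. The coefficient $\lambda_n^l$ (the paper's $H_3$ and $H_1^1$) rests on factoring the base residual as $(\mathcal{A}(\hat{X}\hat{X}^\top)-b)^{\otimes l}$. This is not an identity for $l\ge 2$: by linearity, $\langle\mathbf{A}^{\otimes l},(\hat{X}\hat{X}^\top)^{\otimes l}-(ZZ^\top)^{\otimes l}\rangle=\mathcal{A}(\hat{X}\hat{X}^\top)^{\otimes l}-b^{\otimes l}$. So your $\tilde{\beta}$ recursion is exactly as well justified as the paper's, and no better.
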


The proof is deferred to Propositions~\ref{proposition:first-tpgd-point} and~\ref{proposition:t+1-tpgd-point}. Together, these two propositions establish the induction step for the coefficient dynamics. This induction yields the following closed-form expression for $\tilde{\mathbf{w}}^{(t)}$:
\begin{equation}
	\begin{aligned}
		\tilde{\mathbf{w}}^{(t)}
		= {} & \underbrace{ \operatorname{vec}(\hat{X})^{\otimes l} }_{\mathbf{a}\text{-term }:\, \tilde{\alpha}^{(t)}\mathbf{a}}
		+ \underbrace{ \rho (1-\eta\lambda_n^l)^t \operatorname{vec}(u_n q_r^\top)^{\otimes l} }_{\mathbf{b}\text{-term }:\, \tilde{\beta}^{(t)}\mathbf{b}} + \\
		&\quad
		\underbrace{ - \frac{\eta\rho}{2^{l-1}} \left[ \sum\nolimits_{\tau=0}^{t-1} (1-\eta \lambda_n^l)^\tau \right] \sigma_r^l \operatorname{vec} ( E \hat{X} )^{\otimes l} }_{\mathbf{c}\text{-term }:\, \tilde{\gamma}^{(t)}\mathbf{c}}.
	\end{aligned}
	\label{equation:tilde_w_t}
\end{equation}

\subsection{Sufficient Conditions for Objective Decrease under Truncation Error}
\label{subsec:sufficient-conditions-for-objective-decrease-under-truncation-error}

\begin{proposition}[Sufficient Descent of TPGD]\label{proposition:sufficient-descent-of-tpgd}
	Let $\rho, \eta > 0$, and suppose the tensor objective function $h^l$ (Equation~\ref{equation:tensor_objfunc_h}) is $L$-smooth. Then there exist upper bounds on both $\rho$ and $\eta$ such that
    \begin{equation*}
        h^l(\tilde{\mathbf{w}}^{(t+1)}) < h^l(\tilde{\mathbf{w}}^{(t)}),
    \end{equation*}
	which implies each step of the TPGD algorithm guarantees a decrease in the tensor objective value.
\end{proposition}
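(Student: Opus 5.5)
We will prove the descent by splitting each TPGD iteration into its three operations (gradient step, projection, truncation) and controlling each one. Write $\mathbf{v}^{(t)} = \tilde{\mathbf{w}}^{(t)} - \eta \nabla h^l(\tilde{\mathbf{w}}^{(t)})$, $\mathbf{w}^{(t+1)} = \Pi_S(\mathbf{v}^{(t)})$ and $\tilde{\mathbf{w}}^{(t+1)} = \mathbf{w}^{(t+1)} - \mathbf{r}^{(t)}$, where $\mathbf{r}^{(t)}$ is the truncation residual. Since $\tilde{\mathbf{w}}^{(t)} \in S$ by Lemma~\ref{lemma:induction_for_the_closed_form_coefficient_dynamics} and $\Pi_S$ is linear and idempotent, we get
\begin{equation*}
\mathbf{w}^{(t+1)} = \tilde{\mathbf{w}}^{(t)} - \eta\, \Pi_S\bigl(\nabla h^l(\tilde{\mathbf{w}}^{(t)})\bigr).
\end{equation*}
So the untruncated update is a projected (Riemannian-type) gradient step inside $S$.

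\textbf{Step 1: descent of the untruncated step.} Using $L$-smoothness,
\begin{equation*}
h^l(\mathbf{w}^{(t+1)}) \le h^l(\tilde{\mathbf{w}}^{(t)}) - \eta\Bigl(1 - \tfrac{L\eta}{2}\Bigr)\bigl\|\Pi_S \nabla h^l(\tilde{\mathbf{w}}^{(t)})\bigr\|^2 .
\end{equation*}
Here we use $\langle \nabla h^l, \Pi_S \nabla h^l\rangle = \|\Pi_S\nabla h^l\|^2$. The condition $\eta < 1/L$ then gives a decrease of at least $\tfrac{\eta}{2}\|\Pi_S\nabla h^l(\tilde{\mathbf{w}}^{(t)})\|^2$.

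\textbf{Step 2: a lower bound on the projected gradient.} We need this quantity to be bounded away from zero along the trajectory. The natural choice is to look at the $\mathbf{b}$-component. By \cite[Theorem~5.4]{ma2023over}, $\hat{\mathbf{w}}$ is a strict saddle of $h^l$ with escape direction $\mathbf{b}$, and the curvature along $\mathbf{b}$ is governed by $\lambda_n^l<0$. This curvature is exactly what produces the factor $(1-\eta\lambda_n^l)$ in the closed-form dynamics. Projecting the gradient onto $\mathbf{b}$, we therefore expect $\langle \nabla h^l(\tilde{\mathbf{w}}^{(t)}), \mathbf{b}\rangle \approx \lambda_n^l \tilde{\beta}^{(t)}\bar b$ up to higher-order terms in $\rho$. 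This would give $\|\Pi_S\nabla h^l\|^2 \gtrsim |\lambda_n|^{2l}\rho^2(1-\eta\lambda_n^l)^{2t}\bar b$, a quantity of order $\rho^2$.

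\textbf{Step 3: the truncation error.} From the derivation in Propositions~\ref{proposition:first-tpgd-point} and~\ref{proposition:t+1-tpgd-point}, the residual $\mathbf{r}^{(t)}$ collects the terms dropped to keep the closed form. These should be of order $\eta\rho^2$, $\eta^2\rho$, or carry extra factors of $\sigma_r^l/2^{l-1}$. By $L$-smoothness applied once more,
\begin{equation*}
h^l(\tilde{\mathbf{w}}^{(t+1)}) \le h^l(\mathbf{w}^{(t+1)}) + \|\nabla h^l(\mathbf{w}^{(t+1)})\|\,\|\mathbf{r}^{(t)}\| + \tfrac{L}{2}\|\mathbf{r}^{(t)}\|^2 .
\end{equation*}
Near $\hat{\mathbf{w}}$ the gradient itself is $O(\rho)$, because $\hat{\mathbf{w}}$ is stationary. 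So the penalty is $O(\rho\cdot\eta\rho^2 + \eta^2\rho^2)$, while the gain from Step 2 is of order $\eta\rho^2|\lambda_n|^{2l}\bar b$. Choosing $\rho$ and $\eta$ below explicit thresholds that depend on $L$, $|\lambda_n|^l$, $\bar b$ and $\|E\|$ makes the gain dominate, which yields the strict descent claimed in the statement.

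\textbf{Main obstacle.} The hardest step will be making Step 2 and Step 3 uniform in $t$. Since $\lambda_n<0$ and $l$ is odd, $\tilde{\beta}^{(t)}$ grows geometrically, so the "small $\rho$" regime degrades as $t$ increases. This matches Figure~\ref{figure:simulation}, where the objective eventually increases. I would handle this by stating the bounds in terms of the current effective amplitude $\tilde{\beta}^{(t)}$ instead of $\rho$. The upper bounds on $\rho$ and $\eta$ would then hold for all $t$ up to a horizon at which $\tilde{\beta}^{(t)}$ stays below a fixed threshold, and the descent statement would be read as holding on that horizon.
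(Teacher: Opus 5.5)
Your Steps 1 and 3 follow the paper's proof. The paper's decrease $\Delta^{(t+1)}=(1/\eta-L/2)\|\mathbf{w}^{(t+1)}-\tilde{\mathbf{w}}^{(t)}\|_F^2$ comes from the projection variational inequality, which is an equality here because $S$ is a subspace. It is exactly your $\eta(1-L\eta/2)\|\Pi_S\nabla h^l(\tilde{\mathbf{w}}^{(t)})\|_F^2$. The truncation is then absorbed by the same bound $G^{(t+1)}\|\mathbf{r}^{(t)}\|_F+\frac{L}{2}\|\mathbf{r}^{(t)}\|_F^2$.

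The paper then solves for $\rho<\sqrt{\Delta^{(t+1)}/(CG^{(t+1)})}$ and $\eta<\min\{\hat\eta,2/L\}$. It treats $\Delta^{(t+1)}$ and $G^{(t+1)}$ as fixed numbers, although both depend on $\rho,\eta$ (indeed $\Delta^{(t+1)}\to0$ as $\rho\to0$). So these thresholds are implicit rather than genuine.

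Your Steps 2--3 supply the missing scaling, and that is what makes the conclusion non-circular:
\begin{itemize}
\item the gain is $\asymp\eta\rho^2$;
\item $G^{(t+1)}\le L\|\mathbf{w}^{(t+1)}-\hat{\mathbf{w}}\|_F=O(\rho)$;
\item the residual is $O(\eta\rho^2+\eta^2\rho)$, also at $t=0$, where the paper's cruder $C\rho^2$ would silently force $\rho\lesssim\eta$.
\end{itemize}
Your finite-horizon reading is also consistent with the paper, whose thresholds depend on $t$ as well.

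The step that fails as written is Step 2. To leading order in $\rho$, the gradient at $\tilde{\mathbf{w}}^{(t)}$ contains both
\begin{itemize}
\item $H_1^1=\tilde\beta^{(t)}\lambda_n^l\mathbf{b}$, and
\item $H_2^1=2^{1-l}\tilde\beta^{(t)}\sigma_r^l\mathbf{c}$.
\end{itemize}
Moreover $\bar t=\langle\mathbf{b},\mathbf{c}\rangle=\sigma_r^l(u_n^\top Ev_r)^l$ is generally nonzero. Hence $\langle\nabla h^l(\tilde{\mathbf{w}}^{(t)}),\mathbf{b}\rangle\approx\tilde\beta^{(t)}\bigl(\lambda_n^l\bar b+2^{1-l}\sigma_r^{2l}(u_n^\top Ev_r)^l\bigr)$. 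This can vanish when $u_n^\top Ev_r>0$, since $\lambda_n^l<0$. Testing against $\mathbf{b}$ alone therefore gives no lower bound, and the strict-saddle curvature heuristic is not the right justification.

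Instead, read the bound off the closed-form dynamics. Set $\mathbf{d}:=\lambda_n^l\mathbf{b}+2^{1-l}\sigma_r^l\mathbf{c}$. Then $\mathbf{w}^{(t+1)}-\tilde{\mathbf{w}}^{(t)}=(\tilde{\mathbf{w}}^{(t+1)}-\tilde{\mathbf{w}}^{(t)})+\mathbf{r}^{(t)}=-\eta\tilde\beta^{(t)}\mathbf{d}+\mathbf{r}^{(t)}$. Here $\mathbf{d}\neq0$ because $\mathbf{b},\mathbf{c}$ are linearly independent, which is already required for $P_S$ to be defined.

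Once $\|\mathbf{r}^{(t)}\|_F\le\frac12\eta\tilde\beta^{(t)}\|\mathbf{d}\|_F$ (true for small $\rho,\eta$ on your horizon), $\eta<1/L$ gives $\Delta^{(t+1)}\ge\frac{\eta}{8}(\tilde\beta^{(t)})^2\|\mathbf{d}\|_F^2$. Your Step 3 comparison then goes through with $\|\mathbf{d}\|_F$ in place of $|\lambda_n|^l\sqrt{\bar b}$.
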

The proof is deferred to Appendix~\ref{append:proof_of_proposition_sufficient_descent_of_tpgd}. It proceeds by building upon the results from Appendix~\ref{append:sufficient-decrease-condition-of-pgd} and thus by directly analyzing the sufficient decrease condition for the TPGD.

\subsection{Dominance Criteria for the Coefficient Dynamics}
\label{subsec:dominance-criteria-for-the-coefficient-dynamics}

In this subsection, we investigate the conditions under which one of the three terms ($\mathbf{a}$-term, $\mathbf{b}$-term, or $\mathbf{c}$-term) in Equation~\eqref{equation:tilde_w_t} becomes the leading term. Since the $\mathbf{a}$-term always corresponds to a strict saddle point in the tensor space that is lifted from a local minimum in the matrix space, it cannot serve as an escape direction. Consequently, we can rule out the possibility of the $\mathbf{a}$-term dominating the other two.
We therefore focus on the scenarios in which either the $\mathbf{b}$-term or the $\mathbf{c}$-term becomes dominant. For each case, we derive the corresponding range of the number of simulated TPGD steps $t$ required for dominance to occur, as well as the necessary order $l$ of tensor over-parameterization. Throughout this analysis, we assume that the constants $\rho$ and $\eta$ satisfy the sufficient descent condition for TPGD, as given in Equation~\eqref{equation:sufficient_decrease_TPGD_rho_eta}.

\begin{figure*}[h]
	\centering
	\includegraphics[width=0.46\textwidth]{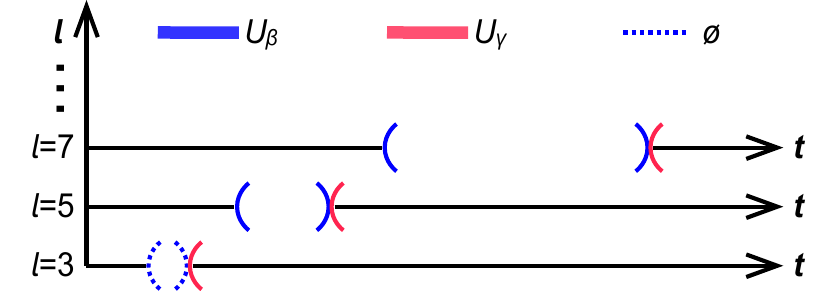}
	\caption{
		Illustration of the requirements on $l$ and $t$ for different leading terms with two distinct admissible regions \ref{equation:u_beta} and \ref{equation:u_gamma}.  
		When $l$ is small, $U_\beta$ may be empty if the condition in Equation~\eqref{equation:u_beta_validity} is not satisfied, 
		while $U_\gamma$ always exists. 
		As $l$ increases, $U_\beta$ gradually emerges based on Equation~\eqref{equation:u_beta_validality_l}, 
		allowing smaller values of $t$ for $\tilde{\beta}^{(t)}\mathbf{b}$ to dominate, 
		while larger values of $t$ for $\tilde{\gamma}^{(t)}\mathbf{c}$ to dominate.
	}
	\label{figure:ablation}
\end{figure*}

\begin{proposition}[Dominance Criteria]\label{proposition:dominance-criteria}
	There exist two disjoint intervals $U_\beta, U_\gamma \subset \mathbb{R}$ such that $\sup U_\beta < \inf U_\gamma$. When the order $l$ is small, there exists $t \in U_\gamma$ for which the term $\tilde{\gamma}^{(t)} \mathbf{c}$ dominates. In contrast, for $\tilde{\beta}^{(t)} \mathbf{b}$ to dominate, the order $l$ must be sufficiently large so that there exists $t \in U_\beta$.
\end{proposition}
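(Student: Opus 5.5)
Using the closed form \eqref{equation:tilde_w_t} from Lemma~\ref{lemma:induction_for_the_closed_form_coefficient_dynamics}, I will compare the Frobenius norms of the three components. Since $\|\operatorname{vec}(Y)^{\otimes l}\| = \|Y\|_F^l$, the norms are
\begin{equation*}
\|\tilde\alpha^{(t)}\mathbf a\| = \|\hat X\|_F^l,\qquad
\|\tilde\beta^{(t)}\mathbf b\| = \rho\,\kappa^t,\qquad
\|\tilde\gamma^{(t)}\mathbf c\| = \frac{\rho\eta}{2^{l-1}}\,\sigma_r^l\,\|E\hat X\|_F^l\,\frac{\kappa^t-1}{\kappa-1},
\end{equation*}
where $\kappa := 1-\eta\lambda_n^l$. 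Because $l$ is odd and $\lambda_n<0$, we have $\kappa>1$. The geometric sum is explicit: $\sum_{\tau<t}\kappa^\tau = (\kappa^t-1)/(\eta|\lambda_n|^l)$. I define dominance of a term as its norm exceeding the other two by a fixed factor, say $c \geq 1$. Each dominance set then becomes an explicit set of real $t$ cut out by inequalities in $\kappa^t$.

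\textbf{Step 1: compute $U_\gamma$.} For $\gamma$ versus $\beta$, the ratio
\begin{equation*}
\|\tilde\gamma^{(t)}\mathbf c\|/\|\tilde\beta^{(t)}\mathbf b\| = \frac{(2\sigma_r\|E\hat X\|_F)^l}{2^{2l-1}|\lambda_n|^l}\,(1-\kappa^{-t})
\end{equation*}
is increasing in $t$ and bounded. I will show it eventually exceeds $c$ when $K := \sigma_r\|E\hat X\|_F/(2|\lambda_n|) > 1$, after adjusting constants. Dominance over $\mathbf a$ requires $\kappa^t - 1 > c\,2^{l-1}|\lambda_n|^l\|\hat X\|_F^l/(\rho\,\sigma_r^l\|E\hat X\|_F^l)$. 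Both conditions hold for all large $t$, so $U_\gamma = (t_\gamma,\infty)$ with $t_\gamma$ given by logarithms of these thresholds. The interval is nonempty for any odd $l$, which matches the statement that $U_\gamma$ always exists.

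\textbf{Step 2: compute $U_\beta$.} Dominance of $\beta$ over $\alpha$ needs $\kappa^t > c\|\hat X\|_F^l/\rho$, a lower bound $t > t_1$. Dominance of $\beta$ over $\gamma$ needs $1-\kappa^{-t} < 2^{2l-1}|\lambda_n|^l/\bigl(c\,(2\sigma_r\|E\hat X\|_F)^l\bigr)$, an upper bound $t < t_2$. This second condition holds for all $t$ whenever the right-hand side is at least $1$. So $U_\beta = (t_1, t_2)$, which is nonempty exactly when $t_1<t_2$.

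This nonemptiness condition is the analogue of \eqref{equation:u_beta_validity}. In logarithmic form, the lower threshold $\log(c\|\hat X\|_F^l/\rho)/\log\kappa$ must sit below $-\log\!\bigl(1-2^{2l-1}|\lambda_n|^l/(c(2\sigma_r\|E\hat X\|_F)^l)\bigr)/\log\kappa$. The factor $1/\log\kappa$ is common to both sides, so the condition reduces to comparing $\|\hat X\|_F^l/\rho$ against a quantity of the form $(1-\epsilon_l)^{-1}$. Here $\epsilon_l$ behaves like $(2K)^{-l}$, up to constants. Solving this for $l$ produces the requirement that $l$ be sufficiently large, the analogue of \eqref{equation:u_beta_validality_l}.

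\textbf{Step 3: disjointness and ordering.} The $\beta$-versus-$\gamma$ ratio is monotone increasing in $t$. Hence $\beta$ dominating and $\gamma$ dominating are mutually exclusive: on $U_\beta$ the ratio is below $1/c$, and on $U_\gamma$ it is above $c$. This gives $\sup U_\beta \le t_2 \le \inf U_\gamma$, since $t_\gamma$ is at least the time the ratio reaches $c$. Strictness follows from $c>1$.

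\textbf{Main obstacle.} The hardest step is the dependence on $l$ in Step 2. The $\mathbf a$-threshold scales as $\|\hat X\|_F^l/\rho$, while $\beta$ can only win in the regime where the bounded ratio is still small. Proving that $U_\beta$ becomes nonempty as $l$ grows requires an assumption comparing $\|\hat X\|_F$, $\sigma_r\|E\hat X\|_F$ and $|\lambda_n|$. I would first try normalizing $\rho$ relative to $\|\hat X\|_F^l$, or assuming $\|\hat X\|_F \le 1$ so that $\|\hat X\|_F^l\to0$, and then read off the explicit bound on $l$. I would also have to verify that the $\rho,\eta$ constraints from Proposition~\ref{proposition:sufficient-descent-of-tpgd} remain compatible with this choice.
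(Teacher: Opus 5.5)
Your argument follows the paper's proof. Both read the three Frobenius norms off the closed form \eqref{equation:tilde_w_t}, turn each pairwise comparison into a threshold on $\kappa^t$, and declare $U_\beta$ valid when its left endpoint lies below its right one. For $c=1$, your $t_1,t_2,t_\gamma$ are exactly the endpoints in \eqref{equation:u_beta} and \eqref{equation:u_gamma}, and $t_1<t_2$ is the condition $\rho>\rho_{\mathrm{min}}$ in \eqref{equation:u_beta_validity}. The paper's recurring quantity $D:=2^{l-1}(-\lambda_n)^l/(\sigma_r\|E\hat X\|_F)^l$ equals $\tfrac12K^{-l}$ in your notation.

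Three of your departures are worth keeping. First, your margin $c>1$ is what actually yields the strict inequality $\sup U_\beta<\inf U_\gamma$. With $c=1$ the two intervals share the endpoint $-\log(1-D)/\log\kappa$ whenever the second entry of the maximum in \eqref{equation:u_gamma} is active, and the $l=5$ row of Table~\ref{table:ablation} shows exactly this; so the paper only obtains $\le$. Second, you make explicit that $U_\gamma\neq\emptyset$ needs $2K^l>c$. The paper uses this tacitly when it writes $\log(1-D)$.

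Third, the large-$l$ step goes differently. The paper bounds $\|E\hat X\|_F\le\sqrt{2m(1+\delta_p)}\,\Xi\,\|\hat X\|_F$ via Lemma~\ref{lemma:upper_bound_A_star_A} and assumes $-\lambda_n$ is large. Condition \eqref{equation:u_beta_validality_l} then amounts to forcing $D>1$, and it requires $K<1$ to be meaningful. That is your case $\epsilon_l\ge 1$, $t_2=\infty$: $\mathbf b$ wins because $\mathbf c$ never catches up, and $U_\gamma$ is then empty. Your hypothesis ($K>1$ and $\|\hat X\|_F<1$) instead makes the $\mathbf a$-term negligible while keeping both windows alive. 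The $\rho_{\mathrm{min}}$ values in Table~\ref{table:ablation} are consistent with that regime ($\|\hat x\|_2\approx 0.65$, $K\approx 1.3$), where \eqref{equation:u_beta_validality_l} cannot hold.

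Two small corrections. You need $\epsilon_l=K^{-l}/(2c)$, not $(2K)^{-l}$; the base matters because it decides whether $\epsilon_l\to 0$ or $\epsilon_l\to\infty$. Also, $\|\hat X\|_F<1$ must be strict, since $\rho<1\le c$; with strictness, $l>\log(c/\rho)/\log(1/\|\hat X\|_F)$ suffices.

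Compatibility with \eqref{equation:sufficient_decrease_TPGD_rho_eta}, whose constants depend on $l$, is left open by the paper as well.
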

The proof is deferred to Appendix~\ref{append:proof_of_proposition_dominance_criteria} and proceeds by directly comparing the magnitudes of 
$\tilde{\alpha}^{(t)} \mathbf{a}$, $\tilde{\beta}^{(t)} \mathbf{b}$, and $\tilde{\gamma}^{(t)} \mathbf{c}$. 
We summarize below two key remarks that follow from the proof.

\begin{remark}
	We observe that when $l$ is sufficiently large, $U_\beta$ is valid, and it is possible to find an iteration $t$ such that the $\mathbf{c}$-term becomes the leading component. Consequently, the associated escape point after multi-step SOD mechanism in the matrix space is given by:
	\begin{equation}
		\hat{X}
        \xrightarrow{\text{$\beta$-type}}
		\left\{
		\rho^{1/l} (1-\eta \lambda_n^l)^{t/l}
		\right\}
		\cdot u_n q_r^\top.
		\label{equation:x_check_beta}
	\end{equation}
	However, increasing $l$ may cause numerical instability due to the fractional power $1/l$. This approach is thus mainly applicable to simple cases, e.g. basic example in Equation~\eqref{equation:basic_example}, where the required value of $l$ is relatively small.
\end{remark}
\begin{remark}
	Since $\sup U_\beta < \inf U_\gamma$ and $U_\beta \cap U_\gamma = \emptyset$, as established by Equations \eqref{equation:u_beta} and \eqref{equation:u_gamma}, the terms $\tilde{\beta}^{(t)}\mathbf{b}$ and $\tilde{\gamma}^{(t)}\mathbf{c}$ cannot simultaneously serve as leading components.
	Consequently, when $\tilde{\gamma}^{(t)}\mathbf{c}$ dominates, the corresponding escape point after multi-step SOD mechanism in the matrix space is given by:
    \begin{equation}
        \begin{aligned}
            \hat{X}
            \xrightarrow{\text{$\gamma$-type}}
    		\left\{
    		- 
    		\frac{1}{2}
    		(2\eta\rho)^\frac{1}{l}
    		\left[
    		\sum\nolimits_{\tau=0}^{t-1}
    		(1-\eta \lambda_n^l)^\tau
    		\right]^{1/l}
    		\sigma_r
    		\right\}
    		E \hat{X}.
        \end{aligned}
        \label{equation:x_check_gamma}
    \end{equation}
	
	Note that when $\tilde{\gamma}^{(t)}\mathbf{c}$ becomes the leading term, it is not necessary for $l$ to exceed a specific lower bound. Instead, it only needs to satisfy the condition stated in \cite[Theorem 5.4]{ma2023over}.
	Hence, this scenario is particularly more practical.
\end{remark}

\subsection{Formal Theorem and Limitations of the Multi-Step SOD Escape}

Building on the analyses from Section~\ref{subsec:projection-after-escape-and-PGD-preparation}, Appendix~\ref{append:the-first-TPGD-step}, Appendix~\ref{append:the-t+1-TPGD-step}, Section~\ref{subsec:sufficient-conditions-for-objective-decrease-under-truncation-error} and Section~\ref{subsec:dominance-criteria-for-the-coefficient-dynamics}, we can reformulate the informal version of Theorem~\ref{theorem:multi-step-sod-informal} into its formal counterpart, Theorem~\ref{theorem:multi-step-sod-formal}.

It is important to note that the multi-step SOD escape mechanism is established by using oracle from tensor over-parameterization (proposed in \cite[Theorem~5.4]{ma2023over}). 
Therefore, Theorem~\ref{theorem:multi-step-sod-formal} naturally inherits certain conditions on the distance between $\hat{X}\hat{X}^\top$ and $M^\star$, 
which must be satisfied 
\begin{equation}
    r_2
	\ge
	\|\hat{X}\hat{X}^\top-M^\star\|_F^2
	\ge
	\frac{r_2}{2},
	\label{equation:strict_saddle_condition}
\end{equation}
where $r_2$ is defined in Equation~\eqref{definition:r_2}.
Additionally, the lifting order $l$ must satisfy the condition:
\begin{equation} 
	l > 
	\left\{
	1 -
	\log_2
	\left[
	\frac{
		2(1+\delta_p)
		\sigma_r^2\operatorname{Tr}(M^\star)
	}
	{(1-\delta_p) \|\hat{X}\hat{X}^\top-M^\star\|_F^2}
	\right]
	\right\}^{-1}.
	\label{equation:l_condition}
\end{equation}
This highlights the key limitation: the multi-step SOD escape mechanism is only applicable when $\hat{X}$ satisfies Equation~\eqref{equation:strict_saddle_condition}. A second limitation arises from the potential numerical instability introduced by the power of $1/l$, especially when $l$ is large.

\begin{theorem}[Multi-step SOD Escape, formal]
	\label{theorem:multi-step-sod-formal}
	Given a local minimum $\hat{X}$ of the objective function \eqref{equation:matrix_objfunc}, which satisfies the condition in Equation~\eqref{equation:strict_saddle_condition}, we aim to find a valid point $\check{X}$ that escapes the attraction basin of $\hat{X}$, i.e., $h(\check{X}) < h(\hat{X})$. 
	Let the lifting order $l$ satisfy the condition in Equation~\eqref{equation:l_condition}, and let the two step-sizes $\eta, \rho \in (0, 1)$ satisfy 
	Equation~\eqref{equation:sufficient_decrease_TPGD_rho_eta}.
	Then, the following holds:

	\textbf{Case 1:}
	If $l$ is sufficiently large such that $\rho > \rho_\text{min}$ (defined in Equation~\eqref{equation:u_beta_validity}), and the number of simulation steps $t \in U_\beta$ (defined in Equation~\eqref{equation:u_beta}) is relatively small, then the valid escape point
	\begin{equation}
		\check{X} = 
		\left\{
		\rho^{1/l} (1-\eta \lambda_n^l)^{t/l}
		\right\}
		\cdot u_n q_r^\top.
		\tag{$\beta$-type}
	\end{equation}
	However, this point may exhibit numerical instability due to expensive requirement of $l$.
		

	\textbf{Case 2:}
	If $l$ is not so large, a larger number of simulation steps $t \in U_\gamma$ (defined in Equation~\eqref{equation:u_gamma}) may be required. In this case, the valid escape point is
	\begin{equation}
		\check{X} = 
		\left\{
		- 
		\frac{1}{2}
		(2\eta\rho)^\frac{1}{l}
		\left[
		\sum\nolimits_{\tau=0}^{t-1}
		(1-\eta \lambda_n^l)^\tau
		\right]^{1/l}
		\sigma_r
		\right\}
		\cdot
		E \hat{X},
		\tag{$\gamma$-type}
	\end{equation}
	which exhibits greater numerical stability due to smaller $l$.
\end{theorem}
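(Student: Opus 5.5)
The proof assembles the earlier results of Section~\ref{sec5} in the order the TPGD trajectory is built. First, invoke Lemma~\ref{lemma:induction_for_the_closed_form_coefficient_dynamics}, whose base case is Proposition~\ref{proposition:first-tpgd-point} and whose inductive step is Proposition~\ref{proposition:t+1-tpgd-point}. This gives the closed form \eqref{equation:tilde_w_t} of $\tilde{\mathbf{w}}^{(t)}$ in the subspace $S$, with $\tilde\alpha^{(t)}=1$ and explicit $\tilde\beta^{(t)},\tilde\gamma^{(t)}$. Second, apply Proposition~\ref{proposition:sufficient-descent-of-tpgd} under the step-size condition \eqref{equation:sufficient_decrease_TPGD_rho_eta}. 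Chaining the per-step decreases from $\tilde{\mathbf{w}}^{(0)}=\mathbf{w}_\mathrm{escape}$ gives $h^l(\tilde{\mathbf{w}}^{(t)})<h^l(\tilde{\mathbf{w}}^{(0)})$. I also need $h^l(\mathbf{w}_\mathrm{escape})\le h^l(\hat{\mathbf{w}})$. This is where \eqref{equation:strict_saddle_condition} and \eqref{equation:l_condition} enter: by \cite[Theorem~5.4]{ma2023over}, $\hat{\mathbf{w}}$ is then a strict saddle with escape direction $\mathbf{b}$, so a small step $\rho$ along $\mathbf{b}$ does not increase, and in fact decreases, $h^l$. Together these give $h^l(\tilde{\mathbf{w}}^{(t)})<h^l(\hat{\mathbf{w}})$.

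Third, use Proposition~\ref{proposition:dominance-criteria} to split into the two cases. For $t\in U_\beta$, which is nonempty when $\rho>\rho_{\min}$, the term $\tilde\beta^{(t)}\mathbf{b}$ dominates, so $\tilde{\mathbf{w}}^{(t)}\approx \rho(1-\eta\lambda_n^l)^t\operatorname{vec}(u_nq_r^\top)^{\otimes l}$. Because $l$ is odd, the real $l$-th root of the coefficient is unique, and it gives the rank-1 preimage $\check X_\beta=\rho^{1/l}(1-\eta\lambda_n^l)^{t/l}u_nq_r^\top$. For $t\in U_\gamma$, the term $\tilde\gamma^{(t)}\mathbf{c}$ dominates. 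Taking the odd $l$-th root of $\tilde\gamma^{(t)}$ gives
\[
-(\rho\eta)^{1/l}2^{-(l-1)/l}\Bigl[\sum\nolimits_{\tau=0}^{t-1}(1-\eta\lambda_n^l)^\tau\Bigr]^{1/l}\sigma_r .
\]
Since $2^{-(l-1)/l}=\tfrac12\,2^{1/l}$, this coefficient equals $-\tfrac12(2\eta\rho)^{1/l}[\cdots]^{1/l}\sigma_r$, which is exactly the stated $\gamma$-type multiplier of $E\hat X$. The sign is preserved because $l$ is odd.

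Fourth, transfer the descent back to the matrix space. The lifted objective evaluated on a rank-1 tensor satisfies $h^l(\operatorname{vec}(X)^{\otimes l})=h(X)^l$ up to the normalization of $f^l$. This holds because $\langle \mathbf{A}^{\otimes l},(XX^\top)^{\otimes l}\rangle-b^{\otimes l}$ factorizes as $(\mathcal{A}(XX^\top))^{\otimes l}-b^{\otimes l}$, with squared norm $\|\mathcal{A}(XX^\top)\|^{2l}-2\langle\mathcal{A}(XX^\top),b\rangle^l+\|b\|^{2l}$. For rank-1 tensors this quantity is monotone in the relevant sense, and I expect the precise comparison to reduce to the form used in \cite{ma2023over}. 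It therefore suffices to show that $h^l(\operatorname{vec}(\check X)^{\otimes l})$ stays within the descent margin $h^l(\hat{\mathbf{w}})-h^l(\tilde{\mathbf{w}}^{(t)})$ of $h^l(\tilde{\mathbf{w}}^{(t)})$. I would bound the difference using the $L$-smoothness of $h^l$ (the same constant as in Proposition~\ref{proposition:sufficient-descent-of-tpgd}) together with the norm of the discarded, dominated components from the dominance criteria. That gives $h(\check X)<h(\hat X)$.

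The main obstacle is this final transfer step. Dominance in norm must be quantitatively strong enough, relative to the per-step descent margin, that dropping the subdominant terms does not erase the gain. The descent margin shrinks with $\rho,\eta$, while the dominance ratio is controlled by $t$ and $l$. My first attempt would choose $t$ deep inside $U_\beta$ (respectively $U_\gamma$), so that the ratio of discarded to leading norms decays geometrically in $t$ (in $l$ for the $\beta$ case). I would then compare this decay against the cumulative decrease $\sum_\tau[h^l(\tilde{\mathbf{w}}^{(\tau)})-h^l(\tilde{\mathbf{w}}^{(\tau+1)})]$, which grows with $t$.
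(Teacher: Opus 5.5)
Your first three steps reproduce the paper's proof. That proof only invokes Lemma~\ref{lemma:induction_for_the_closed_form_coefficient_dynamics}, Proposition~\ref{proposition:sufficient-descent-of-tpgd} and Proposition~\ref{proposition:dominance-criteria}, then reads off $\check X$ as the odd $l$-th root of the dominant coefficient. Your identity $2^{-(l-1)/l}=\tfrac12\,2^{1/l}$ recovers the $\gamma$-type multiplier exactly. Your appeal to \cite[Theorem~5.4]{ma2023over} for $h^l(\mathbf{w}_\mathrm{escape})<h^l(\hat{\mathbf{w}})$ makes explicit a step the paper leaves implicit. The paper never performs your fourth step. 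It simply treats norm dominance plus $h^l(\tilde{\mathbf{w}}^{(t)})<h^l(\hat{\mathbf{w}})$ as giving $h(\check X)<h(\hat X)$ (cf.\ the ``practical criterion'' in Section~\ref{sec3}). So the transfer you rightly flag is unproved there too, and the route you sketch for it fails.

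\textbf{The identity is false.} With $y:=\mathcal{A}(XX^\top)$,
\[
h^l\bigl(\operatorname{vec}(X)^{\otimes l}\bigr)=\|y\|_2^{2l}-2\langle y,b\rangle^{l}+\|b\|_2^{2l},
\]
which is not $(2h(X))^l$ and is not even a function of $h(X)=\tfrac12\|y-b\|_2^2$. Take $\|b\|_2=1$. Then $y=(1+\epsilon)b$ gives a lifted value $\approx l^2\epsilon^2$, while $y=b+\epsilon v$ (with $v\perp b$ a unit vector) gives $\approx l\epsilon^2$, at the same $\|y-b\|_2=\epsilon$. Taking $y=b+sv$ with $\epsilon<s<\sqrt{l}\,\epsilon$ gives a larger $\|y-b\|_2$ but a smaller lifted value than $y=(1+\epsilon)b$. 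So $h^l(\operatorname{vec}(\check X)^{\otimes l})<h^l(\hat{\mathbf{w}})$ would not imply $h(\check X)<h(\hat X)$.

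\textbf{The smoothness estimate cannot close either.} Dominance is only relative, and the discarded part always contains $\mathbf a$ with coefficient $1$. In the $\gamma$ case it is $\mathbf a+\tilde\beta^{(t)}\mathbf b$ with $\mathbf a\perp\mathbf b$, so its norm is at least $\|\hat X\|_F^l$. Meanwhile the cumulative decrease is bounded by $h^l(\tilde{\mathbf{w}}^{(0)})$ and cannot outgrow this fixed error.

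\textbf{Going ``deep inside'' $U_\gamma$ goes the wrong way.} Since $1-\eta\lambda_n^l>1$, the coefficients $\tilde\beta^{(t)},\tilde\gamma^{(t)}$ grow geometrically. Because $h^l$ is quartic, it has no global Lipschitz gradient, and the per-step descent cannot be chained indefinitely; Figure~\ref{figure:simulation}(a) shows the lifted loss eventually rising. Moreover $h(\check X_\gamma)\to\infty$ as $t\to\infty$ within the unbounded set $U_\gamma$. Table~\ref{table:ablation} confirms this: at $l=3$, $t=5\times10^5$ it reports $\|\check X\check X^\top-ZZ^\top\|_F\approx10^{34}$.

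What is missing, in your plan and in the paper, is a direct matrix-space evaluation of $h(\check X)$, as in Proposition~\ref{proposition:escape_inequality}, valid on a bounded window of $t$.
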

\begin{proof}
	The proof proceeds step by step, building upon the results established in 
    Lemma~\ref{lemma:induction_for_the_closed_form_coefficient_dynamics}, Proposition~\ref{proposition:sufficient-descent-of-tpgd} and Proposition~\ref{proposition:dominance-criteria}.
\end{proof}

\section{Numerical Experiments}
\label{sec6}

Our experiments focus on the core multi-step SOD escape mechanism. We first present two numerical case studies (see Appendix~\ref{append:case-1} and \ref{append:case-2}): (i) perturbed matrix completion (PMC), which uses a challenging PMC instance introduced in \citep{yalccin2022factorization} that contains many deep local minima; and (ii) a real-world case study~\citep{xu2024implicit}, in which the sensing matrices are obtained from a real system and spurious local minima arise during simulation.
We then conduct success-rate studies (see Section~\ref{subsection:success-rate-comparison}) on larger problem instances by running additional trials. Although these experiments focus on medium- to large-scale settings, they demonstrate that the proposed SOD escape mechanism can deterministically and verifiably escape spurious local minima in realistic matrix sensing tasks. To illustrate the applicability of SOD-Escape beyond matrix sensing, we further conduct an illustrative neural-network experiment (see Section~\ref{subsec:an_illustrative_neural_network_experiment}) based on training a quadratic neural network.
Additional numerical experiments can be found in Appendix~\ref{append:additional_details_of_section_6}.

\subsection{Two Case Studies}
\label{subsec:two-case-studies}

We consider the PMC case (see Appendix~\ref{append:case-1}) following the experimental protocol in \citep{ma2023over}. By setting $n = 3$ and $\epsilon = 0.3$, $z = \pm(1, 0, 1)$ serve as ground-truth factorizations, whereas $\hat{x} = \pm(1, 0, -1)$ are two primary spurious solutions.We adopt small initialization \citep{stoger2021small} for GD: draw $\tilde{x}$ from the standard normal distribution and set $x_0 = \zeta \tilde{x}$ with $\zeta = 0.01$. We fix the random seed so that GD converges to the neighborhood of one spurious solution. We run GD for $6000$ iterations prior to activating the multi-step SOD escape mechanism, and for an additional $1000$ iterations afterward; the step size is $0.001$ throughout.
As shown in Figure~\ref{figure:exp2}(f), under this setting GD converges to the vicinity of the spurious solution $(-1, 0, 1)$ rather than to either ground truth. 
Consistently, the curves in the first-row panels (a)–(c) also support this observation, prior to the application of SOD escape mechanism.
After applying the multi-step SOD, all three curves ultimately approach zero, indicating convergence to the neighborhood of $(1, 0, 1)$.
The second-row panels (d) and (e) detail the internal procedure of the multi-step SOD escape mechanism ($l = 11$). We ensure that after a finite number of simulated steps $t$, the tensor objective decreases, i.e., $h^{l}(\tilde{\mathbf{w}}^{(t)}) < h^{l}(\hat{\mathbf{w}})$ (as shown in panel (d)), where $\hat{\mathbf{w}}$ denotes the lift of the matrix-space spurious point. Panel (e) shows that, at iteration $59000$ when the multi-step SOD escape mechanism is triggered, the $\mathbf{c}$-term in Equation~\eqref{equation:tilde_w_t} dominates the other two and becomes the leading component; projecting this term back to the matrix space yields the desired closed-form escape point $\check{x}$. 

\begin{figure*}[h]
	\centering
	\includegraphics[width=0.92\textwidth]{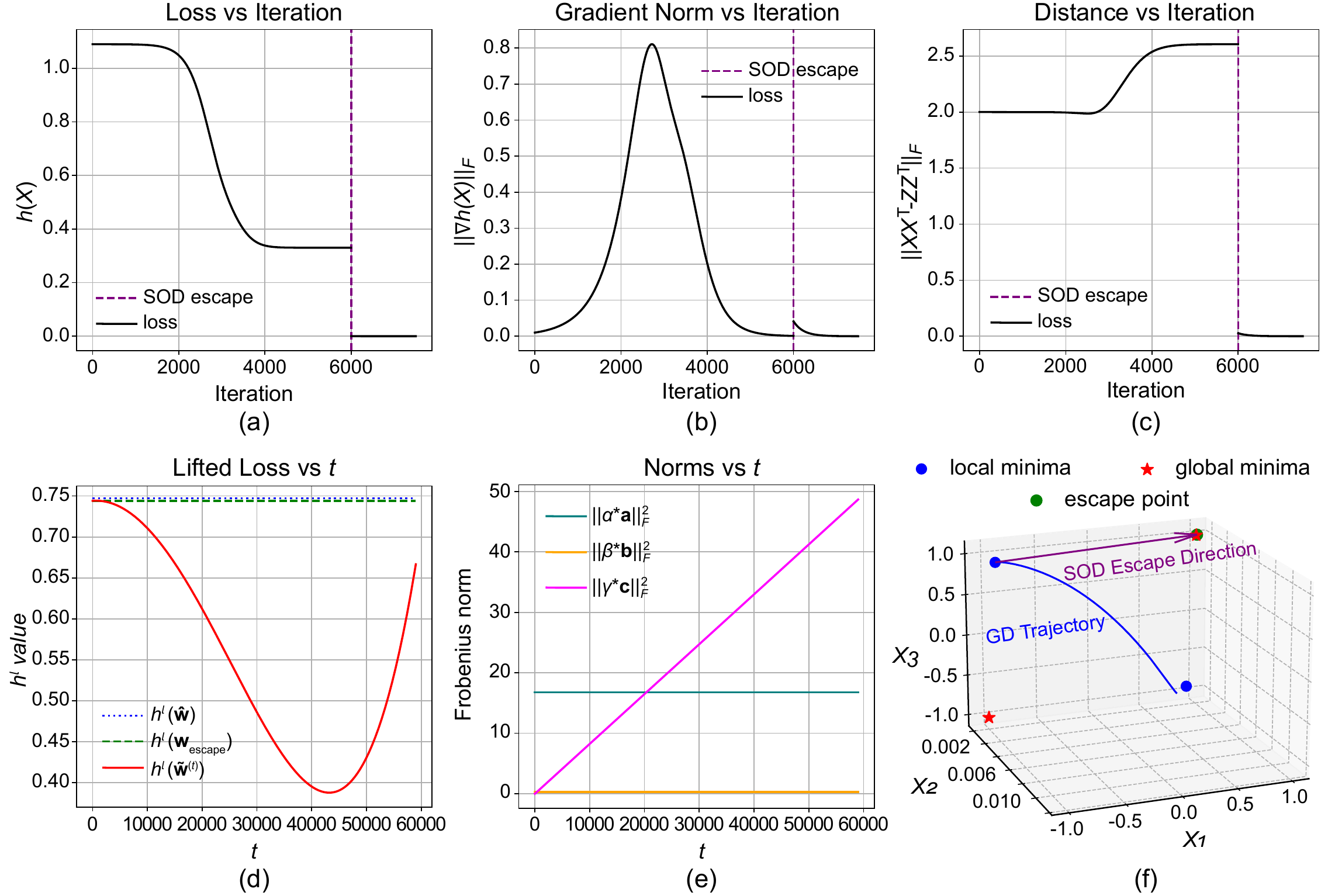}
	\caption{
		Experimental results of the PMC case. 
		\textbf{(a)}-\textbf{(c)}: Matrix objective function value / Frobenius norm of the gradient / Distance to ground truth vs. iteration. 
		\textbf{(d)}-\textbf{(e)}: Approximate tensor objective function value / Frobenius norms of the three tensors within subspace $S$ vs. the number of TPGD steps. 
		\textbf{(f)}: Three-dimensional representation of the trajectory of the optimization path.
	}
	\label{figure:exp2}
\end{figure*}

We consider another real-world MS case (see Appendix~\ref{append:case-2}) aimed at recovering a rank-1 low-rank matrix. The ground-truth factor 
$z = [1,0,0]^\top$.
In this numerical example, we randomly select the initialization and fix the random seed so that GD with step size $0.1$ converges to the local minimum $\hat{x}$. Figure~\ref{figure:exp3}(f) visualizes the 3D GD trajectory: prior to invoking the multi-step SOD escape mechanism, $100$ GD steps drive the iterate to the local minimum; the mechanism then computes an escaped point $\check{x}$, from which another $100$ GD steps with step size $0.1$ reach the ground truth $M^\star = zz^\top$. The first-row panels (a)--(c) corroborate this behavior.
As in the previous experiments, panels (d) and (e) depict the multi-step SOD escape procedure, which simulates TPGD in the lifted tensor space. With lifting order $l=5$ and $k=150{,}000$ simulated steps, we simultaneously achieve $h^{l}(\tilde{\mathbf{w}}^{(t)}) < h^{l}(\hat{\mathbf{w}})$ and observe that the $\mathbf{c}$-term dominates $\mathbf{a}$-term and $\mathbf{b}$-term, becoming the leading component. We then collapse $\tilde{\gamma}^{(t)} \mathbf{c}$ back to the matrix space to obtain the closed-form escape point $\check{x}$. As seen in panel (f), after applying the multi-step SOD escape mechanism, GD starting from $\check{x}$ travels a non-negligible distance and converges to the ground truth. This example further validates the effectiveness of the multi-step SOD escape mechanism. 

\begin{figure*}[h]
	\centering
	\includegraphics[width=0.92\textwidth]{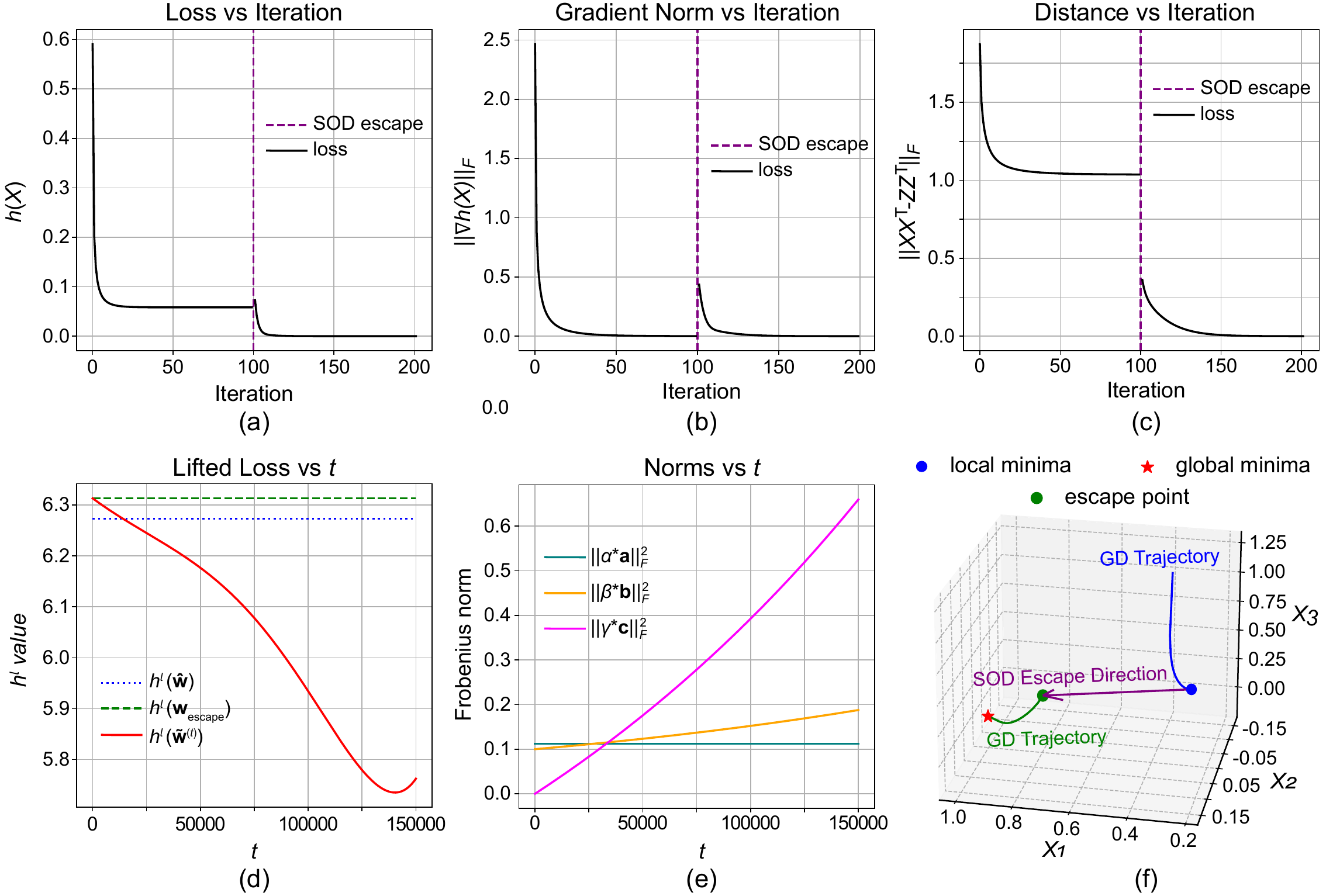}
	\caption{
		Experimental results of the real-world case. 
		\textbf{(a)}-\textbf{(c)}: Matrix objective function value / Frobenius norm of the gradient / Distance to ground truth vs. iteration. 
		\textbf{(d)}-\textbf{(e)}: Approximate tensor objective function value / Frobenius norms of the three tensors within subspace $S$ vs. the number of TPGD steps. 
		\textbf{(f)}: Three-dimensional representation of the trajectory of the optimization path.
	}
	\label{figure:exp3}
\end{figure*}

\subsection{Success Rate Comparison}
\label{subsection:success-rate-comparison}

\begin{figure*}[h]
	\centering
	\includegraphics[width=0.78\textwidth]{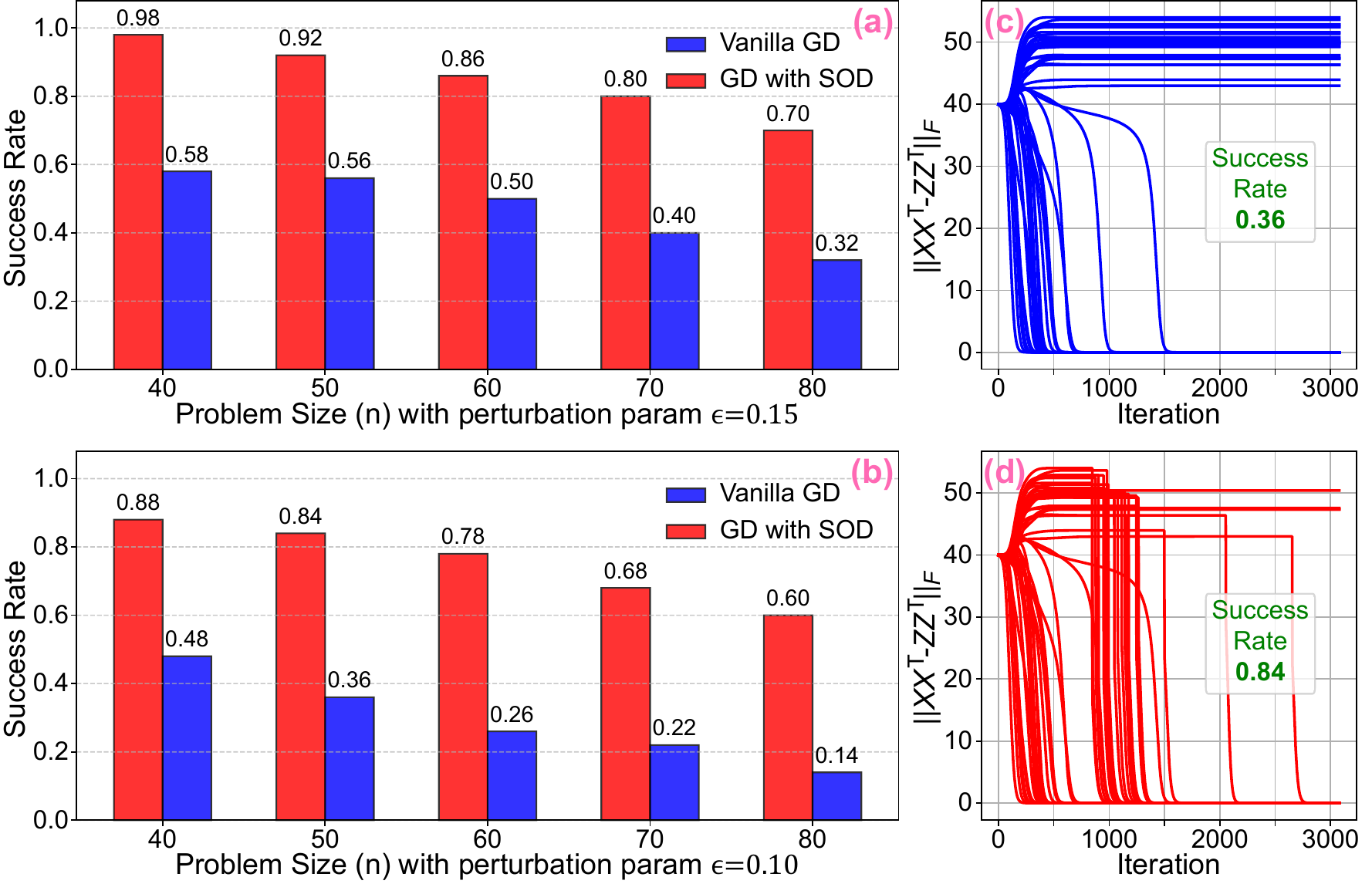}
	\caption{
		Panels (a) and (b) show the success rates under different problem sizes $n$ and different perturbation parameters $\epsilon$. 
		Panels (c) and (d) present the trajectories of $\| XX^\top - ZZ^\top \|_F$ over 50 trials across iterations, where (c) corresponds to GD and (d) corresponds to GD with SOD.
	}
	\label{figure:numerical_study}
\end{figure*}

In this study, we continue to use the PMC problem as our MS task. 
We evaluate the effects of increasing the problem size, characterized by the dimension $n\times n$ of the ground-truth matrix $M^\star$, as well as the impact of the perturbation magnitude $\epsilon$, on the convergence success rate. 
A trial is considered successful if the final solution satisfies $\|XX^\top - M^\star\|_F<0.02$, and the success rate is defined as the proportion of successful trials out of 50. The reason we use success rate instead of average reconstruction error is that failed trails will have very large errors, skewing the statistics by a large margin, distracting readers from the true message.
We test five different matrix sizes: $n=40,50,60,70,80$. For each size, we run 50 independent trials. The perturbation levels considered are $\epsilon=0.15$ and $\epsilon=0.10$. According to \citep{yalccin2022factorization}, smaller $\epsilon$ generally results in a larger RIP constant $\delta_p$, which in turn leads to a more complex optimization landscape \citep{zhang2019sharp}.
We evaluate the performance of two methods: vanilla GD and GD enhanced with our proposed multi-step SOD escape mechanism.
As shown in Figure~\ref{figure:numerical_study}, the SOD escape mechanism significantly increases the success rate of GD by effectively escaping many local minima. However, as the problem size increases or $\epsilon$ decreases, the optimization landscape becomes more intricate, leading to a general decrease in success rates for both methods.
Despite this decline, our method maintains a clear advantage over approaches that optimize entirely in the tensor space: optimization in tensor space is computationally infeasible for medium- to large-scale instances (e.g. $n>10$), as the variable $\mathbf{w}=\operatorname{vec}(X)^{\otimes l}$ grows in size to $(nr)^l$ with $X\in\mathbb{R}^{n\times r}$.

\subsection{An Illustrative Neural-Network Experiment}
\label{subsec:an_illustrative_neural_network_experiment}

We next provide an illustrative experiment showing that the proposed SOD-escape mechanism may also be useful beyond the standard MS formulation. 
In particular, we consider a neural-network (NN) training problem whose structure is closely related to BM-factorized MS~\citet{li2018algorithmic}. 
%
Motivated by this connection, we construct a one-hidden-layer quadratic neural network (QNN) and evaluate the proposed escape strategy in a binary classification task based on the Iris dataset \citep{fisher1936use}. 
We focus on distinguishing Versicolor from Virginica, 
where the training loss does not immediately decrease to zero and exhibits local basins where standard gradient descent stagnates. This makes the task suitable for examining whether SOD-escape can identify an effective descent direction after the baseline gradient method has stalled.

\begin{figure*}[h]
	\centering
	\includegraphics[width=0.85\textwidth]{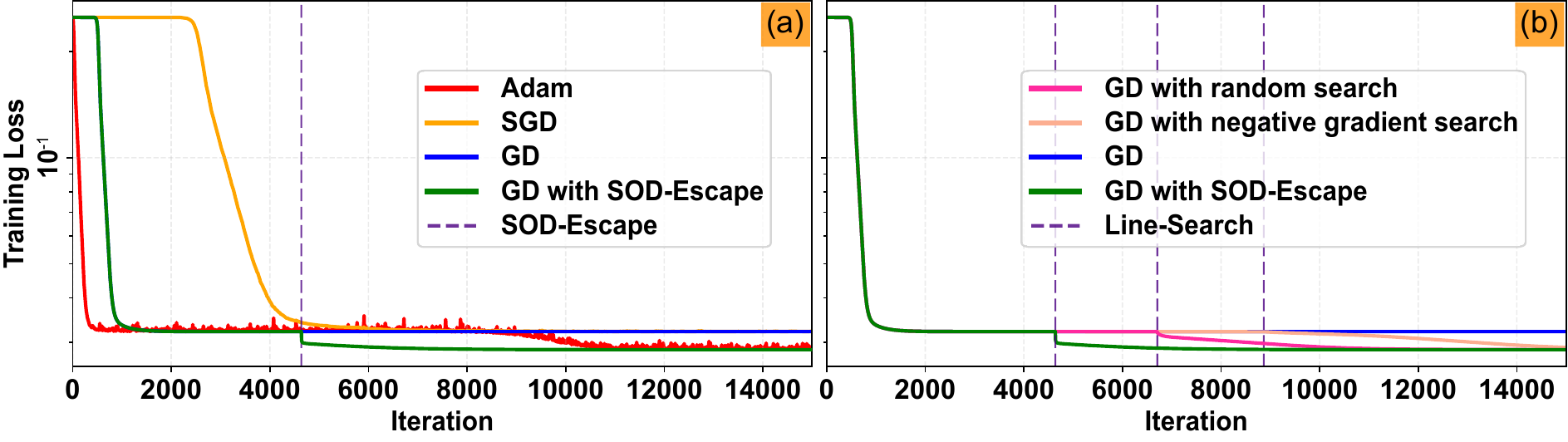}
	\caption{
		Training-loss curves for the QNN on the classification task. 
		Panel (a) compares GD-SOD with stochastic optimizers.
		Panel (b) compares different escape strategies after GD stagnates.
	}
	\label{figure:quadratic_nn}
\end{figure*}

Figure~\ref{figure:quadratic_nn} reports the training-loss curves. The baseline method is standard gradient descent, whereas our method applies the SOD-escape procedure once stagnation of GD is detected.
Panel~(a) compares GD-SOD with stochastic optimization methods, including SGD and Adam. In this experiment, both GD and SGD stagnate at a nonzero training loss and fail to escape the local basin within the plotted iterations. Adam is able to keep decreasing the loss, but the decrease is gradual. In contrast, once GD-SOD detects stagnation, the SOD-escape step produces a sharp drop in the training loss, reaching in a single escape step a loss level that Adam attains only after many additional iterations. 
%
Panel~(b) further investigates the role of the SOD direction as a deterministic line-search direction. In our construction, the candidate points in \eqref{equation:x_check_beta} and \eqref{equation:x_check_gamma} differ only through scalar coefficients multiplying prescribed matrix directions.
Therefore, the SOD-escape step can be viewed as a structured line search over a carefully chosen 
family of candidate escaping points. We compare this strategy with two alternatives: a random line search and a line search along the negative-gradient direction. Although these alternative searches can also find points with lower training loss, their improvements are substantially more gradual and require additional gradient iterations to reach comparable loss values. By contrast, the SOD-guided search produces a much larger immediate loss reduction.

Overall, this experiment provides a proof-of-concept demonstration that the proposed SOD-escape mechanism can be effective in a NN training problem.
While the present experiment focuses on a QNN motivated by the MS equivalence, 
it suggests a promising direction for extending structured escape mechanisms to broader classes of NN training problems via possibly approximate designs.

\section{Conclusions}
\label{sec7}
We proposed Simulated Oracle Direction (SOD) Escape, a deterministic framework for escaping spurious local minima in nonconvex matrix sensing via simulated tensor lifting—capturing the escape geometry of over-parameterized tensor spaces while operating entirely in the original matrix domain and avoiding explicit lifting costs. We provided two complementary analyses: a single-step mechanism that introduces the Escape Feasibility Score (EFS) to certify when a deterministic descent direction exists, and a multi-step mechanism that simulates truncated projected gradient descent (TPGD) within a structured subspace to produce closed-form escape points with explicit parameter dependence and provable objective decrease. Numerical experiments corroborate the theory, showing that SOD reliably escapes local minima and subsequently converges to the global solution across diverse instances. Overall, SOD offers a practical route to deterministic nonconvex optimization by converting over-parameterization insights into computable escape rules; future work may extend this simulated-lifting principle to broader nonconvex models and improve robustness to numerical precision at high lifting orders.

\bibliographystyle{unsrtnat}
\bibliography{reference}

\appendix

\renewcommand{\contentsname}{Appendix Contents} 
\setcounter{tocdepth}{2} 
\startcontents[appendix] 
\printcontents[appendix]{}{1}{\section*{Appendix Contents}} 

\section{Additional Details of Preliminary Knowledge}
\label{append:additional_details_of_section_preliminary_knowledge}
\subsection{Tensor Algebra Basics}

\begin{definition}[Tensor]
\label{definition:tensor}
	As a generalization of how vectors parametrize finite-dimensional vector spaces, tensors are parametrized using arrays formed from the Cartesian product of such spaces, as discussed in \citep{comon2008symmetric}. Specifically, an $l$-way array is defined as:
	\begin{equation*}
		\mathbf{s} = 
		\{s_{i_1 i_2 \dots i_l} 
		\mid 
		1 \leq i_k \leq n_k, 
		1 \leq k \leq l\} 
		\in 
		\mathbb{R}^{n_1 \times \cdots \times n_l}.
	\end{equation*}
	We treat tensors and arrays as synonymous, since there exists a natural isomorphism between them. 
	Furthermore, if $n_1 = \cdots = n_l = n$, we refer to the tensor (or array) as an $l$-order, $n$-dimensional tensor. For notational convenience, we use $\mathbb{R}^{n\circ l}$, where $n\circ l := n \times \cdots \times n$, repeated $l$ times. 
\end{definition}

\begin{definition}[Symmetric Tensor]
\label{definition:symmetric_tensor}
	Analogous to the definition of symmetric matrices, an order-$l$ tensor $\mathbf{s}$ with equal dimensions (i.e., $n_1 = \cdots = n_l$; also referred to as a hyper-cubic tensor) is said to be symmetric if its entries remain invariant under any permutation of indices:
	\begin{equation*}
		s_{i_{\sigma(1)} \cdots i_{\sigma(l)}} 
		= 
		s_{i_1 \cdots i_l},
	\end{equation*}
	for any $\sigma \in \mathcal{G}_l$, $i_1, \dots, i_l \in \{1, \dots, n\}$, where $\mathcal{G}_l$ denotes the symmetric group of all permutations on the set $\{1, \dots, l\}$. 
	The space of symmetric tensors is denoted by $\mathcal{S}^l(\mathbb{R}^n)$.
\end{definition}

\begin{definition}[Rank of Tensor]
\label{definition:rank_of_tensor}
	The rank of a cubic tensor $\mathbf{s} \in \mathbb{R}^{n\circ l}$ is defined as
	\begin{equation*}
		\operatorname{rank}(\mathbf{s}) = 
		\min 
		\left\{
		r \mid \mathbf{s} 
		= 
		\sum_{i=1}^r 
		a_i \otimes b_i \otimes c_i \otimes \cdots
		\right\},
	\end{equation*}
    where $a_i,b_i,c_i,\dots \in \mathbb{R}^n$ are finitely many vectors.
	Furthermore, as noted in \citep{kolda2015numerical}, if $\mathbf{s}$ is a symmetric tensor, it can be decomposed as
	\begin{equation*}
		\mathbf{s} = 
        \sum_{i=1}^r \varsigma_i a_i \otimes \cdots \otimes a_i
        := 
        \sum_{i=1}^r \varsigma_i a_i^{\otimes l},
	\end{equation*}
	where the rank is defined as the number of nonzero $\varsigma_i$, analogous to the rank of symmetric matrices.
	A central concept in this work is that of rank-1 tensors. A tensor $\mathbf{s}$ is rank-1 if and only if
	$
        \mathbf{s} = a^{\otimes l}
    $
	for some vector $a \in \mathbb{R}^n$.
\end{definition}

\begin{definition}[Tensor Multiplication]
\label{definition:tensor_multiplication}
	The outer product of two tensors, denoted by $\otimes$, is an operation that combines two tensors to produce a higher-order tensor. Specifically, the outer product of tensors $\mathbf{s}$ and $\mathbf{t}$, of orders $p$ and $q$ respectively, yields a tensor of order $p+q$, denoted as $\mathbf{o} = \mathbf{s} \otimes \mathbf{t}$, where
    $
        o_{i_1\cdots i_p j_1 \dots j_q} 
		= 
		s_{i_1\cdots i_p} t_{j_1 \cdots j_q}
    $.
	When both tensors have the same dimension, the outer product is defined as
	\begin{equation*}
		\otimes: \
		\mathbb{R}^{n \circ p} 
		\times 
		\mathbb{R}^{n \circ q} 
		\to \mathbb{R}^{n \circ (p+q)}.
	\end{equation*}
	For notational convenience, repeated outer products of a vector $a$ are written as
	\begin{equation*}
		\underbrace{
			a \otimes \cdots \otimes a
		}_{l \ \text{times}}
		\mathrel{:=}  a^{\otimes l}.
	\end{equation*}
	We also define the inner product between tensors. The mode-$l$ inner product between two tensors, assuming they share the same size in the $l$-th mode, is denoted by $\langle \mathbf{s}, \mathbf{t} \rangle_l$. Without loss of generality, assuming $l = 1$, the mode-$1$ inner product is defined as
	\begin{equation*}
		\left[ 
		\langle \mathbf{s}, \mathbf{t} \rangle_1 \right]_{i_2\cdots i_p j_2 \cdots j_q} 
		= 
		\sum_{\iota=1}^{n_1} 
		s_{\iota i_2\cdots i_p} 
		t_{\iota j_2 \cdots j_q}.
	\end{equation*}
	In $\langle \cdot, \cdot \rangle_l$, the summation is performed over the $l$-th mode of both tensors, where the total number of elements is $n_l$. 
	This definition can be naturally extended to multi-mode inner products by summing over several modes, denoted as $\langle \mathbf{s}, \mathbf{t} \rangle_{l_1, \cdots, l_m}$. 
	Such inner products are also referred to as tensor contractions, where the contraction is performed over modes $l_1, \cdots, l_m$.
\end{definition}

\begin{lemma}[Tensor Outer Product Identity]
\label{lemma:kronecker_identity}
	Let $P,Q,U,V$ be arbitrary matrices of compatible dimensions. Then the following identity holds:
	\begin{equation*}
		\langle 
		P \otimes Q, U \otimes V 
		\rangle_{2,4} = 
		PU \otimes QV.
	\end{equation*}
	A commonly used special case is:
	\begin{equation*}
		\langle P^{\otimes l}, Q^{\otimes l}\rangle
		= 
		\langle P, Q\rangle^l
		= 
		\operatorname{Tr}(P^\top Q)^l.
	\end{equation*}
\end{lemma}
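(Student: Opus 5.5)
The plan is to prove the general identity entrywise and then obtain the special case by iterating it and contracting over every mode. Throughout I will read $P\otimes Q$ as the order-4 tensor with entries $(P\otimes Q)_{ijkl}=P_{ij}Q_{kl}$. The subscript ``$2,4$'' will mean contraction of mode $2$ of the first tensor against mode $1$ of the second, and mode $4$ of the first against mode $3$ of the second. This is the pairing convention under which $\langle\cdot,\cdot\rangle_{2*[l]}$ is used in \eqref{equation:tensor_objfunc_h} to reproduce the matrix product $XX^\top$ blockwise, and I will state it explicitly since the definition of tensor multiplication leaves the pairing of modes implicit.

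With this convention, the entries of the left-hand side are
\[
\bigl[\langle P\otimes Q, U\otimes V\rangle_{2,4}\bigr]_{i,j,k,l}
=\sum_{\iota}\sum_{\kappa} P_{i\iota}Q_{k\kappa}U_{\iota j}V_{\kappa l}.
\]
Since the two summation indices are independent, the double sum factors as
\[
\Bigl(\sum_\iota P_{i\iota}U_{\iota j}\Bigr)\Bigl(\sum_\kappa Q_{k\kappa}V_{\kappa l}\Bigr)=(PU)_{ij}(QV)_{kl}.
\]
This is exactly the $(i,j,k,l)$ entry of $PU\otimes QV$, which gives the first identity. Compatibility of dimensions is used only to make the sums over $\iota$ and $\kappa$ well defined.

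For the special case, $\langle P^{\otimes l},Q^{\otimes l}\rangle$ denotes the full contraction over all $2l$ modes. Writing out the entries gives
\[
\sum_{i_1,j_1,\dots,i_l,j_l}\ \prod_{s=1}^l P_{i_sj_s}Q_{i_sj_s}.
\]
The summation is over a Cartesian product of independent index pairs, so by distributivity it factors into $\prod_{s=1}^l\sum_{i,j}P_{ij}Q_{ij}=\langle P,Q\rangle^l$. Finally, $\langle P,Q\rangle=\sum_{ij}P_{ij}Q_{ij}=\operatorname{Tr}(P^\top Q)$, which completes the chain. Alternatively, one can induct on $l$ using $\langle \mathbf{s}\otimes A,\mathbf{t}\otimes B\rangle=\langle\mathbf{s},\mathbf{t}\rangle\langle A,B\rangle$, which is again an entrywise factorization.

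The only real obstacle is notational: I need to fix which modes are paired in $\langle\cdot,\cdot\rangle_{2,4}$, because with a different pairing the result would be a transposed or permuted product. I will adopt the convention stated above and remark that it is the one consistent with $\mathcal{M}(\mathbf{s},\mathbf{t})$ in \eqref{equation:auxiliary-tensor-function}. Once the convention is fixed, the rest is a direct index computation.
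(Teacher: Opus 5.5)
Your proof is correct under the convention you fix, and it takes a more elementary route than the paper. The paper gives no computation: it identifies the tensor outer product with an unreshaped Kronecker product and cites the mixed-product rule $(A\otimes B)(C\otimes D)=AC\otimes BD$ from the Matrix Cookbook, and the full-contraction formula is simply asserted as a special case. You instead verify the identity entrywise by factoring the double sum over the independent indices $\iota,\kappa$. You also prove $\langle P^{\otimes l},Q^{\otimes l}\rangle=\langle P,Q\rangle^l$ separately, by distributivity over the $l$ independent index pairs. Your version is self-contained and makes the mode pairing explicit. The paper's citation is shorter and shows that the ``$2,4$'' contraction is ordinary multiplication of Kronecker products: the column multi-index of $P\otimes Q$ against the row multi-index of $U\otimes V$. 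That is exactly the pairing you adopt.

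One remark in your write-up is wrong and should be replaced: your pairing is \emph{not} the one used in $\mathcal{M}(\mathbf{s},\mathbf{t})$ or in \eqref{equation:tensor_objfunc_h}. The paper's definition of $\langle\cdot,\cdot\rangle_l$ contracts the $l$-th mode of \emph{both} tensors. That is what gives $\langle X^{\otimes l},X^{\otimes l}\rangle_{2*[l]}=(XX^\top)^{\otimes l}$. With your pairing (mode $2k$ of the first against mode $2k-1$ of the second), the same expression would require multiplying an $n\times r$ matrix by an $n\times r$ matrix, which is undefined unless $n=r$. Under the paper's literal same-mode convention, the left-hand side of the lemma equals $PU^\top\otimes QV^\top$ (up to ordering of free modes). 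So the stated form $PU\otimes QV$ is justified only by the Kronecker reading the paper cites, and that is how you should motivate your convention. You should also say explicitly that you order the free modes of the result as $(i,j,k,l)$. The paper's definition lists the free modes of the first tensor before those of the second, which gives the permuted arrangement $(i,k,j,l)$.
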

\begin{proof}
	Please refer to Section 10.2 of \citep{petersen2008matrix} for a detailed derivation. This identity corresponds to the tensor outer product formulation, where the tensor outer product is equivalent to the unreshaped version of the Kronecker product.
\end{proof}

\begin{definition}[Tensor Norm]
\label{definition:tensor_norm}
	We adopt norm definitions consistent with those proposed in \citep{qi2019tensor}.
	For a cubic tensor $\mathbf{s} \in \mathbb{R}^{n \circ l}$, the nuclear norm $\|\cdot\|_*$ is defined as 
    \begin{equation*}
        \|\mathbf{s}\|_* = \inf \Biggl\{ 
        \sum_{j=1}^{s_m} |\varsigma_j| : 
        \mathbf{s} = \sum_{j=1}^{s_m} 
        \varsigma_j s_j^{\otimes l} 
        \Biggr\},
    \end{equation*}
    where $\|s_j\|_2 = 1$ and $s_j \in \mathbb{R}^{n}$.
    The spectral norm $\|\cdot\|_S$ is then defined as follows:
	\begin{equation*}
		\|\mathbf{s}\|_S 
		= \sup \Biggl\{ |\langle \mathbf{s}, a^{\otimes l} \rangle| : \|a\|_2 = 1, \, a \in \mathbb{R}^{n} \Biggr\}.
	\end{equation*}
\end{definition}

\begin{lemma}[Dual Relationship Between Tensor Norms] 
\label{lemma:dual_tensor_norm}
	The spectral norm $\|\cdot\|_S$ is the dual norm of the nuclear norm $\|\cdot\|_*$.
	That is, for any tensor $\mathbf{s}$, the following identity holds:
	\begin{equation*}
		\|\mathbf{s}\|_S 
		= 
		\sup_{\|\mathbf{t}\|_* \leq 1} 
		| \langle 
		\mathbf{s}, \mathbf{t} 
		\rangle |, 
	\end{equation*}
	where $\mathbf{t}$ is a tensor of the same dimensions as $\mathbf{s}$.
	In particular, the \textit{subspace spectral norm} of $\mathbf{s}$ over a subspace $R$ is defined as
	\begin{equation*}
		\|\mathbf{s}\|_{S \Join R}
		:= 
		\sup_{\mathbf{t} \in R,\ \|\mathbf{t}\|_* \leq 1} 
		| \langle 
		\mathbf{s}, \mathbf{t} 
		\rangle |.
	\end{equation*}
\end{lemma}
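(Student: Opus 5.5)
The plan is to prove the identity $\|\mathbf{s}\|_S=\sup_{\|\mathbf{t}\|_*\le 1}|\langle\mathbf{s},\mathbf{t}\rangle|$ by establishing the two inequalities separately, using only Definition~\ref{definition:tensor_norm}. The second display in the lemma, $\|\cdot\|_{S\Join R}$, is a definition obtained by restricting the supremum to $\mathbf{t}\in R$, so nothing needs to be proved for it. Note that $R$ is spanned by rank-1 tensors, so its elements have finite nuclear norm and the restricted supremum is well-defined.

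First I will settle a convention. The nuclear norm in Definition~\ref{definition:tensor_norm} is an infimum over decompositions $\mathbf{t}=\sum_j\varsigma_j s_j^{\otimes l}$, so it is finite exactly when $\mathbf{t}\in\mathcal{S}^l(\mathbb{R}^n)$ and $+\infty$ otherwise. Every symmetric tensor does admit such a finite decomposition, by the polarization identity: symmetric rank-1 powers $a^{\otimes l}$ span $\mathcal{S}^l(\mathbb{R}^n)$, as recalled in Definition~\ref{definition:rank_of_tensor}. Hence the supremum on the right effectively ranges over symmetric $\mathbf{t}$. For a general $\mathbf{s}$, write $\operatorname{Sym}(\mathbf{s})$ for its symmetrization. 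Both sides of the identity are unchanged when $\mathbf{s}$ is replaced by $\operatorname{Sym}(\mathbf{s})$, because $\langle\mathbf{s},\mathbf{t}\rangle=\langle\operatorname{Sym}(\mathbf{s}),\mathbf{t}\rangle$ for symmetric $\mathbf{t}$; this covers in particular $\mathbf{t}=a^{\otimes l}$. So without loss of generality $\mathbf{s}$ is symmetric.

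For the inequality ``$\le$'', fix a unit vector $a$ and set $\mathbf{t}=a^{\otimes l}$. Taking the one-term decomposition with $\varsigma_1=1$ and $s_1=a$ gives $\|\mathbf{t}\|_*\le 1$. Therefore $|\langle\mathbf{s},a^{\otimes l}\rangle|\le\sup_{\|\mathbf{t}\|_*\le1}|\langle\mathbf{s},\mathbf{t}\rangle|$, and taking the supremum over $a$ yields $\|\mathbf{s}\|_S\le\sup_{\|\mathbf{t}\|_*\le1}|\langle\mathbf{s},\mathbf{t}\rangle|$.

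For the inequality ``$\ge$'', take any $\mathbf{t}$ with $\|\mathbf{t}\|_*\le1$ and any $\varepsilon>0$. By the definition of the infimum there is a decomposition $\mathbf{t}=\sum_j\varsigma_j s_j^{\otimes l}$ with $\|s_j\|_2=1$ and $\sum_j|\varsigma_j|\le 1+\varepsilon$. Bilinearity and the triangle inequality then give
\[
|\langle\mathbf{s},\mathbf{t}\rangle|\le\sum_j|\varsigma_j|\,|\langle\mathbf{s},s_j^{\otimes l}\rangle|\le(1+\varepsilon)\|\mathbf{s}\|_S .
\]
Letting $\varepsilon\to0$ and taking the supremum over $\mathbf{t}$ closes the argument. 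I expect no step to be a genuine obstacle. The only delicate point is the bookkeeping above: the nuclear norm is defined via symmetric decompositions, so one must check that the supremum is over symmetric $\mathbf{t}$ and that $\mathbf{s}$ can be symmetrized without changing either side.
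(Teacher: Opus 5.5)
Your proof is correct. It takes a more elementary route than the paper, which gives no argument of its own and simply cites \cite[Lemma 21]{lim2013blind}.

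Your two inequalities are the standard gauge/support-function duality, carried out by hand. Under Definition~\ref{definition:tensor_norm}, $\|\cdot\|_*$ is the gauge of the convex hull of $K=\{\pm a^{\otimes l}:\|a\|_2=1\}$, and $\|\cdot\|_S$ is the support function of $K$. So the ``$\le$'' direction amounts to testing on the generators $a^{\otimes l}$. The ``$\ge$'' direction is the triangle inequality over a near-optimal decomposition, and the $(1+\varepsilon)$ slack correctly handles an infimum you have not shown to be attained.

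The citation places the fact within the general theory of tensor nuclear and spectral norms. Your argument is self-contained and tailored to the \emph{symmetric} definitions the paper actually uses. It therefore sidesteps any question of whether a duality stated for general rank-one terms $x_1\otimes\cdots\otimes x_l$ transfers to these norms. That transfer would need the nontrivial fact that the symmetric and general versions agree on symmetric tensors.

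One simplification is available: drop the reduction to symmetric $\mathbf{s}$ and the polarization remark. Both of your inequalities hold verbatim for arbitrary $\mathbf{s}$, because $\|\mathbf{s}\|_S$ only ever pairs $\mathbf{s}$ with symmetric powers $a^{\otimes l}$. The only convention you need is $\inf\emptyset=+\infty$, so that non-symmetric $\mathbf{t}$ never enter the supremum.
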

\begin{proof}
	Please refer to \cite[Lemma 21]{lim2013blind} for a detailed derivation..
\end{proof}

\subsection{Matrix Sensing Basics}

\begin{lemma}[Gradient of the Tensor Objective Function]
\label{lemma:gradient-of-the-tensor-objective}
	If $\mathbf{w} \in \mathbb{R}^{nr \circ l}$, then the gradient of $h^l(\mathbf{w})$ is given by: 
    \begin{equation}
        \nabla h^l (\mathbf{w}) 
		= 
		\Big\langle
		\langle
		\mathbf{A}_r^{\otimes l},
		\mathbf{w}
		\rangle_{[3l-1]},
		\langle
		\mathbf{A}^{\otimes l},
		\tilde{\mathcal{M}} (\mathbf{w})
		- 
		\tilde{\mathcal{M}} (\operatorname{vec}(Z)^{\otimes l})
		\rangle_{[3l-1,3l]}
		\Big\rangle_{[2l-1]},
		\label{equation:tensor_gradient}
    \end{equation}
	where $\mathbf{A}_r$ is defined as $I_r \oslash_{2,3} \mathbf{A} \in \mathbb{R}^{m \times nr \times nr}$, and $\oslash_{2,3}$ denotes the Kronecker product applied only to the last two dimensions of $\mathbf{A}$. 
	The operator $\tilde{\mathcal{M}}$ is defined in Equation~\eqref{equation:auxiliary-tensor-function}.
	The subscript $[3l-1, 3l]$ denotes tensor contraction along modes $2, 3, 5, 6, \ldots, 3l-1, 3l$; the interpretation of other subscripts follows similarly. For example, $2*[l]$ denotes the sequence $2,4,6,8,\ldots,2l$.
\end{lemma}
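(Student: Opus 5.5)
We compute the gradient of $h^l$ by reducing it to a multilinear structure and then applying the chain rule. First we write the objective in component form. For $\mathbf{w}\in\mathbb{R}^{nr\circ l}$, $\mathbf{P}(\mathbf{w})=\langle\mathbf{P}^{\otimes l},\mathbf{w}\rangle_{3*[l]}$ is linear in $\mathbf{w}$, so $\tilde{\mathcal{M}}(\mathbf{w})=\mathcal{M}(\mathbf{w},\mathbf{w})$ is a symmetric bilinear form in $\mathbf{w}$ with values in $\mathbb{R}^{[n\times n]\circ l}$. Define the residual tensor $\mathbf{R}(\mathbf{w}):=\langle\mathbf{A}^{\otimes l},\tilde{\mathcal{M}}(\mathbf{w})\rangle-b^{\otimes l}\in\mathbb{R}^{m\circ l}$. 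Since $b=\mathcal{A}(ZZ^\top)$, Lemma~\ref{lemma:kronecker_identity} together with the identity $\mathbf{P}(\operatorname{vec}(Z)^{\otimes l})=Z^{\otimes l}$ from Equation~\eqref{equation:permutation_tensor} gives
\[
b^{\otimes l}=\langle\mathbf{A}^{\otimes l},(ZZ^\top)^{\otimes l}\rangle=\langle\mathbf{A}^{\otimes l},\tilde{\mathcal{M}}(\operatorname{vec}(Z)^{\otimes l})\rangle.
\]
The residual therefore becomes $\mathbf{R}(\mathbf{w})=\langle\mathbf{A}^{\otimes l},\tilde{\mathcal{M}}(\mathbf{w})-\tilde{\mathcal{M}}(\operatorname{vec}(Z)^{\otimes l})\rangle_{[3l-1,3l]}$. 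This is precisely the second factor in Equation~\eqref{equation:tensor_gradient}, with the contraction taken over the two $n$-modes of each $A_i$-slice.

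Next we differentiate $h^l(\mathbf{w})=\|\mathbf{R}(\mathbf{w})\|_F^2$ directionally. Taking a perturbation $\mathbf{w}+\epsilon\mathbf{d}$ and using bilinearity and symmetry of $\mathcal{M}$,
\[
\frac{d}{d\epsilon}h^l=2\big\langle\mathbf{R}(\mathbf{w}),\ \langle\mathbf{A}^{\otimes l},\mathcal{M}(\mathbf{w},\mathbf{d})+\mathcal{M}(\mathbf{d},\mathbf{w})\rangle\big\rangle.
\]
The symmetry of each $A_i$ lets the two cross terms merge. In matrix language this is the familiar computation $\langle A_i,YX^\top+XY^\top\rangle=2\langle A_iX,Y\rangle$, applied modewise. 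We then transpose the operator acting on $\mathbf{d}$. Because $\mathbf{P}$ is just a reindexing of $\operatorname{vec}$, contracting $A_i$ against $\mathbf{P}(\mathbf{w})$ along its $n$-mode is the same as applying $I_r\otimes A_i$ to $\operatorname{vec}$. This gives the tensor $\mathbf{A}_r=I_r\oslash_{2,3}\mathbf{A}$. Doing this in each of the $l$ factors turns the $\mathbf{w}$-dependent part into $\langle\mathbf{A}_r^{\otimes l},\mathbf{w}\rangle_{[3l-1]}$. Contracting that against $\mathbf{R}(\mathbf{w})$ over the $m$-modes (the $[2l-1]$ contraction) yields the gradient, with all scalar constants absorbed into the normalization convention of $f^l$.

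The main obstacle is the index bookkeeping. We must check that the mode labels $[3l-1]$, $[3l-1,3l]$ and $[2l-1]$ match the order in which the modes of $\mathbf{A}_r^{\otimes l}$ and $\mathbf{A}^{\otimes l}$ are interleaved. We must also confirm that the Kronecker structure $I_r\oslash_{2,3}\mathbf{A}$ correctly represents $X\mapsto A_iX$ under the chosen $\operatorname{vec}$ ordering. To handle this, we would first verify the case $l=1$, where the formula must reduce to Equation~\eqref{equation:matrix_gradient}, namely $\nabla h=2\mathcal{A}^*\mathcal{A}(XX^\top-M^\star)X$ in vectorized form. We would then extend to general $l$ by noting that every operation factorizes across the $l$ tensor factors, via Lemma~\ref{lemma:kronecker_identity}.
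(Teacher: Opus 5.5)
Your derivation is correct in substance. The paper gives no argument of its own here; it only points to \cite[Lemma~5.2]{ma2023over}. What you have written is therefore a self-contained version of the chain-rule computation the paper outsources:
\begin{itemize}
\item identify $b^{\otimes l}$ with $\langle\mathbf{A}^{\otimes l},\tilde{\mathcal{M}}(\operatorname{vec}(Z)^{\otimes l})\rangle$;
\item differentiate the square;
\item merge the two cross terms using symmetry of the $A_i$;
\item move the $\mathbf{d}$-dependence outside via $\operatorname{vec}(A_iX)=(I_r\otimes A_i)\operatorname{vec}(X)$, factor by factor.
\end{itemize}

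The step you should not wave away is the scalar constant. The paper defines $f^l(\mathbf{M})=\|\langle\mathbf{A}^{\otimes l},\mathbf{M}\rangle-b^{\otimes l}\|_F^2$ with no $\tfrac12$. With that definition your own computation gives $Dh^l(\mathbf{w})[\mathbf{d}]=4\langle\mathbf{R}(\mathbf{w}),\langle\mathbf{A}^{\otimes l},\mathcal{M}(\mathbf{d},\mathbf{w})\rangle\rangle$. One factor $2$ comes from the square. The other comes from merging the cross terms, and it is a single $2$, not $2^l$, because the two terms coincide after contraction rather than being summed inside each factor. Hence $\nabla h^l$ is exactly $4$ times the displayed expression, and nothing in the paper's normalization absorbs this.

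Your planned $l=1$ check would expose this discrepancy rather than confirm the lemma. At $l=1$ the displayed formula is $\operatorname{vec}(\mathcal{A}^*\mathcal{A}(XX^\top-M^\star)X)$, which is half of \eqref{equation:matrix_gradient}. The true gradient of $h^1(\operatorname{vec}(X))=2h(X)$ is $4\operatorname{vec}(\mathcal{A}^*\mathcal{A}(XX^\top-M^\star)X)$. So you should conclude an identity up to the positive factor $4$ (equivalently, an exact identity for $\tfrac14 f^l$). This is harmless downstream, where it only rescales $\eta$ in the TPGD coefficients, but it should be stated.

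Two remarks dispose of most of the bookkeeping you flag. First, each $I_r\otimes A_i$ is symmetric, so it does not matter which of the two $nr$-modes of each $\mathbf{A}_r$ factor is contracted against $\mathbf{w}$. Second, verify the transposition identity on general elementary tensors $\operatorname{vec}(X_1)\otimes\cdots\otimes\operatorname{vec}(X_l)$ before extending by bilinearity, not only on $\operatorname{vec}(X)^{\otimes l}$. The span of the latter is just the symmetric subspace, whereas $\mathbf{w}$ and $\mathbf{d}$ range over all of $\mathbb{R}^{nr\circ l}$.
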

\begin{proof}
	Please refer to the proof of \cite[Lemma 5.2]{ma2023over}.
\end{proof}

\begin{lemma}
\label{lemma:upper_bound_A_star_A}
	For an arbitrary matrix $M\in\mathbb{R}^{n\times n}$, we have
	\begin{equation*}
		\|\mathcal{A}^*\mathcal{A}(M)\|_2^2
		\leq
		m \Xi^2 (1+\delta_p) \|M\|_F^2,
	\end{equation*}
	where $\Xi^2:=\max\{\|A_1\|_2^2, \cdots, \|A_m\|_2^2\}$. 
\end{lemma}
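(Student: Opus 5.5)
The plan is a direct chain of elementary inequalities: triangle inequality, then Cauchy--Schwarz, then the RIP of Definition~\ref{definition:rip}. Throughout, $\|\cdot\|_2$ on matrices denotes the spectral norm.

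First I write $y := \mathcal{A}(M) \in \mathbb{R}^m$, so that $\mathcal{A}^*\mathcal{A}(M) = \sum_{i=1}^m y_i A_i$ by the definition of the adjoint. The triangle inequality and the definition of $\Xi$ give
\begin{equation*}
\|\mathcal{A}^*\mathcal{A}(M)\|_2 \le \sum_{i=1}^m |y_i|\,\|A_i\|_2 \le \Xi\,\|y\|_1 \le \Xi\sqrt{m}\,\|y\|_2 ,
\end{equation*}
where the last step is Cauchy--Schwarz, $\|y\|_1 \le \sqrt{m}\|y\|_2$. Squaring yields
\begin{equation*}
\|\mathcal{A}^*\mathcal{A}(M)\|_2^2 \le m\,\Xi^2\,\|\mathcal{A}(M)\|_2^2 .
\end{equation*}

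Next I bound $\|\mathcal{A}(M)\|_2^2$ using the upper half of the RIP. The RIP in Definition~\ref{definition:rip} is stated only for symmetric matrices, so I first reduce to the symmetric part $M_s := (M+M^\top)/2$. Because each $A_i$ is symmetric, $\langle A_i, M\rangle = \langle A_i, M_s\rangle$, hence $\mathcal{A}(M) = \mathcal{A}(M_s)$. Moreover $\|M_s\|_F \le \|M\|_F$, since $M \mapsto M_s$ is an orthogonal projection. Applying $\delta_p$-RIP to $M_s$ then gives
\begin{equation*}
\|\mathcal{A}(M)\|_2^2 \le (1+\delta_p)\|M_s\|_F^2 \le (1+\delta_p)\|M\|_F^2 ,
\end{equation*}
and combining this with the previous display proves the claim.

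The only genuine obstacle is the rank requirement in the RIP. The bound $(1+\delta_p)\|M_s\|_F^2$ is available only when $\operatorname{rank}(M_s)\le p$. I would therefore either read the lemma with $p$ large enough to cover the matrices of interest, or state it explicitly for $M$ with $\operatorname{rank}(M_s) \le p$. This is sufficient for every use in the paper: there the lemma is applied to matrices such as $u_n v_r^\top + v_r u_n^\top$ (rank at most $2$) or $\hat X\hat X^\top - M^\star$ (rank at most $2r$). The remaining steps are routine and hold for every $M$.
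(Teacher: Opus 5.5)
Your proof is correct and is essentially the paper's argument. The paper bounds $\|\sum_i \langle A_i, M\rangle A_i\|_2^2 \le (\sum_i \langle A_i, M\rangle^2)(\sum_i \|A_i\|_2^2) \le m\Xi^2\|\mathcal{A}(M)\|_2^2$ in a single Cauchy--Schwarz step, which gives the same constant as your $\ell_1$--$\ell_2$ route, and then invokes the RIP upper bound.

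Your symmetrization step and rank caveat are a genuine improvement. The paper applies RIP to an arbitrary $M$ without comment, and the lemma as literally stated can fail. For example, with $p=1$, $m=1$ and $A_1 = I_n$, one has $\delta_1 = 0$ and $\Xi = 1$, yet $M = I_n$ gives $\|\mathcal{A}^*\mathcal{A}(M)\|_2^2 = n^2 > n = m\Xi^2(1+\delta_1)\|M\|_F^2$. Your restriction $\operatorname{rank}(M_s)\le p$ costs nothing for the paper's only application, $M = u_n v_r^\top + v_r u_n^\top$ in the $U_\beta$ validation, provided $p \ge 2$.
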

\begin{proof}
	By the definition of $\mathcal{A}^*\mathcal{A}$, we have
    \begin{equation*}
		\|\mathcal{A}^*\mathcal{A}(M)\|_2^2 
		= \Biggl\| \sum_{i=1}^{m} \langle A_i, M \rangle A_i \Biggr\|_2^2 
		\leq \left( \sum_{i=1}^{m} |\langle A_i, M \rangle|^2 \right) \left( \sum_{i=1}^{m} \|A_i\|_2^2 \right) 
		\leq m \Xi^2 \sum_{i=1}^{m} \langle A_i, M \rangle^2.
    \end{equation*}
	By the RIP property (please refer to Definition~\ref{definition:rip}),
    \begin{equation*}
        m \Xi^2 \sum_{i=1}^{m}
        \langle A_i, M\rangle^2
        =
        m \Xi^2
        \|\mathcal{A}(M)\|_2^2
        \leq
        m \Xi^2 (1+\delta_p) \|M\|_F^2.
    \end{equation*}
	This concludes the proof.
\end{proof}

\section{Additional Details of Section~\ref{sec4}}
\label{append:additional_details_of_section_4}
\subsection{Proof of Proposition~\ref{proposition:single_step_rank1_tensor_approx}}
\label{append:proof_of_proposition_single_step_rank1_tensor_approx}

\begin{proof}
	We begin by examining two key projection directions that characterize the approximation structure.
	We then show that any direction in the subspace spanned by them can be approximated.

	\proofstep{Orthogonality Between Tensor Projection Directions}
	Using first-order optimality condition, we have $u_n \perp v_\phi$ for all $\phi = 1, \dots, r$ and this yields:
    $
        \operatorname{vec}(u_n q_r^\top)^{\otimes l}
		\perp 
		\operatorname{vec}(v_r q_r^\top)^{\otimes l}
    $.
	Such two tensors thus form orthogonal basis vectors in the spectral subspace $R$, providing the foundation for subsequent analysis of the rank-1 approximation.
	
	\proofstep{Approximation in the Projection Direction of $\operatorname{vec}(u_n q_r^\top)^{\otimes l}$}
	To establish this result, we evaluate the approximation condition:
    \begin{equation*}
        \langle
		\operatorname{vec}(\check{X})^{\otimes l},
		\operatorname{vec}(u_n q_r^\top)^{\otimes l}
		\rangle
		\rightarrow
		\langle
		\hat{\mathbf{w}} + 
		\rho \operatorname{vec}(u_n q_r^\top)^{\otimes l},
		\operatorname{vec}(u_n q_r^\top)^{\otimes l}
		\rangle.
	\end{equation*}
	This is equivalent to approximating
	$
    	\langle
    	\check{X}^{\otimes l}, 
    	(u_n q_r^\top)^{\otimes l}
    	\rangle
	$ by
	$
    	\langle
    	\hat{X}^{\otimes l} + 
    	\rho (u_n q_r^\top)^{\otimes l},
    	(u_n q_r^\top)^{\otimes l}
    	\rangle
	$.
	By substituting the expression of $\check{X}$ and using Lemma~\ref{lemma:kronecker_identity}, we compute 
    $
        \langle
		\check{X}^{\otimes l}, 
		(u_n q_r^\top)^{\otimes l}
		\rangle
    $ as:
    \begin{equation*}
		\operatorname{Tr}
		\left\{
		\left[
		\sum\nolimits_{\phi=1}^{r-1}
		\sigma_{\phi} v_{\phi} q_{\phi}^\top
		+ \sigma_r
		(\alpha v_r + \beta u_n) q_r^\top
		\right]
		q_r u_n^\top
		\right\}^l
		=
		\operatorname{Tr}
		\left\{
		\sigma_r
		(\alpha v_r + \beta u_n) u_n^\top
		\right\}^l
		=
		\sigma_r^l \beta^l.
    \end{equation*}
	Similarly, we compute:
    \begin{equation*}
		\langle
		\hat{X}^{\otimes l} + 
		\rho (u_n q_r^\top)^{\otimes l},
		(u_n q_r^\top)^{\otimes l}
		\rangle
		=
		\operatorname{Tr}
		(
		\hat{X} q_r u_n^\top
		)^l
		+
		\rho
		\operatorname{Tr}
		(
		u_n q_r^\top q_r u_n^\top
		)^l
		= \rho.
    \end{equation*}
	Therefore, to satisfy the approximation condition, it must hold that $\sigma_r^l \beta^l \to \rho$.
	
	\proofstep{Approximation in the Projection Direction of $\operatorname{vec}(v_r q_r^\top)^{\otimes l}$}
	To establish this result, we evaluate the approximation condition:
    \begin{equation*}
        \langle
		\operatorname{vec}(\check{X})^{\otimes l},
		\operatorname{vec}(v_r q_r^\top)^{\otimes l}
		\rangle
		\rightarrow
		\langle
		\hat{\mathbf{w}} + 
		\rho \operatorname{vec}(u_n q_r^\top)^{\otimes l},
		\operatorname{vec}(v_r q_r^\top)^{\otimes l}
		\rangle.
	\end{equation*}
	The condition for this approximation can be derived through a similar argument, which leads to $\alpha^{l} \rightarrow 1$. 
	Since $l$ is odd (please refer to \cite[Theorem 5.4]{ma2023over}), it follows that $\alpha \rightarrow 1$.
	
	\proofstep{Approximation Guarantee in Terms of Tensor Spectral Norm}
	We now present the formal proof of Proposition~\ref{proposition:single_step_rank1_tensor_approx}.
	Define the residual tensor $\mathbf{d}_p$ as:
	\begin{equation*}
		\mathbf{d}_p := 
		\operatorname{vec}(\check{X})^{\otimes l}
		- 
		\left[
		\hat{\mathbf{w}} + \rho 
		\operatorname{vec}(u_n q_r^\top)^{\otimes l}
		\right].
	\end{equation*}
	By the definitions of the tensor spectral norm and nuclear norm (please refer to Definition~\ref{definition:tensor_norm}), 
	and since $u_n$, $v_r$, and $q_r$ are unit vectors, we have:
	$
        \|
		\operatorname{vec}(u_n q_r^\top)^{\otimes l}
		\|_* =
		\|
		\operatorname{vec}(v_r q_r^\top)^{\otimes l}
		\|_* = 1
    $.
    
	Let $\mathbf{d}_s\in R$ be the dual tensor defined in Lemma~\ref{lemma:dual_tensor_norm} which satisfies the conditions given in \cite[Lemma 10]{ma2024algorithmic}, and:
	\begin{equation*}
		\|\mathbf{d}_p\|_{S\Join R} = 
		|\langle \mathbf{d}_p, \mathbf{d}_s \rangle|, 
		\quad
		\|\mathbf{d}_s\|_* \leq 1, 
		\quad
		\mathbf{d}_s \in \mathbb{R}^{nr\circ l}.
	\end{equation*}
	Since $\operatorname{vec}(u_n q_r^\top)^{\otimes l}$ and $\operatorname{vec}(v_r q_r^\top)^{\otimes l}$ form an orthogonal basis of $R$, $\mathbf{d}_s$ must belong to their span. Therefore, there exist scalars $\lambda_{u_n}$ and $\lambda_{v_r}$ such that
    \begin{equation*}
        \mathbf{d}_s = \lambda_{u_n} 
		\operatorname{vec}(u_n q_r^\top)^{\otimes l}
		+ \lambda_{v_r} 
		\operatorname{vec}(v_r q_r^\top)^{\otimes l},
    \end{equation*}
	with $\left|\lambda_{u_n}\right| + \left|\lambda_{v_r}\right|\leq 1$.
	Using the approximation results along the projection directions $\operatorname{vec}(u_n q_r^\top)^{\otimes l}$ and $\operatorname{vec}(v_r q_r^\top)^{\otimes l}$, we compute $\langle \mathbf{d}_p, \mathbf{d}_s \rangle$ as:
    \begin{equation*}
		\langle \mathbf{d}_p, \mathbf{d}_s \rangle
		=
		\lambda_{u_n} 
		\langle \operatorname{vec}(\check{X})^{\otimes l} - \mathbf{w}_\mathrm{escape}, 
		\operatorname{vec}(u_n q_r^\top)^{\otimes l} \rangle
		+
		\lambda_{v_r} \langle \operatorname{vec}(\check{X})^{\otimes l} - \mathbf{w}_\mathrm{escape}, 
		\operatorname{vec}(v_r q_r^\top)^{\otimes l} \rangle,
    \end{equation*}
    where $\mathbf{w}_\mathrm{escape}$ is defined in Equation~\eqref{equation:w_escape}.
	Thus, the spectral norm of $\mathbf{d}_p$ is bounded by:
    \begin{equation*}
		\|\mathbf{d}_p\|_{S\Join R} 
		= |\langle \mathbf{d}_p, \mathbf{d}_s \rangle| 
		\leq
		|\lambda_{u_n}| \cdot |\sigma_r^l\beta^l - \rho| + |\lambda_{v_r}| \cdot \sigma_r^l \cdot |\alpha^l - 1|.
    \end{equation*}
	As $\operatorname{vec}(\check{X})^{\otimes l}$ becomes aligned with $\operatorname{vec}(u_n q_r^\top)^{\otimes l}$ and $\operatorname{vec}(v_r q_r^\top)^{\otimes l}$, we have $\sigma_r^l\beta^l \rightarrow \rho$ and $\alpha^l \rightarrow 1$, yielding $\|\mathbf{d}_p\|_{S\Join R} \rightarrow 0$.
	This concludes the proof.
\end{proof}

\subsection{Proof of Proposition~\ref{proposition:escape_inequality}}
\label{append:proof_of_proposition_escape_inequality}
\begin{proof}
	Our goal is to establish that
	$
	\|\mathcal{A} (\hat{X}\hat{X}^\top - M^\star)\|_2^2 
	>
	\|\mathcal{A} (\check{X}\check{X}^\top - M^\star)\|_2^2,
	$
	which can be rewritten as
	$
		\langle
		\mathcal{A}^*\mathcal{A} (\hat{X}\hat{X}^\top - M^\star)
		, \hat{X}\hat{X}^\top - M^\star
		\rangle>
		\langle
		\mathcal{A}^*\mathcal{A} (\check{X}\check{X}^\top - M^\star), \check{X}\check{X}^\top - M^\star
		\rangle
	  $.  
	Since the gradient vanishes at $\hat{X}$, we have $\mathcal{A}^*\mathcal{A}(\hat{X}\hat{X}^\top - M^\star)\hat{X} = 0$. Thus,
    \begin{equation}
		\underbrace{
			\langle
			\mathcal{A}^*\mathcal{A} (\check{X}\check{X}^\top - \hat{X}\hat{X}^\top), M^\star
			\rangle
		}_{:=S}
		>
		\underbrace{
			\langle
			\mathcal{A}^*\mathcal{A} (\check{X}\check{X}^\top - M^\star), \check{X}\check{X}^\top
			\rangle
		}_{:=T}.
        \label{equation:target_inequality}
    \end{equation}
	From the definition $\check{X} = \hat{X} + \hat{\rho} u_n q_r^\top$, we compute $\check{X}\check{X}^\top$ as
    \begin{equation*}
        \begin{aligned}
            &[ \hat{X} + \hat{\rho} (u_n q_r^\top) ] [ \hat{X} + \hat{\rho} (u_n q_r^\top) ]^\top 
    		= \hat{X} \hat{X}^\top + \hat{\rho} \hat{X} q_r u_n^\top + \hat{\rho} u_n q_r^\top \hat{X}^\top + \hat{\rho}^2 u_n q_r^\top q_r u_n^\top
    		\\
			=&\hat{X} \hat{X}^\top + \hat{\rho} \sigma_r ( v_r u_n^\top + u_n v_r^\top ) + \hat{\rho}^2 u_n u_n^\top.
        \end{aligned}
    \end{equation*}
	where we have used $\hat{X} q_r = \sigma_r v_r$ and $q_r^\top \hat{X}^\top = \sigma_r v_r^\top$. Substituting this decomposition into Equation~\eqref{equation:target_inequality}, we proceed with term-by-term expansion.

    \begin{table}[h]
		\centering
		\renewcommand{\arraystretch}{1.2} 
		\caption{Definitions of the nine interaction terms in the expansion of $T$.}
		\label{table:T_term_decomposition}
		\begin{tabular}{ll}
			\hline
			\textbf{Term} & \textbf{Definition} \\ \hline
			$T_1$ & $\langle \mathcal{A}^*\mathcal{A}(\hat{X} \hat{X}^\top - M^\star), \hat{X} \hat{X}^\top \rangle$ \\
			$T_2$ & $\langle \mathcal{A}^*\mathcal{A}(\hat{X} \hat{X}^\top - M^\star), \hat{\rho} \sigma_r (v_r u_n^\top + u_n v_r^\top) \rangle$ \\
			$T_3$ & $\langle \mathcal{A}^*\mathcal{A}(\hat{X} \hat{X}^\top - M^\star), \hat{\rho}^2 u_n u_n^\top \rangle$ \\
			$T_4$ & $\langle \hat{\rho} \sigma_r E, \hat{X} \hat{X}^\top \rangle$ \\
			$T_5$ & $\left\| \mathcal{A} ( \hat{\rho} \sigma_r (v_r u_n^\top + u_n v_r^\top) ) \right\|_2^2$ \\
			$T_6$ & $\langle \mathcal{A}^*\mathcal{A}(\hat{\rho} \sigma_r (v_r u_n^\top + u_n v_r^\top)), \hat{\rho}^2 u_n u_n^\top \rangle$ \\
			$T_7$ & $\langle \mathcal{A}^*\mathcal{A}(\hat{\rho}^2 u_n u_n^\top), \hat{X} \hat{X}^\top \rangle$ \\
			$T_8$ & $\langle \mathcal{A}^*\mathcal{A}(\hat{\rho}^2 u_n u_n^\top), \hat{\rho} \sigma_r (v_r u_n^\top + u_n v_r^\top) \rangle$ \\
			$T_9$ & $\langle \mathcal{A}^*\mathcal{A}(\hat{\rho}^2 u_n u_n^\top), \hat{\rho}^2 u_n u_n^\top \rangle$ \\ \hline
		\end{tabular}
	\end{table}
	
	\paragraph{Expansion of $S$ Term and $T$ Term.}
	The $S$ term decomposes as
    \begin{equation*}
		S 
		= 
		\langle
		\mathcal{A}^*\mathcal{A}
		(
		\hat{\rho} \sigma_r (v_r u_n^\top + u_n v_r^\top)
		+
		\hat{\rho}^2 u_n u_n^\top
		)
		, M^\star
		\rangle
		=
		\underbrace{
			\hat{\rho} \sigma_r
			\langle
			E, M^\star
			\rangle
		}_{:=S_1}
		+
		\underbrace{
			\hat{\rho}^2
			\langle
			\mathcal{A}^*\mathcal{A} (u_n u_n^\top)
			, M^\star
			\rangle
		}_{:=S_2}.
    \end{equation*}
	The $T$ term decomposes as a sum of nine interactions, the specific details of which are presented in Table~\ref{table:T_term_decomposition}.
    We repeatedly use the identity
    $
        \langle \mathcal{A} (M), \mathcal{A} (N) \rangle
        =
        \langle \mathcal{A}^*\mathcal{A} (M), N \rangle
        =
        \langle M, \mathcal{A}^*\mathcal{A} (N) \rangle
    $.
	
	\paragraph{Subcomponent Analysis.}
	Using first-order optimality and orthogonality properties, we summarize the simplifications in Table~\ref{table:subterm-analysis}.
    \begin{table}[h]
		\renewcommand{\arraystretch}{1.2}
		\centering
		\caption{Simplified expressions of all subterms.}
		\label{table:subterm-analysis}
		\begin{tabular}{cc}
			\toprule
			Term & Simplified Expression \\
			\midrule
			$T_1$ & 
			$\nabla f(\hat{X} \hat{X}^\top) 
			\hat{X} \hat{X}^\top = 0$ \\
			$T_2$ & 
			$2\hat{\rho} \sigma_r \lambda_n
			v_r^\top u_n = 0$ \\
			$T_5$ & 
			$\hat{\rho}^2 \sigma_r^2
			\|\mathcal{A}(v_r u_n^\top + u_n v_r^\top)
			\|_2^2$ \\
			$T_9$ & 
			$\hat{\rho}^4
			\|\mathcal{A}(u_n u_n^\top)
			\|_2^2$ \\
			$T_6+T_8$ & 
			$2 \hat{\rho}^3 \sigma_r
			\langle
			E, u_n u_n^\top
			\rangle$ \\
			$S_1-T_4$ & 
			$- \hat{\rho} \sigma_r
			\langle
			\nabla f (\hat{X} \hat{X}^\top), 
			v_r u_n^\top + u_n v_r^\top
			\rangle = 0$ \\
			$S_2-T_7$ & 
			$-\hat{\rho}^2
			\langle
			\nabla f (\hat{X} \hat{X}^\top), u_n u_n^\top
			\rangle
			= - \hat{\rho}^2 \lambda_n$ \\
			$-T_3$ & 
			$-\hat{\rho}^2
			\langle
			\nabla f (\hat{X} \hat{X}^\top),
			u_n u_n^\top
			\rangle
			= -
			\hat{\rho}^2 \lambda_n$ \\
			\bottomrule
		\end{tabular}
	\end{table}
	
	\paragraph{Derivation of the Escape Inequality.}
	Combining all corresponding terms yields the fact that
    \begin{equation*}
        (S_1 - T_4) + (S_2 - T_7) + (-T_3) > 
        T_1 + T_2 + T_5 + (T_6 + T_8) + T_9
    \end{equation*}
    leads to
    \begin{equation*}
		-2\lambda_n 
		>
		\sigma_r^2 \|\mathcal{A}(v_r u_n^\top + u_n v_r^\top)\|_2^2 
		+
		2\hat{\rho}\sigma_r \langle E, u_n u_n^\top \rangle 
		+
		\hat{\rho}^2 \|\mathcal{A}(u_n u_n^\top)\|_2^2.
    \end{equation*}
	Applying the RIP condition (please refer to Definition~\ref{definition:rip}) gives the strengthened inequality:
    \begin{equation*}
        -2\lambda_n
		>
		2\sigma_r^2(1 + \delta_p)
		+
		2\hat{\rho}\sigma_r
		\langle E, u_n u_n^\top \rangle
		+
		(1 + \delta_p)\hat{\rho}^2,
    \end{equation*}
	where $\|u_n v_r^\top + v_r u_n^\top\|_F^2 = 2$ and $\|u_n u_n^\top\|_F^2 = 1$ are used.  
	This directly corresponds to the quadratic form in $\hat{\rho}$ stated in Proposition~\ref{proposition:escape_inequality}, concluding the proof.
\end{proof}

\section{Additional Details of Section~\ref{sec5}}
\label{append:additional_details_of_section_5}
\subsection{The First Truncated Projected Gradient Descent Step}
\label{append:the-first-TPGD-step}

\begin{proposition}[The First TPGD Point]\label{proposition:first-tpgd-point}
	If $\rho$ is sufficiently small, then the first TPGD point $\tilde{\mathbf{w}}^{(1)}$ has the following form:
	\begin{equation*}
		\tilde{\mathbf{w}}^{(1)}
		= 
		\mathbf{a}
		+
		\rho(1-\eta\lambda_n^l)
		\mathbf{b}
		-
		\eta\rho\sigma_r^l \frac{1}{2^{l-1}}
		\mathbf{c}.
	\end{equation*}
\end{proposition}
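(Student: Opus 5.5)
I would compute the first TPGD step directly from Lemma~\ref{lemma:gradient-of-the-tensor-objective}. Start from $\tilde{\mathbf{w}}^{(0)}=\mathbf{w}_\mathrm{escape}=\mathbf{a}+\rho\mathbf{b}$. Expand $\nabla h^l(\mathbf{a}+\rho\mathbf{b})$ to first order in $\rho$, which is justified because $\rho$ is assumed sufficiently small; the $O(\rho^2)$ terms are exactly what the truncation step removes.

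\textbf{Expansion of the gradient.} The gradient in \eqref{equation:tensor_gradient} is a contraction of $\langle\mathbf{A}_r^{\otimes l},\mathbf{w}\rangle$ with the residual $\langle\mathbf{A}^{\otimes l},\tilde{\mathcal{M}}(\mathbf{w})-\tilde{\mathcal{M}}(\operatorname{vec}(Z)^{\otimes l})\rangle$. By Lemma~\ref{lemma:kronecker_identity}, for rank-1 arguments we have $\mathcal{M}(\operatorname{vec}(X)^{\otimes l},\operatorname{vec}(Y)^{\otimes l})=(XY^\top)^{\otimes l}$, and the contractions with $\mathbf{A}^{\otimes l}$ factor as $\mathcal{A}(\cdot)^{\otimes l}$. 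Using bilinearity of $\mathcal{M}$:
\begin{equation*}
\tilde{\mathcal{M}}(\mathbf{a}+\rho\mathbf{b})=(\hat{X}\hat{X}^\top)^{\otimes l}+\rho\big[(\hat{X}q_ru_n^\top)^{\otimes l}+(u_nq_r^\top\hat{X}^\top)^{\otimes l}\big]+O(\rho^2),
\end{equation*}
where $\hat{X}q_ru_n^\top=\sigma_r v_ru_n^\top$. The zeroth-order term is $\nabla h^l(\mathbf{a})$. Two facts handle it. First, $\langle\nabla h^l(\mathbf{a}),\mathbf{w}\rangle$ vanishes on $S$, by stationarity inherited from $\nabla h(\hat X)=0$. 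Second, the $\mathbf{b}$-direction curvature is governed by $\lambda_n^l$: the term $\langle\mathbf{A}_r^{\otimes l},\mathbf{b}\rangle$ contracted with the zeroth-order residual produces $(\nabla f(\hat X\hat X^\top)u_nq_r^\top)^{\otimes l}$-type objects, equal to $\lambda_n^l\mathbf{b}$, up to the normalization constant absorbed in the definition of $h^l$. The first-order residual $\rho\sigma_r^l[\mathcal{A}(v_ru_n^\top)^{\otimes l}+\mathcal{A}(u_nv_r^\top)^{\otimes l}]$ contracted with $\langle\mathbf{A}_r^{\otimes l},\mathbf{a}\rangle$ gives $\rho\sigma_r^l[(E_1\hat X)^{\otimes l}+(E_2\hat X)^{\otimes l}]$ in vectorized form. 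Since $E_1=E_2=E/2$ and $l$ is odd, this equals $2\rho\sigma_r^l 2^{-l}\mathbf{c}=\rho\sigma_r^l2^{1-l}\mathbf{c}$. This is the origin of the factor $1/2^{l-1}$.

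\textbf{Assembling the step.} Gathering terms gives
\begin{equation*}
\mathbf{v}^{(0)}=\mathbf{a}+\rho\mathbf{b}-\eta\big(\nabla h^l(\mathbf{a})+\rho\lambda_n^l\mathbf{b}+\rho\sigma_r^l2^{1-l}\mathbf{c}+\text{remainder}\big).
\end{equation*}
The remainder collects three kinds of term: $O(\rho^2)$ terms; the cross term from $\langle\mathbf{A}_r^{\otimes l},\mathbf{b}\rangle$ with the first-order residual, which is $O(\rho^2)$; and components of $\nabla h^l(\mathbf{a})$ and of the $\mathbf{b}$-contraction orthogonal to $S$. Applying $\Pi_S$ via $P_S=CG^{-1}C^\top$ kills all parts orthogonal to $S$. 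It keeps $\mathbf{a},\mathbf{b},\mathbf{c}$ exactly, since they lie in $S$. The residual in-$S$ contributions of $\nabla h^l(\mathbf{a})$ vanish because $\mathbf{a}$ is a critical point of $h^l$: $\hat X$ is first-order stationary, and stationarity lifts to the tensor space by \cite[Theorem~5.4]{ma2023over}. The coefficient of $\mathbf{a}$ therefore remains $1$. Finally, the truncation step removes the $O(\rho^2)$ residual, as defined in \eqref{equation:tilde_w_1}, and leaves $\mathbf{a}+\rho(1-\eta\lambda_n^l)\mathbf{b}-\eta\rho\sigma_r^l2^{1-l}\mathbf{c}$.

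\textbf{Main obstacle.} The delicate step is the projection. Because $G$ is not diagonal ($\bar s,\bar t\neq0$), the components of the gradient along $\mathbf{b}$ and $\mathbf{c}$ could leak into other coordinates through $G^{-1}$. I would handle this by showing that the gradient's in-$S$ part is already expressed in the basis $\{\mathbf{a},\mathbf{b},\mathbf{c}\}$, so that $P_S$ acts as the identity on it, and that its orthogonal part is annihilated. For the $\lambda_n^l\mathbf{b}$ term in particular, I would verify exactly that contracting $\mathbf{b}$ against the zeroth-order residual yields $\lambda_n^l\mathbf{b}$ plus a tensor orthogonal to $S$. This uses $u_n\perp v_\phi$ and the eigen-relation $\nabla f(\hat X\hat X^\top)u_n=\lambda_nu_n$. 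Any leftover non-orthogonal discrepancy would be placed into the truncation residual, consistent with how the algorithm defines it.
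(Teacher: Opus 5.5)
Your proposal is essentially the paper's own proof: the paper likewise splits $\nabla h^l(\mathbf{a}+\rho\mathbf{b})$ into $H_1=\nabla h^l(\mathbf{a})=0$, the $\mathbf{a}$-contraction with the first-order residual (giving $\rho\sigma_r^l\,2^{1-l}\mathbf{c}$ via $E_1=E_2=E/2$), the $\mathbf{b}$-contraction with the zeroth-order residual (which it evaluates as $\rho\lambda_n^l\mathbf{b}$), and $O(\rho^2)$ terms; your worry about the non-diagonal Gram matrix $G$ is settled exactly as in its Schur-complement step, since by that computation $\mathbf{v}^{(0)}$ lies in $S$ up to $O(\rho^2)$ and $P_S$ fixes $S$. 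Be aware, however, that this $\lambda_n^l\mathbf{b}$ identity, in the paper as in your sketch, comes from writing the zeroth-order residual as $\mathcal{A}(\hat{X}\hat{X}^\top-M^\star)^{\otimes l}$, whereas with $f^l$ as defined it is $\mathcal{A}(\hat{X}\hat{X}^\top)^{\otimes l}-b^{\otimes l}$, which contracted against $\langle\mathbf{A}_r^{\otimes l},\mathbf{b}\rangle$ gives $\operatorname{vec}(\mathcal{A}^*\mathcal{A}(\hat{X}\hat{X}^\top)u_nq_r^\top)^{\otimes l}-\operatorname{vec}(\mathcal{A}^*\mathcal{A}(M^\star)u_nq_r^\top)^{\otimes l}$ and in general differs from $\lambda_n^l\mathbf{b}$ even after projecting onto $S$ (in the basic example~\eqref{equation:basic_example} it is $(4^{-l}-1)\mathbf{b}$ rather than $(-3/4)^l\mathbf{b}$), so the exact verification you plan would not go through, and the resulting first-order-in-$\rho$ discrepancy cannot be absorbed into the $O(\rho^2)$ truncation residual as your fallback suggests.
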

The proof follows by combining the derivations in Appendix~\ref{append:tensor_gradient_at_the_first_pgd_step} and Appendix~\ref{append:first_pgd_point_and_its_truncated_approximation}.

\subsubsection{Tensor Gradient at the First PGD Step}
\label{append:tensor_gradient_at_the_first_pgd_step}
By substituting the initialization point $\tilde{\mathbf{w}}^{(0)}$ into $\tilde{\mathcal{M}}(\cdot)$, we obtain:
\begin{equation*}
	\begin{aligned}
		\tilde{\mathcal{M}} (\mathbf{w}^{(0)})
    	=& \langle \langle \mathbf{P}^{\otimes l}, \hat{\mathbf{w}} + \rho \operatorname{vec} (u_n q_r^\top)^{\otimes l} \rangle_{3*[l]}, \langle \mathbf{P}^{\otimes l}, \hat{\mathbf{w}} + \rho \operatorname{vec} (u_n q_r^\top)^{\otimes l} \rangle_{3*[l]} \rangle_{2*[l]}  
    	\\
		=& (\hat{X}\hat{X}^\top)^{\otimes l} + \rho \sigma_r^l (v_r u_n^\top)^{\otimes l} + \rho \sigma_r^l (u_n v_r^\top)^{\otimes l} + \rho^2 (u_n u_n^\top)^{\otimes l}.
	\end{aligned}
\end{equation*}

We now proceed to compute the tensor gradient $\nabla h^l (\tilde{\mathbf{w}}^{(0)})$ based on Lemma~\ref{lemma:gradient-of-the-tensor-objective}. Specifically, it becomes
$
    \left\langle 
    \langle \mathbf{A}_r^{\otimes l}, \tilde{\mathbf{w}}^{(0)} \rangle_{[3l-1]}, \langle \mathbf{A}^{\otimes l}, \tilde{\mathcal{M}} (\tilde{\mathbf{w}}^{(0)}) - \tilde{\mathcal{M}} (\operatorname{vec}(Z)^{\otimes l}) \rangle_{[3l-1,3l]} 
    \right\rangle_{[2l-1]}
$, which expands as
\begin{equation*}
    \begin{aligned}
        \Big\langle \langle \mathbf{A}_r^{\otimes l}, \hat{\mathbf{w}} + \rho \operatorname{vec}(u_n q_r^\top)^{\otimes l} \rangle_{[3l-1]}, 
        &\langle 
        \mathbf{A}^{\otimes l}, \tilde{\mathcal{M}} (\hat{\mathbf{w}}) - \tilde{\mathcal{M}} (\operatorname{vec}(Z)^{\otimes l}) 
        \rangle_{[3l-1,3l]} +
        \\
        &\langle 
        \mathbf{A}^{\otimes l}, 
        \rho \sigma_r^l (v_r u_n^\top)^{\otimes l} +
        \rho \sigma_r^l (u_n v_r^\top)^{\otimes l} +
		\rho^2 (u_n u_n^\top)^{\otimes l} 
        \rangle_{[3l-1,3l]}
        \Big\rangle_{[2l-1]}.
    \end{aligned}
\end{equation*}
This gradient can be naturally decomposed into several sub-components.

\paragraph{The First Sub-Term $H_1$.}
\begin{equation*}
	H_1 = 
	\Big\langle
	\langle
	\mathbf{A}_r^{\otimes l},
	\hat{\mathbf{w}}
	\rangle_{[3l-1]}, 
	\langle
	\mathbf{A}^{\otimes l},
	\tilde{\mathcal{M}} (\hat{\mathbf{w}})
	- 
	\tilde{\mathcal{M}} (\operatorname{vec}(Z)^{\otimes l})
	\rangle_{[3l-1,3l]}
	\Big\rangle_{[2l-1]}.
\end{equation*}   
$H_1 = 0$ because $H_1=\nabla h^l (\hat{\mathbf{w}})$ where $\mathbf{a}=\hat{\mathbf{w}}=\operatorname{vec}(\hat{X})^{\otimes l}$ is a saddle point after tensor lifting.

\paragraph{The Second Sub-Term $H_2$.}
\begin{equation*}
	H_2 := \Bigl\langle 
	\langle 
	\mathbf{A}_r^{\otimes l}, \hat{\mathbf{w}} 
	\rangle_{[3l-1]}, 
	\langle 
	\mathbf{A}^{\otimes l}, 
	\rho \sigma_r^l (v_r u_n^\top)^{\otimes l} +
	\rho \sigma_r^l (u_n v_r^\top)^{\otimes l} +
	\rho^2 (u_n u_n^\top)^{\otimes l} 
	\rangle_{[3l-1,3l]}
	\Bigr\rangle_{[2l-1]}
\end{equation*}
which we decompose as $H_2 := H_2^1 + H_2^2 + H_2^3$. 
For the first term, we compute $H_2^1$ as:
\begin{equation*}
    \begin{aligned}
        &\Bigl\langle \langle \mathbf{A}_r^{\otimes l}, \hat{\mathbf{w}} \rangle_{[3l-1]}, \langle \mathbf{A}^{\otimes l}, \rho \sigma_r^l (v_r u_n^\top)^{\otimes l} \rangle_{[3l-1,3l]} \Bigr\rangle_{[2l-1]} \\
        =& \rho \sigma_r^l 
        \left\langle 
        \begin{bmatrix} 
        A_1 \hat{X} \\ \vdots \\ A_m \hat{X} \end{bmatrix}^{\otimes l}, 
        \begin{bmatrix} 
        \langle A_1, v_r u_n^\top \rangle \\ \vdots \\ \langle A_m, v_r u_n^\top \rangle 
        \end{bmatrix}^{\otimes l} 
        \right\rangle_{[2l-1]} 
        = \rho \sigma_r^l 
        \operatorname{vec}(E_1 \hat{X})^{\otimes l}.
    \end{aligned}
\end{equation*}
Analogously, we can obtain
\begin{equation*}
	H_2^2 =
	\rho \sigma_r^l
	\operatorname{vec}(E_2 \hat{X})^{\otimes l},
	\qquad
	H_2^3 = 
	\rho^2
	\operatorname{vec}\left(
	\sum_{i=1}^{m}
	\langle A_{i}, u_n u_n^\top \rangle
	A_{i} \hat{X}
	\right)^{\otimes l}.
\end{equation*}
Since the sensing matrices $A_i$ are symmetric, we have $\langle
A_{i}, u_n v_r^\top\rangle=\langle
A_{i}, v_r u_n^\top\rangle$, implying $H_2^1=H_2^2$. Therefore, the second sub-term simplifies to:
\begin{equation*}
	H_2 
    =
	\rho \sigma_r^l
	\frac{1}{2^{l-1}}
	\operatorname{vec}(E \hat{X})^{\otimes l} 
	+
	\rho^2
	\operatorname{vec}\left(
	\sum_{i=1}^{m}
	\langle A_{i}, u_n u_n^\top \rangle
	A_{i} \hat{X}
	\right)^{\otimes l}.
\end{equation*}

\paragraph{The Third Sub-Term $H_3$.}
Through direct expansion, we compute $H_3$ as:
\begin{equation*}
    \begin{aligned}
        &\Bigl\langle 
        \langle \mathbf{A}_r^{\otimes l}, \rho \operatorname{vec}(u_n q_r^\top)^{\otimes l} \rangle_{[3l-1]}, 
        \langle \mathbf{A}^{\otimes l}, \tilde{\mathcal{M}} (\hat{\mathbf{w}}) - \tilde{\mathcal{M}} (\operatorname{vec}(Z)^{\otimes l}) \rangle_{[3l-1,3l]} 
        \Bigr\rangle_{[2l-1]} 
        \\
        =& \rho \left\langle 
        \begin{bmatrix} A_1 u_n q_r^\top \\ \vdots \\ A_m u_n q_r^\top 
        \end{bmatrix}^{\otimes l}, 
        \begin{bmatrix} \langle A_1, \hat{X}\hat{X}^\top - ZZ^\top \rangle \\ \vdots \\ \langle A_m, \hat{X}\hat{X}^\top - ZZ^\top \rangle \end{bmatrix}^{\otimes l} 
        \right\rangle_{[2l-1]} 
        \\
        =& \rho \operatorname{vec} \left( \nabla f(\hat{X}\hat{X}^\top) u_n q_r^\top \right)^{\otimes l} 
        = \rho \lambda_n^l \operatorname{vec}(u_n q_r^\top)^{\otimes l}.
    \end{aligned}
\end{equation*}
Here, $\lambda_n<0$ denotes the smallest eigenvalue of the matrix gradient evaluated at $\hat{X}$, and $u_n$ is the corresponding eigenvector. 

\paragraph{The Fourth Sub-Term $H_4$.}
Applying analogous logic to the preceding derivations, we can compute $H_4$ as:
\begin{equation*}
    \Bigl\langle 
    \langle \mathbf{A}_r^{\otimes l}, 
    \rho \operatorname{vec}(u_n q_r^\top)^{\otimes l} \rangle_{[3l-1]}, 
    \langle 
    \mathbf{A}^{\otimes l}, 
    \rho \sigma_r^l (v_r u_n^\top)^{\otimes l} + 
    \rho \sigma_r^l (u_n v_r^\top)^{\otimes l} + 
    \rho^2 (u_n u_n^\top)^{\otimes l} 
    \rangle_{[3l-1,3l]} 
    \Bigr\rangle_{[2l-1]},
\end{equation*}
which can be further computed as
\begin{equation*}
	H_4 =
	\rho^2 \sigma_r^l \frac{1}{2^{l-1}} 
	\operatorname{vec}
	\left( E u_n q_r^\top \right)^{\otimes l} 
	+
	\rho^3 \operatorname{vec}
	\left( 
	\sum_{i=1}^{m} 
	\langle A_{i}, u_n u_n^\top \rangle 
	A_{i} u_n q_r^\top 
	\right)^{\otimes l}.
\end{equation*}

\subsubsection{First PGD Point and Its Truncated Approximation}
\label{append:first_pgd_point_and_its_truncated_approximation}
The intermediate point $\mathbf{v}^{(0)}$, obtained after one step of GD, is given by:
\begin{equation*}
    \begin{aligned}
        \mathbf{v}^{(0)}
		=&
		\tilde{\mathbf{w}}^{(0)} - \eta \nabla h^l (\tilde{\mathbf{w}}^{(0)}) 
        = \operatorname{vec} (\hat{X})^{\otimes l} + \rho \operatorname{vec} (u_n q_r^\top)^{\otimes l} - \eta \left[ H_2 + H_3 + H_4 \right]
        \\
		=& \operatorname{vec}(\hat{X})^{\otimes l} 
        + \rho (1-\eta \lambda_n^l) 
        \operatorname{vec}\left(u_n q_r^\top\right)^{\otimes l} 
        - \eta \rho \sigma_r^l \frac{1}{2^{l-1}} \operatorname{vec}(E \hat{X})^{\otimes l}
        + O(\rho^2),
    \end{aligned}
\end{equation*}
or equivalently,
$
    \mathbf{a} + \rho (1-\eta \lambda_n^l) \mathbf{b} - \eta \rho \sigma_r^l \frac{1}{2^{l-1}} \mathbf{c} + O(\rho^2)
$.

Since the projection operator $\Pi_S$ maps any point onto the subspace $S=\operatorname{span}\{\mathbf{a},\mathbf{b},\mathbf{c}\}$, and the vectors $\operatorname{vec}(\mathbf{a}),\operatorname{vec}(\mathbf{b}),\operatorname{vec}(\mathbf{c})\in\mathbb{R}^{nrl}$ are linearly independent by construction, the projected point $\mathbf{w}^{(1)}\in S$ can be computed as:
$\mathbf{w}^{(1)} = \Pi_S (\mathbf{v}^{(0)})$
and it can therefore be written uniquely as $\alpha^{(1)} \mathbf{a} + \beta^{(1)} \mathbf{b} + \gamma^{(1)} \mathbf{c}$.

The coefficients $\alpha^{(1)}$, $\beta^{(1)}$, and $\gamma^{(1)}$ satisfy the following linear relation:
\begin{equation*}
	G\begin{bmatrix}
		\alpha^{(1)} \\
		\beta^{(1)} \\
		\gamma^{(1)}
	\end{bmatrix}
	= C^\top 
	\operatorname{stack}(\mathbf{v}^{(0)}),
\end{equation*}
where $\mathbf{v}^{(0)} = \operatorname{vec}(\mathbf{a} + \rho (1-\eta \lambda_n^l) \mathbf{b} - \eta \rho \sigma_r^l \frac{1}{2^{l-1}} \mathbf{c}) + O(\rho^2)$.
Consequently, these coefficients must solve the associated least-squares system:
\begin{equation*}
    \begin{cases}
        \bar{a}\alpha^{(1)} + \bar{s}\gamma^{(1)} 
        &= \langle \mathbf{a}, \mathbf{v}^{(0)} \rangle,
        \\
        \bar{b}\beta^{(1)} + \bar{t}\gamma^{(1)} 
        &= \langle \mathbf{b}, \mathbf{v}^{(0)} \rangle,
        \\
        \bar{s} \alpha^{(1)} + \bar{t}\beta^{(1)} + \bar{c}\gamma^{(1)} 
        &= \langle \mathbf{c}, \mathbf{v}^{(0)} \rangle.
    \end{cases}
\end{equation*}
where 
$
    \langle \mathbf{a}, \mathbf{v}^{(0)} \rangle 
    = \bar{a} - \eta\rho\sigma_r^l \frac{1}{2^{l-1}} \bar{s} + \langle \mathbf{a}, O(\rho^2) \rangle
$,
$
    \langle \mathbf{b}, \mathbf{v}^{(0)} \rangle 
    = \rho(1-\eta\lambda_n^l) \bar{b} -\eta\rho\sigma_r^l \frac{1}{2^{l-1}} \bar{t} + \langle \mathbf{b}, O(\rho^2) \rangle
$,
and
$
    \langle \mathbf{c}, \mathbf{v}^{(0)} \rangle 
    = \bar{s} + \rho(1-\eta\lambda_n^l) \bar{t} -\eta\rho\sigma_r^l \frac{1}{2^{l-1}} \bar{c} + \langle \mathbf{c}, O(\rho^2) \rangle
$.

By performing a Schur complement elimination on $\alpha^{(1)}$ and $\beta^{(1)}$, we isolate $\gamma^{(1)}$ as:
\begin{equation*}
	\gamma^{(1)} = 
	-\eta\rho\sigma_r^l \frac{1}{2^{l-1}}
	+ 
	\underbrace{
		\frac{
			\left\langle 
			\mathbf{c} - 
			(\nicefrac{\bar{s}}{\bar{a}}) \mathbf{a}
			- 
			(\nicefrac{\bar{t}}{\bar{b}}) \mathbf{b}
			, O(\rho^2) 
			\right\rangle
		}{
			\left\langle 
			\mathbf{c} - 
			(\nicefrac{\bar{s}}{\bar{a}}) \mathbf{a}
			- 
			(\nicefrac{\bar{t}}{\bar{b}}) \mathbf{b}
			, \mathbf{c}
			\right\rangle
	}}_{:=\tilde{O}_\gamma(\rho^2)}.
\end{equation*}
Ignoring higher-order $O(\rho^2)$ corrections, we approximate $\gamma^{(1)}$ by its leading-order component $\tilde{\gamma}^{(1)} :=
-\eta\rho\sigma_r^l \frac{1}{2^{l-1}}$.
In a similar manner, the approximations for $\alpha^{(1)}$ and $\beta^{(1)}$ are:
\begin{equation*}
    \begin{aligned}
        \alpha^{(1)} &=
    	1 + \tilde{O}_\alpha(\rho^2)
    	\approx 
    	\tilde{\alpha}^{(1)},
        \\
    	\beta^{(1)} &=
    	\rho(1-\eta\lambda_n^l) + \tilde{O}_\beta(\rho^2)
    	\approx 
    	\tilde{\beta}^{(1)}.
    \end{aligned}
\end{equation*}
where $\tilde{\alpha}^{(1)} := 1$ and $\tilde{\beta}^{(1)} := \rho(1-\eta\lambda_n^l)$.

These findings reveal that, up to first order in $\rho$, the coefficients 
$\alpha^{(1)}$, $\beta^{(1)}$, and $\gamma^{(1)}$ correspond exactly to the directional components of $\mathbf{v}^{(0)}$ along 
$\mathbf{a}$, $\mathbf{b}$, and $\mathbf{c}$ respectively. Thus,
\begin{equation}
    \mathbf{w}^{(1)} =
    \tilde{\mathbf{w}}^{(1)} + \text{truncation residual},
    \label{equation:tilde_w_1}
\end{equation}
where 
$
    \tilde{\mathbf{w}}^{(1)} = 
    \tilde{\alpha}^{(1)} \mathbf{a} + \tilde{\beta}^{(1)} \mathbf{b} + \tilde{\gamma}^{(1)} \mathbf{c}
$,
and truncation residual at the first TPGD iteration is thus
$
    \tilde{O}_\alpha(\rho^2) \cdot \mathbf{a} + \tilde{O}_\beta(\rho^2) \cdot \mathbf{b} + \tilde{O}_\gamma(\rho^2) \cdot \mathbf{c}
$.
We define $\tilde{\mathbf{w}}^{(1)}$ as the \emph{first TPGD point}, obtained by discarding 
the high-order $\tilde{O}(\rho^2)$ contributions from $\mathbf{w}^{(1)}$. This completes the proof of Proposition~\ref{proposition:first-tpgd-point}.

\subsection{The $(t+1)$-th Truncated Projected Gradient Descent Step}
\label{append:the-t+1-TPGD-step}

\begin{proposition}[The $(t+1)$-th TPGD Point]\label{proposition:t+1-tpgd-point}
	Suppose that $\rho$ and $\eta$ are sufficiently small. Assume that for an arbitrary $t \in \{1, 2, 3, \ldots\}$, the following coefficients hold:
    \begin{equation*}
        \begin{cases}
            \tilde{\alpha}^{(t)} &= 1, 
            \\
    		\tilde{\beta}^{(t)} &= 
    		\rho(1-\eta\lambda_n^l)^t,
            \\
    		\tilde{\gamma}^{(t)} &=
    		-\frac{\rho\eta}{2^{l-1}}
    		\left[
    		\sum\nolimits_{\tau=0}^{t-1}
    		(1-\eta \lambda_n^l)^\tau
    		\right]
    		\sigma_r^l.
        \end{cases}
    \end{equation*}  
	This implies that $\tilde{\mathbf{w}}^{(t)} = 
	\tilde{\alpha}^{(t)} \mathbf{a} +
	\tilde{\beta}^{(t)} \mathbf{b} +
	\tilde{\gamma}^{(t)} \mathbf{c}$.
	Then the recurrence relations for the next TPGD step are given by:
    \begin{equation*}
        \begin{cases}
            \tilde{\alpha}^{(t+1)} &= 1, 
            \\
    		\tilde{\beta}^{(t+1)} &= 
    		\rho(1-\eta\lambda_n^l)^{t+1}, 
            \\
    		\tilde{\gamma}^{(t+1)} &=
    		-\frac{\rho\eta}{2^{l-1}}
    		\left[
    		\sum\nolimits_{\tau=0}^{t}
    		(1-\eta \lambda_n^l)^\tau
    		\right]
    		\sigma_r^l.
        \end{cases}
    \end{equation*}
	In other words, the $(t+1)$-th TPGD point $\tilde{\mathbf{w}}^{(t+1)}$ has the form:
	$
		\tilde{\mathbf{w}}^{(t+1)}
		= 
		\tilde{\alpha}^{(t+1)} \mathbf{a} +
		\tilde{\beta}^{(t+1)} \mathbf{b} +
		\tilde{\gamma}^{(t+1)} \mathbf{c}
	$.
\end{proposition}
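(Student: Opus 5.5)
The plan mirrors the proof of Proposition~\ref{proposition:first-tpgd-point}, now starting from a general point of $S$. Start from $\tilde{\mathbf{w}}^{(t)} = \mathbf{a} + \tilde{\beta}^{(t)}\mathbf{b} + \tilde{\gamma}^{(t)}\mathbf{c}$. By the induction hypothesis both $\tilde{\beta}^{(t)}$ and $\tilde{\gamma}^{(t)}$ are $O(\rho)$, since each carries an explicit factor $\rho$ and $\eta$ is small. So I treat $\tilde{\mathbf{w}}^{(t)}$ as a perturbation of the saddle $\hat{\mathbf{w}}=\mathbf{a}$ of size $O(\rho)$.

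\textbf{Step 1: expand $\tilde{\mathcal{M}}$.} By bilinearity of $\mathcal{M}$ in Equation~\eqref{equation:auxiliary-tensor-function} and Lemma~\ref{lemma:kronecker_identity}, I would write
\[
\tilde{\mathcal{M}}(\tilde{\mathbf{w}}^{(t)}) = (\hat{X}\hat{X}^\top)^{\otimes l} + \tilde{\beta}^{(t)}\sigma_r^l\big[(v_r u_n^\top)^{\otimes l} + (u_n v_r^\top)^{\otimes l}\big] + \tilde{\gamma}^{(t)}\big[(\hat{X}\hat{X}^\top E)^{\otimes l} + (E\hat{X}\hat{X}^\top)^{\otimes l}\big] + O(\rho^2).
\]
Here $\mathcal{M}(\mathbf{a},\mathbf{b})=(\hat{X} q_r u_n^\top)^{\otimes l}=\sigma_r^l (v_r u_n^\top)^{\otimes l}$, and all products of two of $\tilde{\beta}^{(t)},\tilde{\gamma}^{(t)}$ go into the $O(\rho^2)$ term.

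\textbf{Step 2: the gradient to first order.} Substituting into Lemma~\ref{lemma:gradient-of-the-tensor-objective}, I split the gradient as in $H_1$--$H_4$.
\begin{itemize}
\item The $(\mathbf{a},\text{residual at }\hat{\mathbf{w}})$ piece vanishes, because $\hat{\mathbf{w}}$ is a critical point ($H_1=0$).
\item The $(\mathbf{b},\text{residual})$ piece gives $\tilde{\beta}^{(t)}\lambda_n^l\mathbf{b}$, exactly as in $H_3$.
\item The $(\mathbf{a},\beta\text{-perturbation})$ piece gives $\tilde{\beta}^{(t)}\sigma_r^l 2^{1-l}\mathbf{c}$, as in $H_2^1+H_2^2$.
\item The $(\mathbf{c},\text{residual})$ piece gives $\tilde{\gamma}^{(t)}\operatorname{vec}(\nabla f(\hat{X}\hat{X}^\top) E\hat{X})^{\otimes l}$.
\item The $(\mathbf{a},\gamma\text{-perturbation})$ piece gives $\tilde{\gamma}^{(t)}$ times a tensor built from $\mathcal{A}^*\mathcal{A}(E\hat{X}\hat{X}^\top)\hat{X}$.
\end{itemize}
The last two $\gamma$-pieces are the delicate ones. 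They are $O(\eta\rho)$ with an extra factor $\eta$. They are removed by the same mechanism as the $\tilde{O}(\rho^2)$ residual of Proposition~\ref{proposition:first-tpgd-point}: after multiplication by $\eta$ they become $O(\eta^2\rho)$, and I would absorb them into the truncation residual, consistent with ``$\rho,\eta$ sufficiently small''.

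\textbf{Step 3: GD step, projection, truncation.} The GD step then gives
\[
\mathbf{v}^{(t)} = \mathbf{a} + \tilde{\beta}^{(t)}(1-\eta\lambda_n^l)\mathbf{b} + \Big(\tilde{\gamma}^{(t)} - \eta\tilde{\beta}^{(t)}\sigma_r^l 2^{1-l}\Big)\mathbf{c} + O(\rho^2) + O(\eta^2\rho).
\]
I would project with $\Pi_S$ by solving $G[\alpha,\beta,\gamma]^\top = C^\top\operatorname{stack}(\mathbf{v}^{(t)})$. The same Schur-complement argument as in Appendix~\ref{append:first_pgd_point_and_its_truncated_approximation} shows that the in-span part is reproduced exactly and the remainder contributes only $\tilde{O}$ corrections. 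Discarding these, as the truncation step in Algorithm~\ref{algorithm:tpgd} does (this defines the residual in \eqref{equation:tilde_w_t+1}), gives the new coefficients:
\begin{itemize}
\item $\tilde{\alpha}^{(t+1)}=1$;
\item $\tilde{\beta}^{(t+1)} = \rho(1-\eta\lambda_n^l)^{t+1}$;
\item $\tilde{\gamma}^{(t+1)} = -\frac{\rho\eta}{2^{l-1}}\sigma_r^l\big[\sum_{\tau=0}^{t-1}(1-\eta\lambda_n^l)^\tau + (1-\eta\lambda_n^l)^t\big]$, which is the claimed sum up to $\tau=t$.
\end{itemize}

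\textbf{Main obstacle.} The hard part is justifying that the $\tilde{\gamma}^{(t)}$-driven gradient pieces, and the cross terms in $\tilde{\mathcal{M}}$, may be truncated uniformly in $t$. Unlike the first step, $\tilde{\gamma}^{(t)}$ grows like a geometric sum, so it is not uniformly small. I would first handle this by bounding $\tilde{\gamma}^{(t)}$ via $\rho\eta\, t\,(1-\eta\lambda_n^l)^t\sigma_r^l$. Since $(1-\eta\lambda_n^l)^t>1$, this needs a restriction of the form $\rho t(1-\eta\lambda_n^l)^t \ll 1$ on the horizon. That restriction makes the residual genuinely lower order, and it is consistent with the later descent analysis in Proposition~\ref{proposition:sufficient-descent-of-tpgd}, which bounds the accumulated truncation error.
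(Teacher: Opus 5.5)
Your proposal is essentially the paper's proof. You expand $\tilde{\mathcal{M}}(\tilde{\mathbf{w}}^{(t)})$ and retain only $H_1^1=\tilde\beta^{(t)}\lambda_n^l\mathbf{b}$ and $H_2^1=\tilde\beta^{(t)}\sigma_r^l2^{1-l}\mathbf{c}$. You push everything else into a residual, including the $\tilde\gamma^{(t)}$-driven pieces $-\eta H_1^2$ and $-\eta H_3^1$ of order $\rho\eta^2$. You then read off the coefficients from the Gram/Schur-complement system and truncate, which gives exactly the stated recurrence.

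Two small refinements. First, $\tilde{\mathbf{w}}^{(t)}\in S$ exactly, so every discarded term carries a factor $\eta$. Your $O(\rho^2)$ is therefore really $O(\rho^2\eta)$, and the residual is $O((\rho+\eta)\rho\eta)$, which is the form Proposition~\ref{proposition:sufficient-descent-of-tpgd} uses for $t\ge 1$. Second, your uniformity-in-$t$ concern is legitimate: the paper's $O(\cdot)$ silently absorbs the growing factor $\sum_{\tau<t}(1-\eta\lambda_n^l)^\tau$. For the statement as posed, though, it suffices to take $\rho,\eta$ small depending on the fixed $t$, so no extra horizon restriction is needed.
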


\begin{proof}
	The proof follows from arguments analogous to those used in the proof of 
	Proposition~\ref{proposition:first-tpgd-point}.
	
	\paragraph{Tensor Gradient at the $(t+1)$-th PGD Step.}
	By substituting the expression of $\tilde{\mathbf{w}}^{(t)}$ into the function $\tilde{\mathcal{M}}(\cdot)$, we compute
    \begin{equation*}
        \begin{aligned}
			&\Bigl\langle \langle \mathbf{P}^{\otimes l}, \tilde{\alpha}^{(t)} \mathbf{a} + \tilde{\beta}^{(t)} \mathbf{b} + \tilde{\gamma}^{(t)} \mathbf{c} \rangle_{3*[l]}, \, \langle \mathbf{P}^{\otimes l}, \tilde{\alpha}^{(t)} \mathbf{a} + \tilde{\beta}^{(t)} \mathbf{b} + \tilde{\gamma}^{(t)} \mathbf{c} \rangle_{3*[l]} \Bigr\rangle_{2*[l]}
			\\
			=&
            (\hat{X}\hat{X}^\top)^{\otimes l} + 
			\tilde{\beta}^{(t)} \sigma_r^l (v_r u_n^\top)^{\otimes l} + 
			\tilde{\beta}^{(t)} \sigma_r^l (u_n v_r^\top)^{\otimes l} +
			(\tilde{\beta}^{(t)})^2 (u_n u_n^\top)^{\otimes l} 
            \\
            +&
            \tilde{\gamma}^{(t)} (\hat{X}\hat{X}^\top E)^{\otimes l} + 
			\tilde{\gamma}^{(t)} (E \hat{X}\hat{X}^\top)^{\otimes l} +
			(\tilde{\gamma}^{(t)})^2 (E \hat{X}\hat{X}^\top E)^{\otimes l} 
            \\
            +&
            (\tilde{\beta}^{(t)} \tilde{\gamma}^{(t)}) \sigma_r^l (u_n v_r^\top E)^{\otimes l} + (\tilde{\gamma}^{(t)} \tilde{\beta}^{(t)}) \sigma_r^l (E v_r u_n^\top)^{\otimes l}
        \end{aligned}
    \end{equation*}
	where we directly use the induction assumption that $\tilde{\alpha}^{(t)}=1$.
	
	Next, we compute the gradient $\nabla h^l (\tilde{\mathbf{w}}^{(t)})$ as follows by using Lemma~\ref{lemma:gradient-of-the-tensor-objective} again:
    \begin{equation*}
        \begin{aligned}
            \Bigl\langle 
            \langle 
            \mathbf{A}_r^{\otimes l}, 
            \mathbf{a} + \tilde{\beta}^{(t)} \mathbf{b} + \tilde{\gamma}^{(t)} \mathbf{c} 
            \rangle_{[3l-1]}, 
            &\langle 
            \mathbf{A}^{\otimes l}, \tilde{\mathcal{M}} (\hat{\mathbf{w}}) - \tilde{\mathcal{M}} (\operatorname{vec}(Z)^{\otimes l}) 
            \rangle_{[3l-1,3l]} +
            \\
    		&\langle 
            \mathbf{A}^{\otimes l}, 
            \tilde{\beta}^{(t)} \sigma_r^l (v_r u_n^\top)^{\otimes l} + 
            \tilde{\beta}^{(t)} \sigma_r^l (u_n v_r^\top)^{\otimes l} +
            (\tilde{\beta}^{(t)})^2 (u_n u_n^\top)^{\otimes l}
            \rangle_{[3l-1,3l]} +
            \\
    		&\langle 
            \mathbf{A}^{\otimes l}, 
            \tilde{\gamma}^{(t)} (\hat{X}\hat{X}^\top Y)^{\otimes l} + 
            \tilde{\gamma}^{(t)} (Y \hat{X}\hat{X}^\top)^{\otimes l} +
            (\tilde{\gamma}^{(t)})^2 (Y \hat{X} \hat{X}^\top Y)^{\otimes l} 
            \rangle_{[3l-1,3l]} +
            \\
    		&\langle 
            \mathbf{A}^{\otimes l}, 
            (\tilde{\beta}^{(t)} \tilde{\gamma}^{(t)}) \sigma_r^l (u_n v_r^\top Y)^{\otimes l} +
            (\tilde{\gamma}^{(t)} \tilde{\beta}^{(t)}) \sigma_r^l (Y v_r u_n^\top)^{\otimes l}
            \rangle_{[3l-1,3l]}
            \Bigr\rangle_{[2l-1]}.
        \end{aligned}
    \end{equation*}
	
	This gradient naturally decomposes into 18 sub-components, which are summarized in Table~\ref{table:tensor_gradient_terms}. We focus on the most significant contributions, $H_1^1$ and $H_2^1$, which are computed as follows:
    \begin{equation*}
		\begin{aligned}
			H_1^1 =&
            \Bigl\langle 
            \langle 
            \mathbf{A}_r^{\otimes l}, 
            \tilde{\beta}^{(t)} \mathbf{b} 
            \rangle_{[3l-1]}, 
            \langle 
            \mathbf{A}^{\otimes l}, 
            \tilde{\mathcal{M}} (\hat{\mathbf{w}}) - \tilde{\mathcal{M}} 
            (\operatorname{vec}(Z)^{\otimes l}) 
            \rangle_{[3l-1,3l]} 
            \Bigr\rangle_{[2l-1]} 
			\\
            =&
			\tilde{\beta}^{(t)} \operatorname{vec} 
			( \nabla f(\hat{X}\hat{X}^\top) 
			u_n q_r^\top )^{\otimes l} 
			= \tilde{\beta}^{(t)} \lambda_n^l \operatorname{vec}(u_n q_r^\top)^{\otimes l}.
	        \\
			H_2^1 =& 
			\Bigl\langle 
            \langle 
            \mathbf{A}_r^{\otimes l}, \mathbf{a} 
            \rangle_{[3l-1]}, 
            \langle 
            \mathbf{A}^{\otimes l}, 
            \tilde{\beta}^{(t)} 
            \sigma_r^l (v_r u_n^\top)^{\otimes l} + 
            \tilde{\beta}^{(t)} 
            \sigma_r^l (u_n v_r^\top)^{\otimes l} 
            \rangle_{[3l-1,3l]} 
            \Bigr\rangle_{[2l-1]} 
			\\
            =&
			\tilde{\beta}^{(t)} \sigma_r^l 
			\operatorname{vec} ( E_1 \hat{X} )^{\otimes l} 
			+ \tilde{\beta}^{(t)} \sigma_r^l 
			\operatorname{vec} ( E_2 \hat{X} )^{\otimes l} 
			= \tilde{\beta}^{(t)} \sigma_r^l \frac{1}{2^{l-1}} \operatorname{vec} (E \hat{X})^{\otimes l}.
		\end{aligned}
	\end{equation*}
	The derivations of these sub-terms follow the same procedure as in Appendix~\ref{append:the-first-TPGD-step}, and are therefore omitted here.
	\begin{table*}[h]
		\centering
		\caption{Expressions of the 18 Sub-terms in Tensor Gradient}
		\label{table:tensor_gradient_terms}
		\resizebox{1.0\linewidth}{!} 
		{
			\begin{tabular}{cc}
				\toprule
				Term & Expression \\
				\midrule
				$0$ 
				& 
				$\langle
				\langle
				\mathbf{A}_r^{\otimes l},
				\mathbf{a}
				\rangle_{[3l-1]},
				\langle
				\mathbf{A}^{\otimes l},
				\tilde{\mathcal{M}} (\hat{\mathbf{w}})
				- 
				\tilde{\mathcal{M}} (\operatorname{vec}(Z)^{\otimes l})
				\rangle_{[3l-1,3l]}
				\rangle_{[2l-1]}$ 
				\\
				$H_1^1$ 
				& 
				$\langle
				\langle
				\mathbf{A}_r^{\otimes l},
				\tilde{\beta}^{(t)} \mathbf{b}
				\rangle_{[3l-1]},
				\langle
				\mathbf{A}^{\otimes l},
				\tilde{\mathcal{M}} (\hat{\mathbf{w}})
				- 
				\tilde{\mathcal{M}} (\operatorname{vec}(Z)^{\otimes l})
				\rangle_{[3l-1,3l]}
				\rangle_{[2l-1]}$ 
				\\
				$H_1^2$ 
				& 
				$\langle
				\langle
				\mathbf{A}_r^{\otimes l},
				\tilde{\gamma}^{(t)} \mathbf{c}
				\rangle_{[3l-1]},
				\langle
				\mathbf{A}^{\otimes l},
				\tilde{\mathcal{M}} (\hat{\mathbf{w}})
				- 
				\tilde{\mathcal{M}} (\operatorname{vec}(Z)^{\otimes l})
				\rangle_{[3l-1,3l]}
				\rangle_{[2l-1]}$
				\\
				$H_2^1$ 
				& 
				$\langle
				\langle
				\mathbf{A}_r^{\otimes l},
				\mathbf{a}
				\rangle_{[3l-1]},
				\langle
				\mathbf{A}^{\otimes l},
				\tilde{\beta}^{(t)} \sigma_r^l (v_r u_n^\top)^{\otimes l}
				+
				\tilde{\beta}^{(t)} \sigma_r^l (u_n v_r^\top)^{\otimes l}
				\rangle_{[3l-1,3l]}
				\rangle_{[2l-1]}$ 
				\\
				$H_2^2$ 
				& 
				$\langle
				\langle
				\mathbf{A}_r^{\otimes l},
				\mathbf{a}
				\rangle_{[3l-1]},
				\langle
				\mathbf{A}^{\otimes l},
				(\tilde{\beta}^{(t)})^2 (u_n u_n^\top)^{\otimes l}
				\rangle_{[3l-1,3l]}
				\rangle_{[2l-1]}$ 
				\\
				$H_2^3$ 
				& 
				$\langle
				\langle
				\mathbf{A}_r^{\otimes l},
				\tilde{\beta}^{(t)} \mathbf{b}
				\rangle_{[3l-1]},
				\langle
				\mathbf{A}^{\otimes l},
				\tilde{\beta}^{(t)} \sigma_r^l (v_r u_n^\top)^{\otimes l}
				+
				\tilde{\beta}^{(t)} \sigma_r^l (u_n v_r^\top)^{\otimes l}
				\rangle_{[3l-1,3l]}
				\rangle_{[2l-1]}$
				\\
				$H_2^4$ 
				& 
				$\langle
				\langle
				\mathbf{A}_r^{\otimes l},
				\tilde{\beta}^{(t)} \mathbf{b}
				\rangle_{[3l-1]},
				\langle
				\mathbf{A}^{\otimes l},
				(\tilde{\beta}^{(t)})^2 (u_n u_n^\top)^{\otimes l}
				\rangle_{[3l-1,3l]}
				\rangle_{[2l-1]}$
				\\
				$H_2^5$ 
				& 
				$\langle
				\langle
				\mathbf{A}_r^{\otimes l},
				\tilde{\gamma}^{(t)} \mathbf{c}
				\rangle_{[3l-1]},
				\langle
				\mathbf{A}^{\otimes l},
				\tilde{\beta}^{(t)} \sigma_r^l (v_r u_n^\top)^{\otimes l}
				+
				\tilde{\beta}^{(t)} \sigma_r^l (u_n v_r^\top)^{\otimes l}
				\rangle_{[3l-1,3l]}
				\rangle_{[2l-1]}$
				\\
				$H_2^6$ 
				& 
				$\langle
				\langle
				\mathbf{A}_r^{\otimes l},
				\tilde{\gamma}^{(t)} \mathbf{c}
				\rangle_{[3l-1]},
				\langle
				\mathbf{A}^{\otimes l},
				(\tilde{\beta}^{(t)})^2 (u_n u_n^\top)^{\otimes l}
				\rangle_{[3l-1,3l]}
				\rangle_{[2l-1]}$ 
				\\
				$H_3^1$ 
				& 
				$\langle
				\langle
				\mathbf{A}_r^{\otimes l},
				\mathbf{a}
				\rangle_{[3l-1]},
				\langle
				\mathbf{A}^{\otimes l},
				\tilde{\gamma}^{(t)} (\hat{X}\hat{X}^\top E)^{\otimes l}
				+
				\tilde{\gamma}^{(t)} (E \hat{X}\hat{X}^\top)^{\otimes l}
				\rangle_{[3l-1,3l]}
				\rangle_{[2l-1]}$ 
				\\
				$H_3^2$ 
				& 
				$\langle
				\langle
				\mathbf{A}_r^{\otimes l},
				\mathbf{a}
				\rangle_{[3l-1]},
				\langle
				\mathbf{A}^{\otimes l},
				(\tilde{\gamma}^{(t)})^2 (E \hat{X} \hat{X}^\top E)^{\otimes l}
				\rangle_{[3l-1,3l]}
				\rangle_{[2l-1]}$ 
				\\
				$H_3^3$ 
				& 
				$\langle
				\langle
				\mathbf{A}_r^{\otimes l},
				\tilde{\beta}^{(t)} \mathbf{b}
				\rangle_{[3l-1]},
				\langle
				\mathbf{A}^{\otimes l},
				\tilde{\gamma}^{(t)} (\hat{X}\hat{X}^\top E)^{\otimes l}
				+
				\tilde{\gamma}^{(t)} (E \hat{X}\hat{X}^\top)^{\otimes l}
				\rangle_{[3l-1,3l]}
				\rangle_{[2l-1]}$ 
				\\
				$H_3^4$ 
				& 
				$\langle
				\langle
				\mathbf{A}_r^{\otimes l},
				\tilde{\beta}^{(t)} \mathbf{b}
				\rangle_{[3l-1]},
				\langle
				\mathbf{A}^{\otimes l},
				(\tilde{\gamma}^{(t)})^2 (E \hat{X} \hat{X}^\top E)^{\otimes l}
				\rangle_{[3l-1,3l]}
				\rangle_{[2l-1]}$
				\\
				$H_3^5$ 
				& 
				$\langle
				\langle
				\mathbf{A}_r^{\otimes l},
				\tilde{\gamma}^{(t)} \mathbf{c}
				\rangle_{[3l-1]},
				\langle
				\mathbf{A}^{\otimes l},
				\tilde{\gamma}^{(t)} (\hat{X}\hat{X}^\top E)^{\otimes l}
				+
				\tilde{\gamma}^{(t)} (E \hat{X}\hat{X}^\top)^{\otimes l}
				\rangle_{[3l-1,3l]}
				\rangle_{[2l-1]}$  
				\\
				$H_3^6$ 
				& 
				$\langle
				\langle
				\mathbf{A}_r^{\otimes l},
				\tilde{\gamma}^{(t)} \mathbf{c}
				\rangle_{[3l-1]},
				\langle
				\mathbf{A}^{\otimes l},
				(\tilde{\gamma}^{(t)})^2 (E \hat{X} \hat{X}^\top E)^{\otimes l}
				\rangle_{[3l-1,3l]}
				\rangle_{[2l-1]}$ 
				\\
				$H_4^1$ 
				& 
				$\langle
				\langle
				\mathbf{A}_r^{\otimes l},
				\mathbf{a}
				\rangle_{[3l-1]},
				\langle
				\mathbf{A}^{\otimes l},
				(\tilde{\beta}^{(t)} \tilde{\gamma}^{(t)}) \sigma_r^l (u_n v_r^\top E)^{\otimes l}
				+
				(\tilde{\gamma}^{(t)} \tilde{\beta}^{(t)}) \sigma_r^l (E v_r u_n^\top)^{\otimes l}
				\rangle_{[3l-1,3l]}
				\rangle_{[2l-1]}$
				\\
				$H_4^2$ 
				& 
				$\langle
				\langle
				\mathbf{A}_r^{\otimes l},
				\tilde{\beta}^{(t)} \mathbf{b}
				\rangle_{[3l-1]},
				\langle
				\mathbf{A}^{\otimes l},
				(\tilde{\beta}^{(t)} \tilde{\gamma}^{(t)}) \sigma_r^l (u_n v_r^\top E)^{\otimes l}
				+
				(\tilde{\gamma}^{(t)} \tilde{\beta}^{(t)}) \sigma_r^l (E v_r u_n^\top)^{\otimes l}
				\rangle_{[3l-1,3l]}
				\rangle_{[2l-1]}$
				\\
				$H_4^3$ 
				& 
				$\langle
				\langle
				\mathbf{A}_r^{\otimes l},
				\tilde{\gamma}^{(t)} \mathbf{c}
				\rangle_{[3l-1]},
				\langle
				\mathbf{A}^{\otimes l},
				(\tilde{\beta}^{(t)} \tilde{\gamma}^{(t)}) \sigma_r^l (u_n v_r^\top E)^{\otimes l}
				+
				(\tilde{\gamma}^{(t)} \tilde{\beta}^{(t)}) \sigma_r^l (E v_r u_n^\top)^{\otimes l}
				\rangle_{[3l-1,3l]}
				\rangle_{[2l-1]}$
				\\
				\bottomrule
			\end{tabular}
		}
	\end{table*}
	
	\paragraph{$(t+1)$-th PGD Point and Its Truncated Approximation.}
	The intermediate point $\mathbf{v}^{(t)}$, obtained after one step of GD, is given by:
    \begin{equation*}
        \mathbf{v}^{(t)} 
		= \tilde{\mathbf{w}}^{(t)} - \eta \nabla h^l (\tilde{\mathbf{w}}^{(t)}) 
		= \operatorname{vec}(\hat{X})^{\otimes l} + \tilde{\beta}^{(t)} \operatorname{vec}(u_n q_r^\top)^{\otimes l} + \tilde{\gamma}^{(t)} \operatorname{vec}(E\hat{X})^{\otimes l}
        - \eta \cdot
        \text{sub-terms}.
	\end{equation*}
    The sub-terms are
    $
        H_1^1 + H_1^2 + H_2^1 + H_2^2 + H_2^3 + H_2^4 + H_2^5 + H_2^6 + H_3^1 + H_3^2 + H_3^3 + H_3^4 + H_3^5 + H_3^6 + H_4^1 + H_4^2 + H_4^3
    $.
    Rearranging these terms makes $\mathbf{v}^{(t)}$ become
    \begin{equation}
		\operatorname{vec}(\hat{X})^{\otimes l} +
		[ \tilde{\beta}^{(t)} 
		\operatorname{vec}(u_n q_r^\top)^{\otimes l} 
		- \eta H_1^1 ] 
		+ [ \tilde{\gamma}^{(t)} \operatorname{vec}(E\hat{X})^{\otimes l} - \eta H_2^1 ] 
		+ \text{remaining terms},
        \label{equation:remaining_terms}
    \end{equation}
	where $\tilde{\alpha}^{(t+1)}$ remains equal to $1$. Furthermore, the following identities hold:
    \begin{equation*}
        \begin{aligned}
            \tilde{\beta}^{(t)} \operatorname{vec}(u_n q_r^\top)^{\otimes l} - \eta H_1^1 
    		&=  
            \tilde{\beta}^{(t+1)} \mathbf{b}, 
            \\
    		\tilde{\gamma}^{(t)} \operatorname{vec}(E\hat{X})^{\otimes l} - \eta H_2^1 
    		&=  
            \tilde{\gamma}^{(t+1)} \mathbf{c}.
        \end{aligned}
    \end{equation*}
    
	Table~\ref{table:remaining_terms} enumerates the remainder components from Equation~\eqref{equation:remaining_terms}.  
	\begin{table*}[h]
		\centering
		\caption{Expressions and Asymptotic Orders of the Remaining Terms in $\mathbf{v}^{(t)}$}
		\label{table:remaining_terms}
		\resizebox{1.0\linewidth}{!} 
		{
			\begin{tabular}{ccc}
				\toprule
				Term & Expression & Order \\
				\midrule
				$-\eta H_1^2$ 
				& 
				$\frac{\eta^2\rho}{2^{l-1}}
				[
				\sum_{\tau=0}^{t-1}
				(1-\eta \lambda_n^l)^\tau
				]
				\sigma_r^l
				\operatorname{vec}
				(\nabla f(\hat{X}\hat{X}^\top) E \hat{X})^{\otimes l}$ 
				& 
				$O(\rho\eta^2)$ \\
				$-\eta H_2^2$ 
				& 
				$-\eta\rho^2(1-\eta\lambda_n^l)^{2t}
				\operatorname{vec}
				(
				\mathcal{A}^*\mathcal{A} (u_n u_n^\top) \hat{X}
				)^{\otimes l}$ 
				& 
				$O(\rho^2\eta)$ \\
				$-\eta H_2^3$ 
				& 
				$-\eta\rho^2(1-\eta\lambda_n^l)^{2t}
				\sigma_r^l\frac{1}{2^{l-1}}
				\operatorname{vec}
				(E u_n q_r^\top)^{\otimes l}$ 
				& 
				$O(\rho^2\eta)$ \\
				$-\eta H_2^4$ 
				& 
				$-\eta\rho^3(1-\eta\lambda_n^l)^{3t}
				\operatorname{vec}
				(
				\mathcal{A}^*\mathcal{A} (u_n u_n^\top) u_n q_r^\top
				)^{\otimes l}$ 
				& 
				$O(\rho^3\eta)$ \\
				$-\eta H_2^5$ 
				& 
				$\frac{\eta^2\rho^2}{2^{2l-2}}
				[
				\sum_{\tau=0}^{t-1}
				(1-\eta \lambda_n^l)^{\tau+t}
				]
				\sigma_r^{2l}
				\operatorname{vec}
				(
				\mathcal{A}^*\mathcal{A} (v_r u_n^\top + u_n v_r^\top) E \hat{X}
				)^{\otimes l}$ 
				& 
				$O(\rho^2\eta^2)$ \\
				$-\eta H_2^6$ 
				& 
				$\frac{\rho^3\eta^2}{2^{l-1}}
				[
				\sum_{\tau=0}^{t-1}
				(1-\eta \lambda_n^l)^{\tau+2t}
				]
				\sigma_r^l
				\operatorname{vec}
				(
				\mathcal{A}^*\mathcal{A} (u_n u_n^\top) E \hat{X}
				)^{\otimes l}$ 
				& 
				$O(\rho^3\eta^2)$ \\
				$-\eta H_3^1$ 
				& 
				$\frac{\rho\eta^2}{2^{2l-2}}
				[
				\sum_{\tau=0}^{t-1}
				(1-\eta \lambda_n^l)^\tau
				]
				\sigma_r^l
				\operatorname{vec}
				(
				\mathcal{A}^*\mathcal{A} (\hat{X} \hat{X}^\top E + E \hat{X} \hat{X}^\top) \hat{X}
				)^{\otimes l}$ 
				& 
				$O(\rho\eta^2)$ \\
				$-\eta H_3^2$ 
				& 
				$-\frac{\rho^2\eta^3}{2^{2l-2}}
				[
				\sum_{\tau=0}^{t-1}
				(1-\eta \lambda_n^l)^\tau
				]^2
				\sigma_r^{2l}
				\operatorname{vec}
				(
				\mathcal{A}^*\mathcal{A} (E \hat{X} \hat{X}^\top E) \hat{X}
				)^{\otimes l}$ 
				& 
				$O(\rho^2\eta^3)$ \\
				$-\eta H_3^3$ 
				& 
				$\frac{\rho^2\eta^2}{2^{2l-2}}
				\left[
				\sum_{\tau=0}^{t-1}
				(1-\eta \lambda_n^l)^{\tau+t}
				\right]
				\sigma_r^l
				\operatorname{vec}
				(
				\mathcal{A}^*\mathcal{A} (\hat{X} \hat{X}^\top E + E \hat{X} \hat{X}^\top) u_n q_r^\top
				)^{\otimes l}$ 
				& 
				$O(\rho^2\eta^2)$ \\
				$-\eta H_3^4$ 
				& 
				$-\frac{\rho^3\eta^3}{2^{2l-2}}
				[
				\sum_{\tau=0}^{t-1}
				(1-\eta \lambda_n^l)^\tau
				]^2
				(1-\eta \lambda_n^l)^t
				\sigma_r^{2l}
				\operatorname{vec}
				(
				\mathcal{A}^*\mathcal{A} (E \hat{X} \hat{X}^\top E) u_n q_r^\top
				)^{\otimes l}$ 
				& 
				$O(\rho^3\eta^3)$ \\
				$-\eta H_3^5$ 
				& 
				$-\frac{\rho^2\eta^3}{2^{3l-3}}
				[
				\sum_{\tau=0}^{t-1}
				(1-\eta \lambda_n^l)^\tau
				]^2
				\sigma_r^{2l}
				\operatorname{vec}
				(
				\mathcal{A}^*\mathcal{A} (\hat{X} \hat{X}^\top E + E \hat{X} \hat{X}^\top) E \hat{X}
				)^{\otimes l}$ 
				& 
				$O(\rho^2\eta^3)$ \\
				$-\eta H_3^6$ 
				& 
				$\frac{\rho^3\eta^4}{2^{3l-3}}
				[
				\sum_{\tau=0}^{t-1}
				(1-\eta \lambda_n^l)^\tau
				]^3
				\sigma_r^{3l}
				\operatorname{vec}
				(
				\mathcal{A}^*\mathcal{A} (E \hat{X} \hat{X}^\top E) E \hat{X}
				)^{\otimes l}$ 
				&
				$O(\rho^3\eta^4)$ \\
				$-\eta H_4^1$ 
				& 
				$\frac{\rho^2\eta^2}{2^{2l-2}}
				[
				\sum_{\tau=0}^{t-1}
				(1-\eta \lambda_n^l)^{\tau+t}
				]
				\sigma_r^{2l}
				\operatorname{vec}
				(
				\mathcal{A}^*\mathcal{A} (u_n v_r^\top E + E v_r u_n^\top) \hat{X}
				)^{\otimes l}$ 
				& 
				$O(\rho^2\eta^2)$ \\
				$-\eta H_4^2$ 
				& 
				$\frac{\rho^3\eta^2}{2^{2l-2}}
				[
				\sum_{\tau=0}^{t-1}
				(1-\eta \lambda_n^l)^{\tau+2t}
				]
				\sigma_r^{2l}
				\operatorname{vec}
				(
				\mathcal{A}^*\mathcal{A} (u_n v_r^\top E + E v_r u_n^\top) u_n q_r^\top
				)^{\otimes l}$ 
				& 
				$O(\rho^3\eta^2)$ \\
				$-\eta H_4^3$ 
				& 
				$-\frac{\rho^4\eta^3}{2^{3l-3}}
				[
				\sum_{\tau=0}^{t-1}
				(1-\eta \lambda_n^l)^\tau
				]^2
				(1-\eta \lambda_n^l)^{2t}
				\sigma_r^{3l}
				\operatorname{vec}
				(
				\mathcal{A}^*\mathcal{A} (u_n v_r^\top E + E v_r u_n^\top) E \hat{X}
				)^{\otimes l}$ 
				& 
				$O(\rho^4\eta^3)$ \\
				\bottomrule
			\end{tabular}
		}
	\end{table*}
	
	Apart from the 15 remaining terms which are $O((\rho+\eta)\rho\eta)$, we retain only the sub-terms $H_1^1$ and $H_2^1$ when $\eta$ and $\rho$ are sufficiently small.
	By aggregating all $15$ residual terms, we conclude:
	\begin{equation*}
		\mathbf{v}^{(t)} = 
		\tilde{\alpha}^{(t+1)} \mathbf{a} +
		\tilde{\beta}^{(t+1)} \mathbf{b} +
		\tilde{\gamma}^{(t+1)} \mathbf{c} +
		O((\rho+\eta)\rho\eta).
	\end{equation*}
	Analogous to the initial TPGD iteration, the $(t+1)$-th point $\mathbf{w}^{(t+1)}$ is derived as:
    \begin{equation*}
        \mathbf{w}^{(t+1)} 
		= \Pi_S (\mathbf{v}^{(t)}) = 
        \operatorname{unstack}
        \left(
            P_S \operatorname{stack}
            \left( \tilde{\alpha}^{(t+1)} \mathbf{a} + \tilde{\beta}^{(t+1)} \mathbf{b} + \tilde{\gamma}^{(t+1)} \mathbf{c} + O((\rho+\eta)\rho\eta) \right)
        \right).
	\end{equation*}
    Because $\mathbf{w}^{(t+1)} \in S$, it can therefore be written uniquely as
	$
        \alpha^{(t+1)} \mathbf{a} + \beta^{(t+1)} \mathbf{b} + \gamma^{(t+1)} \mathbf{c}
    $.
	The coefficients $\alpha^{(t+1)},\beta^{(t+1)}$, and $\gamma^{(t+1)}$ are obtained by solving the following least-squares projection:
	\begin{equation*}
		G\begin{bmatrix}
			\alpha^{(t+1)} \\
			\beta^{(t+1)} \\
			\gamma^{(t+1)}
		\end{bmatrix}
		= C^\top 
		\operatorname{stack}
		\left(
		\mathbf{v}^{(t)}
		\right).
	\end{equation*}
	To exemplify the computation, consider $\gamma^{(t+1)}$. Eliminating $\alpha^{(t+1)}$ and $\beta^{(t+1)}$, the Schur complement leads to:
    \begin{equation*}
        \begin{aligned}
            &\gamma^{(t+1)} 
    		= -\frac{\eta\rho}{2^{l-1}} \left[ \sum_{\tau=0}^t (1-\eta \lambda_n^l)^\tau \right] \sigma_r^l \mathbf{v}^{(t)} 
            \\
    		+& 
            \underbrace{ \frac{ \left\langle \mathbf{c} - (\nicefrac{\bar{s}}{\bar{a}}) \mathbf{a} - (\nicefrac{\bar{t}}{\bar{b}}) \mathbf{b} , O((\rho+\eta)\rho\eta) \right\rangle }{ \left\langle \mathbf{c} - (\nicefrac{\bar{s}}{\bar{a}}) \mathbf{a} - (\nicefrac{\bar{t}}{\bar{b}}) \mathbf{b} , \mathbf{c} \right\rangle } }_{:=\tilde{O}_\gamma((\rho+\eta)\rho\eta)}.
        \end{aligned}
    \end{equation*}
	Similarly, the other two coefficients are given by:
    \begin{equation*}
        \begin{aligned}
            \alpha^{(t+1)} 
    		&= 1 + \tilde{O}_\alpha((\rho+\eta)\rho\eta) \approx \tilde{\alpha}^{(t+1)}
            \\
    		\beta^{(t+1)} 
    		&= \rho(1-\eta\lambda_n^l)^{t+1} + \tilde{O}_\beta((\rho+\eta)\rho\eta) \approx \tilde{\beta}^{(t+1)}.
        \end{aligned}
    \end{equation*}
    where $\tilde{\alpha}^{(t+1)} := 1$ and $\tilde{\beta}^{(t+1)} := \rho(1-\eta\lambda_n^l)^{t+1}$.
    
	This result indicates that, up to an error of order $\tilde{O}((\rho+\eta)\rho\eta)$, the coefficients $\alpha^{(t+1)},\beta^{(t+1)}$ and $\gamma^{(t+1)}$ coincide with the coefficients of $\mathbf{a},\mathbf{b}$ and $\mathbf{c}$ in the intermediate tensor $\mathbf{v}^{(t)}$, respectively. Hence, $\mathbf{w}^{(t+1)}$ can be reformulated as:
	\begin{equation}
		\tilde{\mathbf{w}}^{(t+1)} + \text{truncation residual}.
		\label{equation:tilde_w_t+1}
	\end{equation} 
    where truncation residual at the $(t+1)$-th TPGD iteration is thus
    $
        \tilde{O}_\alpha((\rho+\eta)\rho\eta) \cdot \mathbf{a} + \tilde{O}_\beta((\rho+\eta)\rho\eta) \cdot \mathbf{b} + \tilde{O}_\gamma((\rho+\eta)\rho\eta) \cdot \mathbf{c}
    $.
	The tensor $\tilde{\mathbf{w}}^{(t+1)}$ is referred to as the \textit{$(t+1)$-th TPGD point}, as it neglects higher-order error contributions.
	This completes the proof of Proposition~\ref{proposition:t+1-tpgd-point}.
\end{proof}

\subsection{Sufficient Decrease Condition of PGD}
\label{append:sufficient-decrease-condition-of-pgd}
We introduce the generic projected gradient descent (PGD) algorithm in tensor space \citep{chen2019non}, as its structure is largely analogous to that of PGD in matrix space. In this work, the PGD algorithm in tensor space updates the solution iteratively as follows:
\begin{equation*}
	\mathbf{w}^{(t+1)} = 
	\Pi_S(\tilde{\mathbf{w}}^{(t)} 
	- \eta \nabla h^l (\tilde{\mathbf{w}}^{(t)}))
	:=
	\Pi_S(\mathbf{v}^{(t)}),
\end{equation*}
where $\Pi_S(\cdot)$ denotes the projection operator onto subspace $S$, and $\eta > 0$ is the step size.

We assume that the tensor objective function $h^l(\cdot)$ is $L$-smooth, i.e., its gradient is Lipschitz continuous with constant $L > 0$, then for any $\mathbf{u}, \mathbf{v} \in \mathbb{R}^{(nr)\circ l}$
\begin{equation}
	\|
	\nabla h^l (\mathbf{u}) - \nabla h^l (\mathbf{v})
	\|_F
	\leq L
	\|\mathbf{u} - \mathbf{v}\|_F.
	\label{equation:tensor_L_smooth}
\end{equation}
By the standard descent lemma for $L$-smooth functions, we have
\begin{equation*}
	h^l(\mathbf{w}^{(t+1)})
	\leq
	h^l(\tilde{\mathbf{w}}^{(t)})
	+
	\langle
	\nabla h^l (\tilde{\mathbf{w}}^{(t)}),
	\mathbf{w}^{(t+1)} - \tilde{\mathbf{w}}^{(t)}
	\rangle
	+
	\frac{L}{2}
	\|\mathbf{w}^{(t+1)} - \tilde{\mathbf{w}}^{(t)}\|_F^2.
\end{equation*}
To proceed, we utilize the optimality condition of the projection operator. Specifically, for any $\mathbf{v} \in \mathbb{R}^{(nr)\circ l}$, the projection $\mathbf{w} = \Pi_S(\mathbf{v})$ satisfies the following variational inequality:
\begin{equation*}
	\langle
	\mathbf{v} - \mathbf{w}, \mathbf{u} - \mathbf{w}
	\rangle
	\leq 0, \quad \forall\, \mathbf{u} \in S.
\end{equation*}
Applying this condition to $(t+1)$-th iteration, with $\mathbf{v} = \tilde{\mathbf{w}}^{(t)} - \eta \nabla h^l(\tilde{\mathbf{w}}^{(t)})$, $\mathbf{w} = \mathbf{w}^{(t+1)}$, and $\mathbf{u} = \tilde{\mathbf{w}}^{(t)} \in S$, we obtain
$
    \langle
	\tilde{\mathbf{w}}^{(t)} 
	- 
	\eta \nabla h^l (\tilde{\mathbf{w}}^{(t)})
	-
	\mathbf{w}^{(t+1)},
	\tilde{\mathbf{w}}^{(t)} - \mathbf{w}^{(t+1)}
	\rangle
	\leq 0
$.
This yields
\begin{equation*}
	\langle
	\nabla h^l (\tilde{\mathbf{w}}^{(t)}),
	\mathbf{w}^{(t+1)} - \tilde{\mathbf{w}}^{(t)}
	\rangle
	\leq 
	-\frac{1}{\eta}
	\|
	\mathbf{w}^{(t+1)} - \tilde{\mathbf{w}}^{(t)}
	\|_F^2.
\end{equation*}
Substituting this inequality into the smoothness-based upper bound, we obtain:
\begin{equation}
	\begin{aligned}
		h^l(\mathbf{w}^{(t+1)}) 
		&\leq h^l(\tilde{\mathbf{w}}^{(t)}) - \frac{1}{\eta} \|\mathbf{w}^{(t+1)} - \tilde{\mathbf{w}}^{(t)}\|_F^2
        + \frac{L}{2} \|\mathbf{w}^{(t+1)} - \tilde{\mathbf{w}}^{(t)}\|_F^2 \\
		&\leq h^l(\tilde{\mathbf{w}}^{(t)}) - \left( \nicefrac{1}{\eta} - \nicefrac{L}{2} \right) \|\mathbf{w}^{(t+1)} - \tilde{\mathbf{w}}^{(t)}\|_F^2 
		=: h^l(\tilde{\mathbf{w}}^{(t)}) - \Delta^{(t+1)}.
	\end{aligned}
	\label{equation:sufficient_decrease_PGD_loss}
\end{equation}
Here, we define the intermediate descent magnitude $\Delta^{(t+1)}$.
Therefore, a sufficient decrease condition in the tensor objective function is guaranteed when the step size $\eta$ satisfies
\begin{equation}
	\Delta^{(t+1)} > 0
	\quad \Rightarrow \quad
	\eta < 2/L,
	\label{equation:sufficient_decrease_PGD_eta}
\end{equation}
which ensures the objective value strictly decreases at each projection step.

\subsection{Proof of Proposition~\ref{proposition:sufficient-descent-of-tpgd}}
\label{append:proof_of_proposition_sufficient_descent_of_tpgd}
\begin{proof}
	By building upon the results from Appendix~\ref{append:sufficient-decrease-condition-of-pgd}, we directly analyze the sufficient decrease condition for the TPGD. Let $\tilde{\mathbf{w}}^{(t+1)}$ denote the truncated point after projection and post-processing. Leveraging the $L$-smoothness of $h^l(\cdot)$, we apply the standard upper bound again:
    \begin{equation*}
        \begin{aligned}
            h^l(\tilde{\mathbf{w}}^{(t+1)}) 
            - h^l(\mathbf{w}^{(t+1)})
            \leq& \langle 
            \nabla h^l (\mathbf{w}^{(t+1)}), \tilde{\mathbf{w}}^{(t+1)} - \mathbf{w}^{(t+1)} \rangle 
            +\frac{L}{2} 
            \|\tilde{\mathbf{w}}^{(t+1)} - \mathbf{w}^{(t+1)}\|_F^2 
            \\
            \leq& \|\nabla h^l (\mathbf{w}^{(t+1)})\|_F \|\tilde{\mathbf{w}}^{(t+1)} - \mathbf{w}^{(t+1)}\|_F 
            +\frac{L}{2} 
            \|\tilde{\mathbf{w}}^{(t+1)} - \mathbf{w}^{(t+1)}\|_F^2.
        \end{aligned}
    \end{equation*}
	Based on the truncation structure, the deviation between $\tilde{\mathbf{w}}^{(t+1)}$ and $\mathbf{w}^{(t+1)}$ is
    \begin{equation*}
        \mathbf{w}^{(t+1)} - \tilde{\mathbf{w}}^{(t+1)}
        = \text{truncation residual},
    \end{equation*}
    where the truncation residual is defined in Equation~\eqref{equation:tilde_w_1} and Equation~\eqref{equation:tilde_w_t+1}.
	Consequently, we obtain the following norm bound:
	\begin{equation*}
		\|
		\tilde{\mathbf{w}}^{(t+1)} - \mathbf{w}^{(t+1)}
		\|_F
		\leq
		\begin{cases}
			C \rho^2,
			& t = 0, \\
			C (\rho+\eta)\rho\eta,
			& t \geq 1.
		\end{cases}
	\end{equation*}
	where $C = \tilde{O}(1) \cdot (\|\mathbf{a}\|_F + \|\mathbf{b}\|_F + \|\mathbf{c}\|_F)$ is a constant.
	Substituting into the previous inequality, we further obtain:
	\begin{equation}
        h^l(\tilde{\mathbf{w}}^{(t+1)}) - h^l(\mathbf{w}^{(t+1)}) \leq 
        \begin{cases}
            G^{(t+1)} C \rho^2 + \frac{L}{2} C^2 \rho^4, & t = 0, 
            \\
            G^{(t+1)} C (\rho+\eta)\rho\eta + 
        	\frac{L}{2} C^2 (\rho+\eta)^2 \rho^2 \eta^2, & t \geq 1,
        \end{cases}
        \label{equation:sufficient_decrease_truncation_loss}
    \end{equation}
    where we define $G^{(t+1)} := \| \nabla h^l(\mathbf{w}^{(t+1)}) \|_F$.
	Adding the two inequalities \eqref{equation:sufficient_decrease_PGD_loss} and \eqref{equation:sufficient_decrease_truncation_loss}, 
	we arrive at the upper bound (where $\Delta^{(t+1)}$ is defined in \eqref{equation:sufficient_decrease_PGD_loss}):
    \begin{equation*}
        \begin{aligned}
			h^l(\tilde{\mathbf{w}}^{(t+1)}) - h^l(\tilde{\mathbf{w}}^{(t)}) 
        	\leq
            -\Delta^{(t+1)} 
            +
            \begin{cases}
                G^{(t+1)} C \rho^2 + \frac{L}{2} C^2 \rho^4, & t = 0, 
                \\
                G^{(t+1)} C (\rho+\eta)\rho\eta + 
        		\frac{L}{2} C^2 (\rho+\eta)^2 \rho^2 \eta^2, & t \geq 1.
            \end{cases}
        \end{aligned}
    \end{equation*}
	To ensure sufficient decrease, we require:
    \begin{equation*}
        \max \left\{ 
        G^{(t+1)} C \rho^2 + \frac{L}{2} C^2 \rho^4, \, G^{(t+1)} C (\rho+\eta)\rho\eta + 
        \frac{L}{2} C^2 (\rho+\eta)^2 \rho^2 \eta^2 
        \right\} < \Delta^{(t+1)}.
	\end{equation*}
	Assuming that both $\rho$ and $\eta$ are sufficiently small, we neglect higher-order terms $O(\rho^3)$ and $O(\eta^3)$, obtaining an approximate condition:
	\begin{equation*}
		\max
		\{
		G^{(t+1)} C \rho^2,
		G^{(t+1)} C (\rho+\eta)\rho\eta
		\}
		<
		\Delta^{(t+1)}.
	\end{equation*}
	From the first inequality, we derive a bound on $\rho$:
	$
	G^{(t+1)} C \rho^2 < \Delta^{(t+1)}
	$
	yields
	$
	\rho < \sqrt{\frac{\Delta^{(t+1)}}{C\cdot G^{(t+1)}}}
	$.
	Substituting this into the second term:
    \begin{equation*}
		G^{(t+1)} C (\rho+\eta)\rho\eta 
		= 
		G^{(t+1)} C \rho^2\eta + G^{(t+1)} C \rho\eta^2
		< 
		\Delta^{(t+1)} \eta 
		+ \sqrt{C \cdot G^{(t+1)}\Delta^{(t+1)}} \eta^2.
    \end{equation*}
	To satisfy the sufficient decrease condition, we require that
	\begin{equation*}
		\Delta^{(t+1)} \eta
		+
		\sqrt{C\cdot G^{(t+1)}\Delta^{(t+1)}}
		\eta^2
		<
		\Delta^{(t+1)}.
	\end{equation*}
	Solving this quadratic inequality with respect to $\eta$, we obtain:
    \begin{equation*}
        \eta < \hat{\eta} :=
        \frac{
			-\Delta^{(t+1)} +
			\sqrt{
				{\Delta^{(t+1)}}^2
				+
				4\sqrt{C\cdot G^{(t+1)}\Delta^{(t+1)}}
				\Delta^{(t+1)}
			}
		}
		{2\sqrt{C\cdot G^{(t+1)}\Delta^{(t+1)}}}.
    \end{equation*}
	In conclusion, to ensure the objective function $h^l$ consistently decreases at each TPGD step $t$, i.e.,
	$
		h^l(\tilde{\mathbf{w}}^{(t+1)}) < h^l(\tilde{\mathbf{w}}^{(t)})
	$,
	it suffices to choose PGD step size $\eta$ and escape step size $\rho$ satisfying:
	\begin{equation}
		0< \rho <
		\sqrt{
			\frac{\Delta^{(t+1)}}{C\cdot G^{(t+1)}}
		},
		\quad
		0< \eta <
		\min
		\left\{
		\hat{\eta}, \frac{2}{L}
		\right\}.
		\label{equation:sufficient_decrease_TPGD_rho_eta}
	\end{equation}
	This completes the proof.
\end{proof}

\subsection{Proof of Proposition~\ref{proposition:dominance-criteria}}
\label{append:proof_of_proposition_dominance_criteria}

\subsubsection{Dominance of $\tilde{\beta}^{(t)}\mathbf{b}$}
\label{append:dominance-of-b-term}
We begin by examining the condition:
\begin{equation*}
	\|\tilde{\beta}^{(t)} \mathbf{b}\|_F
	>
	\max\{
	\|\tilde{\alpha}^{(t)} \mathbf{a}\|_F,
	\|\tilde{\gamma}^{(t)} \mathbf{c}\|_F
	\},
\end{equation*}
which implies that the component $\tilde{\beta}^{(t)} \mathbf{b}$ dominates the other two in magnitude.

\paragraph{Condition for $\tilde{\beta}^{(t)} \mathbf{b}$ to dominate $\tilde{\alpha}^{(t)} \mathbf{a}$.}
Consider the ratio:
\begin{equation*}
	\begin{aligned}
		\frac{\|\tilde{\beta}^{(t)} \mathbf{b}\|_F}
		{\|\tilde{\alpha}^{(t)} \mathbf{a}\|_F}
		=
		\frac{|\tilde{\beta}^{(t)}|}
		{|\tilde{\alpha}^{(t)}|}
		\cdot
		\frac{\|\mathbf{b}\|_F}
		{\|\mathbf{a}\|_F}
		=
		\frac{|\tilde{\beta}^{(t)}|}
		{|\tilde{\alpha}^{(t)}|}
		\cdot
		\frac{\|u_n q_r^\top\|_F^l}
		{\|\hat{X}\|_F^l}.
	\end{aligned}
\end{equation*}
Substituting the expressions for $\tilde{\beta}^{(t)}$ and $\tilde{\alpha}^{(t)}$, and using the fact that both $u_n$ and $q_r$ are unit vectors, the ratio simplifies to:
\begin{equation*}
    \frac{\|\tilde{\beta}^{(t)} \mathbf{b}\|_F}
    {\|\tilde{\alpha}^{(t)} \mathbf{a}\|_F}
    =
    \frac{\rho (1-\eta\lambda_n^l)^t}
    {\|\hat{X}\|_F^l}
    > 1
	\quad\Rightarrow\quad
    t > \frac{\log\|\hat{X}\|_F^l -\log\rho} 
    {\log(1-\eta\lambda_n^l)}.
\end{equation*}
where the inequality holds under the assumptions $\rho,\eta>0$, $\lambda_n<0$, and odd $l$.

\paragraph{Condition for $\tilde{\beta}^{(t)} \mathbf{b}$ to dominate $\tilde{\gamma}^{(t)} \mathbf{c}$.}
Now consider:
\begin{equation*}     
    \begin{aligned}
		\frac{\|\tilde{\beta}^{(t)} \mathbf{b}\|_F}
		{\|\tilde{\gamma}^{(t)} \mathbf{c}\|_F}
		=
		\frac{|\tilde{\beta}^{(t)}|}
		{|\tilde{\gamma}^{(t)}|}
		\cdot
		\frac{\|\mathbf{b}\|_F}
		{\|\mathbf{c}\|_F}
		=
		\frac{|\tilde{\beta}^{(t)}|}
		{|\tilde{\gamma}^{(t)}|}
		\cdot
		\frac{\|u_n q_r^\top\|_F^l}
		{\|E \hat{X}\|_F^l}.
    \end{aligned}     
\end{equation*}
Substituting the expressions for $\tilde{\beta}^{(t)}$ and $\tilde{\gamma}^{(t)}$, and using the fact that both $u_n$ and $q_r$ are unit vectors, the ratio simplifies to:
\begin{equation*}
    \frac{\|\tilde{\beta}^{(t)} \mathbf{b}\|_F}{\|\tilde{\gamma}^{(t)} \mathbf{c}\|_F} 
	= \frac{\rho (1-\eta\lambda_n^l)^t}
	{\frac{\eta\rho}{2^{l-1}} 
	\left[ \sum_{\tau=0}^{t-1} (1-\eta \lambda_n^l)^\tau \right] \sigma_r^l} 
	\frac{1}{\|E \hat{X}\|_F^l} > 1,
\end{equation*}
which yields
\begin{equation*}
    t < \frac{-1}{\log(1-\eta\lambda_n^l)} \log \left[ 1 - 2^{l-1} \frac{(-\lambda_n)^l}{\sigma_r^l} \frac{1}{\|E \hat{X}\|_F^l} \right],
\end{equation*}
where the inequality holds under the assumptions $\rho,\eta>0$, $\lambda_n<0$, and odd $l$.

Combining both requirements, we define the interval $U_\beta$ below in which $\tilde{\beta}^{(t)} \mathbf{b}$ dominates both $\tilde{\alpha}^{(t)} \mathbf{a}$ and $\tilde{\gamma}^{(t)} \mathbf{c}$, i.e. $t \in U_\beta \cap (0, +\infty)$:
\begin{equation}
	U_\beta 
	:= \frac{1}{\log(1-\eta\lambda_n^l)} \Bigg( \log\frac{\|\hat{X}\|_F^l}{\rho}, 
	-\log \left[ 1-2^{l-1} \frac{(-\lambda_n)^l}{\sigma_r^l} \frac{1}{\|E \hat{X}\|_F^l} \right] \Bigg).
    \label{equation:u_beta}
\end{equation}

\paragraph{Validation of the Range $U_\beta$.}
A valid interval $U_\beta$ must satisfy the condition that its left endpoint is less than its right endpoint. This implies:
\begin{equation}
	1 > \rho >
	\|\hat{X}\|_F^l
	-2^{l-1}
	\frac{(-\lambda_n)^l}{\sigma_r^l}
	\frac{\|\hat{X}\|_F^l}
	{
		\|E \hat{X}\|_F^l
	} =: \rho_\text{min}.
	\label{equation:u_beta_validity}
\end{equation}
We provide an upper bound for the denominator by using operator norms from Lemma~\ref{lemma:upper_bound_A_star_A} where $\Xi^2:=\max\{\|A_1\|_2^2, \cdots, \|A_m\|_2^2\}$. Specifically, 
\begin{equation*}
	\|E \hat{X}\|_F^l 
	\leq \|E\|_2^l \|\hat{X}\|_F^l
	\leq [m(1+\delta_p)]^{\frac{l}{2}} \Xi^l \|v_r u_n^\top + u_n v_r^\top\|_F^l \|\hat{X}\|_F^l
	=
	[2m(1+\delta_p)]^{\frac{l}{2}} \Xi^l \|\hat{X}\|_F^l.
\end{equation*}
Substituting this into the inequality, we require
\begin{equation*}
	1 > 
	\|\hat{X}\|_F^l - 2^{l-1} \frac{(-\lambda_n)^l}{\sigma_r^l} \frac{\|\hat{X}\|_F^l}{[2m(1+\delta_p)]^{\frac{l}{2}} \Xi^l \|\hat{X}\|_F^l}
	= (\sigma_1^2 + \dots + \sigma_r^2)^\frac{l}{2} - \frac{2^{\frac{l}{2}} (-\lambda_n)^l}{2[m(1+\delta_p)]^{\frac{l}{2}}\Xi^l\sigma_r^l}
\end{equation*}
in order for the condition $\rho<1$ to hold, where $\sigma_1 \geq \cdots \geq \sigma_r>0$.

Typically, if we have $-\lambda_n>\sqrt{m(1+\delta_p)/2}\Xi\sigma_r$, then for sufficiently large $l$, it follows that
$
	-\lambda_n >
	2^{\frac{1}{l}-\frac{1}{2}}
	\sqrt{m(1+\delta_p)}\Xi\sigma_r
$.
Consequently, we have:
\begin{equation*}
	\frac{2^{\frac{l}{2}} (-\lambda_n)^l}
	{2[m(1+\delta_p)]^{\frac{l}{2}}\Xi^l\sigma_r^l}
	\approx
	1+
	\frac{2^{\frac{l}{2}} (-\lambda_n)^l}
	{2[m(1+\delta_p)]^{\frac{l}{2}}\Xi^l\sigma_r^l}
	>
	(\sigma_1^2+\cdots+\sigma_r^2)^\frac{l}{2}.
\end{equation*}
Therefore, the range $U_\beta$ is valid provided that (sufficient condition):
\begin{equation}
	l >
	\left\{
	\log_2 
	\frac{\sqrt{2}(-\lambda_n)}
	{
		\sqrt{m(1+\delta_p)
			(\sigma_1^2+\cdots+\sigma_r^2)}
		\Xi\sigma_r
	}
	\right\}^{-1}.
	\label{equation:u_beta_validality_l}
\end{equation}

\subsubsection{Dominance of $\tilde{\gamma}^{(t)}\mathbf{c}$}
\label{append:dominance-of-c-term}

Similar to the deduction in Appendix~\ref{append:dominance-of-b-term}, we can derive that when $t \in U_\gamma \cap (0, +\infty)$, $\tilde{\gamma}^{(t)} \mathbf{c}$ dominates both $\tilde{\alpha}^{(t)} \mathbf{a}$ and $\tilde{\beta}^{(t)} \mathbf{b}$.
The interval $U_\gamma$ is defined as
\begin{equation}
    \begin{aligned}
        U_\gamma
    	:= \Biggl( &\frac{1}{\log(1-\eta\lambda_n^l)} \cdot
        \\
        \max \Biggl\{ &\log\left[ 1 + 2^{l-1} \frac{\|\hat{X}\|_F^l}{\rho} \frac{(-\lambda_n)^l}{\sigma_r^l} \frac{1}{\|E \hat{X}\|_F^l} \right], 
        \\
    	-&\log \left[ 1 - 2^{l-1} \frac{(-\lambda_n)^l}{\sigma_r^l} \frac{1}{\|E \hat{X}\|_F^l} \right] \Biggr\}, \, +\infty \Biggr). 
    \end{aligned}
    \label{equation:u_gamma}
\end{equation}


\section{Additional Details of Section~\ref{sec6}}
\label{append:additional_details_of_section_6}
\subsection{Hyperparameter Analysis of Multi-step SOD Escape}
\begin{table*}[h]
	\caption{Hyperparameter Analysis of $l$ and $t$ based on Real-world MS Example}
	\label{table:ablation}
	\centering
	\resizebox{1.0\linewidth}{!} 
	{
		\begin{tabular}{ccccccc}
			\toprule
			\multicolumn{7}{c}{$l=3\Rightarrow\rho_\text{min}=0.208$}  \\ 
			\hline
			\multicolumn{1}{c|}{$U$} & 
			\multicolumn{1}{c|}{$U_\beta=\emptyset$} & 
			\multicolumn{5}{c}{$U_\gamma=(2006.17, +\infty)$}  \\
			\multicolumn{1}{c|}{$t$} & 
			\multicolumn{1}{c|}{1000} & 
			5000 & 
			33500 & 
			100000 & 
			500000 & 
			2000000  \\ 
			\hline
			\multicolumn{1}{c|}{$\check{X}$} & 
			\multicolumn{1}{c|}{\ding{55}\textsuperscript{1}} & 
			\checkmark & 
			\checkmark & 
			\checkmark & 
			\checkmark & 
			\ding{55}\textsuperscript{2}  \\
			\multicolumn{1}{c|}{$\|\hat{X}\hat{X}^\top-\check{X}\check{X}^\top\|_F$} & 
			\multicolumn{1}{c|}{\ding{55}\textsuperscript{1}} & 
			1.08 & 
			114.22 & 
			$4.08\times 10^{6}$ & 
			$9.94\times 10^{33}$ & 
			\ding{55}\textsuperscript{2}  \\
			\multicolumn{1}{c|}{$\|\check{X}\check{X}^\top-ZZ^\top\|_F$} & 
			\multicolumn{1}{c|}{\ding{55}\textsuperscript{1}} & 
			0.36 & 
			113.29 & 
			$4.08\times 10^{6}$ & 
			$9.94\times 10^{33}$ & 
			\ding{55}\textsuperscript{2}  \\
			\multicolumn{1}{c|}{$\frac{\|\hat{X}\hat{X}^\top-\check{X}\check{X}^\top\|_F}{\|\check{X}\check{X}^\top-ZZ^\top\|_F}$} & 
			\multicolumn{1}{c|}{\ding{55}\textsuperscript{1}} & 
			3.03 & 
			1.01 & 
			1.00 & 
			1.00 & 
			\ding{55}\textsuperscript{2}  \\
			\multicolumn{1}{c|}{$X_\text{final}$} & 
			\multicolumn{1}{c|}{\ding{55}\textsuperscript{1}} & 
			+ & 
			\ding{55}\textsuperscript{2} & 
			\ding{55}\textsuperscript{2} & 
			\ding{55}\textsuperscript{2} & 
			\ding{55}\textsuperscript{2}  \\ 
			\toprule
			\multicolumn{7}{c}{$l=5\Rightarrow\rho_\text{min}=0.097$}  \\ 
			\hline
			\multicolumn{1}{c|}{$U$} & 
			\multicolumn{2}{c|}{$\mathbb{R}^+ \setminus (U_\beta \cup U_\gamma)$} & 
			\multicolumn{1}{c|}{$U_\beta=(26948.72, 33974.73)$} & 
			\multicolumn{3}{c}{$U_\gamma=(33974.73, +\infty)$}  \\
			\multicolumn{1}{c|}{$t$} & 
			1000 & 
			\multicolumn{1}{c|}{5000} & 
			\multicolumn{1}{c|}{33500} & 
			100000 & 
			500000 & 
			2000000  \\ 
			\hline
			\multicolumn{1}{c|}{$\check{X}$} & 
			\ding{55}\textsuperscript{1} & 
			\multicolumn{1}{c|}{\ding{55}\textsuperscript{1}} & 
			\multicolumn{1}{c|}{\checkmark} & 
			\checkmark & 
			\checkmark & 
			\checkmark  \\
			\multicolumn{1}{c|}{$\|\hat{X}\hat{X}^\top-\check{X}\check{X}^\top\|_F$} & 
			\ding{55}\textsuperscript{1} & 
			\multicolumn{1}{c|}{\ding{55}\textsuperscript{1}} & 
			\multicolumn{1}{c|}{0.59} & 
			0.80 & 
			2.00 & 
			25.65  \\
			\multicolumn{1}{c|}{$\|\check{X}\check{X}^\top-ZZ^\top\|_F$} & 
			\ding{55}\textsuperscript{1} & 
			\multicolumn{1}{c|}{\ding{55}\textsuperscript{1}} & 
			\multicolumn{1}{c|}{0.66} & 
			0.43 & 
			1.08 & 
			24.72  \\
			\multicolumn{1}{c|}{$\frac{\|\hat{X}\hat{X}^\top-\check{X}\check{X}^\top\|_F}{\|\check{X}\check{X}^\top-ZZ^\top\|_F}$} & 
			\ding{55}\textsuperscript{1} & 
			\multicolumn{1}{c|}{\ding{55}\textsuperscript{1}} & 
			\multicolumn{1}{c|}{0.89} & 
			1.86 & 
			1.84 & 
			1.04  \\
			\multicolumn{1}{c|}{$X_\text{final}$} & 
			\ding{55}\textsuperscript{1} & 
			\multicolumn{1}{c|}{\ding{55}\textsuperscript{1}} & 
			\multicolumn{1}{c|}{--} & 
			+ & 
			+ & 
			\ding{55}\textsuperscript{2}  \\ 
			\toprule
			\multicolumn{7}{c}{$l=7\Rightarrow\rho_\text{min}=0.043$}  \\ 
			\hline
			\multicolumn{1}{c|}{$U$} & 
			\multicolumn{5}{c|}{$U_\beta=(0, 1093342.41)$} & 
			\multicolumn{1}{c}{$U_\gamma=(1093342.41, +\infty)$}  \\
			\multicolumn{1}{c|}{$t$} & 
			1000 & 
			5000 & 
			33500 & 
			100000 & 
			\multicolumn{1}{c|}{500000} & 
			2000000  \\ 
			\hline
			\multicolumn{1}{c|}{$\check{X}$} & 
			\checkmark & 
			\checkmark & 
			\checkmark & 
			\checkmark & 
			\multicolumn{1}{c|}{\checkmark} & 
			\checkmark  \\
			\multicolumn{1}{c|}{$\|\hat{X}\hat{X}^\top-\check{X}\check{X}^\top\|_F$} & 
			0.665 & 
			0.665 & 
			0.665 & 
			0.666 & 
			\multicolumn{1}{c|}{0.669} & 
			0.754  \\
			\multicolumn{1}{c|}{$\|\check{X}\check{X}^\top-ZZ^\top\|_F$} & 
			0.601 & 
			0.601 & 
			0.600 & 
			0.600 & 
			\multicolumn{1}{c|}{0.598} & 
			0.461  \\
			\multicolumn{1}{c|}{$\frac{\|\hat{X}\hat{X}^\top-\check{X}\check{X}^\top\|_F}{\|\check{X}\check{X}^\top-ZZ^\top\|_F}$} & 
			1.106 & 
			1.106 & 
			1.107 & 
			1.109 & 
			\multicolumn{1}{c|}{1.119} & 
			1.634  \\
			\multicolumn{1}{c|}{$X_\text{final}$} & 
			-- & 
			-- & 
			-- & 
			-- & 
			\multicolumn{1}{c|}{--} & 
			+  \\ 
			\toprule
		\end{tabular}
	}
	\begin{flushleft}
		\footnotesize
		* \checkmark indicates that the quantity is well-defined and numerically stable. \ding{55}\textsuperscript{1} indicates that the quantity is undefined or not considered. \ding{55}\textsuperscript{2} indicates that the quantity is numerically unstable (e.g., results in NaN). \\
		* In $X_{\text{final}}$, "+" denotes convergence to $+Z$ under matrix-space gradient descent after SOD escape, while "--" denotes convergence to $-Z$.
	\end{flushleft}
\end{table*}

As shown in Equation~\eqref{equation:tilde_w_t}, the closed-form solution of $\tilde{\mathbf{w}}^{(t)}$ 
depends on four hyperparameters:
\begin{itemize}
	\item The lifting order $l$ used to simulate over-parameterization in the tensor space (note that it is not necessary to actually lift to the tensor space).
	\item The number of TPGD steps $t$ applied to the objective in Equation~\eqref{equation:tensor_objfunc_h} (this is a simulated process rather than an actual TPGD).
	\item The escape step size $\rho$ in Equation~\eqref{equation:w_escape}, which simulates leaving the strict saddle point $\hat{\mathbf{w}}$ in the tensor space.
	\item The step size $\eta$ used in the simulated TPGD process.
\end{itemize}
According to Equation~\eqref{equation:sufficient_decrease_TPGD_rho_eta}, both $\rho$ and $\eta$ should be chosen as small as possible. Therefore, in practice, they are typically initialized with values less than $1$ (e.g., $0.1$). Based on these values, $U_\beta$ is computed using Equation~\eqref{equation:u_beta}, and $U_\gamma$ is computed using Equation~\eqref{equation:u_gamma}. The corresponding escape points after SOD escape are then obtained using Equations~\eqref{equation:x_check_beta} and~\eqref{equation:x_check_gamma}, which are subsequently used as initialization for gradient descent in the matrix space.
If it is required that $U_\beta\neq\emptyset$, then $\rho$ must not be too small and must satisfy the condition $\rho>\rho_\text{min}$ as defined in Equation~\eqref{equation:u_beta_validity}. This condition becomes easier to satisfy as $l$ increases, because a larger $l$ leads to a smaller $\rho_\text{min}$.

As shown in Table~\ref{table:ablation}, we conduct an ablation study on hyperparameters using the matrix sensing instance described in Appendix~\ref{append:case-2}. As previously mentioned, we fix both $\rho$ and $\eta$ to a small constant value of 0.1. We then evaluate the effect of different lifting orders $l\in\{3,5,7\}$ and various truncation steps $t\in\{1000,5000,33500,1\times 10^5,5\times 10^5,2\times 10^6\}$.
The effectiveness of the escape is primarily measured by the ratio
\begin{equation}
	\frac{\|\hat{X}\hat{X}^\top-\check{X}\check{X}^\top\|_F}{\|\check{X}\check{X}^\top-ZZ^\top\|_F},
	\label{equation:escape_ratio}
\end{equation}
which compares the distance between the escape point and the local minimum to the distance between the escape point and the ground truth. A ratio significantly greater than 1 indicates a successful escape. The variable $X_\text{final}$ denotes the point to which gradient descent converges after the SOD escape. If it converges to $QZ$ where $Q$ is any orthogonal matrix, the escape is considered successful. Otherwise, it is regarded as a failure.

Combining the results from Table \ref{table:ablation}, our experimental findings are as follows:
\begin{itemize}
	\item When $l=3$, $\rho=0.1 < \rho_\text{min}$; thus, $U_\beta = \emptyset$, meaning no $t \in \mathbb{R}^+$ exists such that $\mathbf{b}$-term becomes the dominant term. However, a small $t$, such as 5000, allows $\mathbf{c}$-term to become the dominant term and successfully escape the local minimum $\hat{X}$.
	
	\item When $l=5$, $\rho=0.1 > \rho_\text{min}$. In this case, a small $t \in U_\beta$ allows $\mathbf{b}$-term to become the dominant term and successfully escape. For larger $t$, the iteration falls into $U_\gamma$, where the escape effect is more pronounced, with the ratio exceeding 1.
	
	\item When $l=7$, $\rho=0.1 > \rho_\text{min}$. Here, the range of $U_\beta$ expands significantly, and within $U_\beta$, larger values of $t$ yield more pronounced escape effects. Although $U_\gamma$ exists, it only appears for sufficiently large $t$.
\end{itemize}
These results align perfectly with Figure \ref{figure:ablation}. Interestingly, we observe that if $\mathbf{b}$-term becomes the dominant term and an appropriate $t$ is selected for successful escape, the final $X_\text{final}$ converges to $-Z$. In contrast, if $\mathbf{c}$-term becomes the dominant term and an appropriate $t$ is selected for successful escape, the final $X_\text{final}$ converges to $+Z$.

\subsection{Experimental Setup of Numerical Case Study 1}
\label{append:case-1}
The general Perturbed Matrix Completion (PMC) problem aims to recover a ground-truth low-rank matrix $M^\star \in \mathbb{R}^{n \times n}$ with $\operatorname{rank}(M^\star) = r$ from noisy observations generated by a perturbed linear operator $\mathcal{A}_\epsilon$. This operator is defined as
\begin{equation}
	\mathcal{A}_\epsilon(M_{ij}) =
	\begin{cases}
		M_{ij},\quad & \text{if } (i,j) \in \Omega, \\
		\epsilon M_{ij},\quad & \text{otherwise,}
	\end{cases}
	\label{equation:pmc_data}
\end{equation}
where $\Omega$ is the measurement set defined by
\begin{equation*}
	\{(i,i),\,(i,2k),\,(2k,i) \mid \forall\, i \in [n],\, k \in [\lfloor n/2 \rfloor]\},
\end{equation*}
and $\epsilon > 0$ is a small perturbation parameter. 
The RIP constant of the PMC problem is approximately $\delta_p \approx (1-\epsilon)/(1+\epsilon)$ \citep{yalccin2022factorization}. When $\epsilon$ is small, many spurious local minima tend to appear near the ground-truth factor $z$. Note that: (i) compared with the basic example in Equation~\eqref{equation:basic_example}, the PMC landscape is more intricate and requires a higher-order tensor lift to escape spurious regions; and (ii) in such complex regimes, the $\mathbf{a}$-term and $\mathbf{b}$-term alone are insufficient to constitute a matrix-space escape oracle.

\subsection{Experimental Setup of Numerical Case Study 2}
\label{append:case-2}
The sensing matrices in this case are derived from a real-world scenario~\citep{xu2024implicit}. The matrices are given as follows:
\begin{equation}
    {\setlength{\arraycolsep}{0.8pt}\renewcommand{\arraystretch}{0.82}
    \begin{array}{>{\scriptstyle}c >{\scriptstyle}c}
        \begin{gathered} A_1={}\\[-2pt]
        \left[\begin{smallmatrix}
    		0.0783 & 0.2372 & -0.0439 \\
    		0.2372 & -0.0397 & 0.1456 \\
    		-0.0439 & 0.1456 & -0.4724
    		\end{smallmatrix}\right]\end{gathered}
        &
        \begin{gathered} A_2={}\\[-2pt]
        \left[\begin{smallmatrix}
    		0.0389 & 0.0536 & -0.0059 \\
    		0.0536 & 0.4614 & -0.3907 \\
    		-0.0059 & -0.3907 & 0.2760
    		\end{smallmatrix}\right]\end{gathered}
        \\[3pt]
        \begin{gathered} A_3={}\\[-2pt]
        \left[\begin{smallmatrix}
    		0.0293 & -0.1456 & -0.1656 \\
    		-0.1456 & 0.3257 & 0.2528 \\
    		-0.1656 & 0.2528 & -0.0078
    		\end{smallmatrix}\right]\end{gathered}
        &
        \begin{gathered} A_4={}\\[-2pt]
        \left[\begin{smallmatrix}
    		0.0762 & 0.3193 & -0.3338 \\
    		0.3193 & -0.3364 & 0.5847 \\
    		-0.3338 & 0.5847 & 0.1873
    		\end{smallmatrix}\right]\end{gathered}
        \\[3pt]
        \begin{gathered} A_5={}\\[-2pt]
        \left[\begin{smallmatrix}
    		-0.0889 & 0.7089 & 0.4472 \\
    		0.7089 & 0.3788 & 0.0902 \\
    		0.4472 & 0.0902 & -0.3193
    		\end{smallmatrix}\right]\end{gathered}
        &
        \begin{gathered} A_6={}\\[-2pt]
        \left[\begin{smallmatrix}
    		0.4097 & 0.1190 & 0.2078 \\
    		0.1190 & 0.2282 & -0.2274 \\
    		0.2078 & -0.2274 & 0.4046
    		\end{smallmatrix}\right]\end{gathered}
    \end{array}
    }
\label{equation:sensing-matrices-real-world}
\end{equation}

Moreover, this problem exhibits a pair of prominent local minima:
\begin{equation*}
	\hat{x} = \pm 
	\begin{bmatrix}
		0.2234 & 0.0918 & 0.5985
	\end{bmatrix}^\top.
\end{equation*}


\end{document}